\newif\ifprint
\newif\iflaysummary
\numberwithin{equation}{chapter}
\numberwithin{figure}{chapter}
\numberwithin{table}{chapter}
\theoremstyle{definition}
\newtheorem{theorem}{Theorem}[chapter]
\newcommand{\R}{\mathbb{R}}
\newcommand{\vect}[1]{\mathbf{#1}}
\newcommand{\mat}[1]{\mathbf{#1}}
\newcommand{\br}[1]{\mathopen{}\left(#1\right)\mathclose{}}
\newcommand{\set}[1]{\left\{#1\right\}}
\newcommand{\pair}[2]{\br{#1,#2}}
\newcommand{\abs}[1]{\left|#1\right|}
\newcommand{\norm}[1]{\left\|#1\right\|}
\newcommand{\bigo}[1]{\mathcal{O}\br{#1}}
\newcommand{\Prob}[1]{\mathrm{Pr}\br{#1}}
\newcommand{\prob}[1]{p\br{#1}}
\newcommand{\g}{\,|\,}
\newcommand{\kl}[2]{D_{\mathrm{KL}}\br{#1\,\|\,#2}}
\newcommand{\avg}[1]{{\mathop{\mathbb{E}}}\br{#1}}
\newcommand{\avgx}[2]{{\mathop{\mathbb{E}}}_{#2}\br{#1}}
\newcommand{\var}[1]{{\mathop{\mathbb{V}}}\br{#1}}
\newcommand{\varx}[2]{{\mathop{\mathbb{V}}}_{#2}\br{#1}}
\newcommand{\integralx}[3]{\int_{#3}#1\operatorname{d}\!#2}
\newcommand{\integral}[2]{\integralx{#1}{#2}{\,\,}}
\newcommand{\dirac}[1]{\delta\br{#1}}
\newcommand{\deriv}[2]{\frac{\partial{#1}}{\partial{#2}}}
\newcommand{\tderiv}[2]{\frac{d{#1}}{d{#2}}}
\newcommand{\gaussian}[2]{\mathcal{N}\br{#1,#2}}
\newcommand{\gaussianx}[3]{\mathcal{N}\br{#1\g #2,#3}}
\newcommand{\uniform}[2]{\mathcal{U}\br{#1,#2}}
\newcommand{\algref}[1]{\hyperlink{#1_anchor}{\ref*{#1}}}
\newcommand{\alglabel}[1]{\hypertarget{#1_anchor}{}}
\DeclareMathOperator{\trace}{tr}
\DeclareMathOperator*{\argmax}{arg\,max}
\DeclareMathOperator*{\argmin}{arg\,min}
\newcommand{\mytitle}{Neural Density Estimation and Likelihood-free Inference}
\newcommand{\myname}{George Papamakarios}
\newcommand{\thedate}{April 2019}
\begin{document}

\newenvironment{laysummary}
   {\renewcommand{\abstractname}{Lay summary}\begin{abstract}}
   {\end{abstract}\renewcommand{\abstractname}{Abstract}}

\newenvironment{acknowledgements}
   {\renewcommand{\abstractname}{Acknowledgements}\begin{abstract}}
   {\end{abstract}\renewcommand{\abstractname}{Abstract}}

\newenvironment{declaration}
   {\renewcommand{\abstractname}{Declaration}\begin{abstract}}
   {\end{abstract}\renewcommand{\abstractname}{Abstract}}

\newcommand{\HRule}{\rule{\linewidth}{0.5mm}}

\begin{titlepage}

\centering
\textsc{\LARGE University of Edinburgh}\\[1.5cm] 
\textsc{\Large School of Informatics}\\[0.5cm] 
\textsc{\large Doctor of Philosophy}\\[3.5cm]

\HRule\\[0.6cm]
{\huge \bfseries \mytitle}\\[0.4cm]
\HRule\\[1.5cm]
 
{\Large by\\[0.4cm]\textsc{\myname}}\\[8.5cm] 

{\large \thedate}

\vfill 

\end{titlepage}

\ifprint
\cleardoublepage
\fi
\newcounter{savepage}
\setcounter{savepage}{\thepage}

\begin{abstract}
I consider two problems in machine learning and statistics: the problem of estimating the joint probability density of a collection of random variables, known as \emph{density estimation}, and the problem of inferring model parameters when their likelihood is intractable, known as \emph{likelihood-free inference}. The contribution of the thesis is a set of new methods for addressing these problems that are based on recent advances in neural networks and deep learning.

The first part of the thesis is about density estimation. The joint probability density of a collection of random variables is a useful mathematical description of their statistical properties, but can be hard to estimate from data, especially when the number of random variables is large. Traditional density-estimation methods such as histograms or kernel density estimators are effective for a small number of random variables, but scale badly as the number increases. In contrast, models for density estimation based on neural networks scale better with the number of random variables, and can incorporate domain knowledge in their design. My main contribution is \emph{Masked Autoregressive Flow}, a new model for density estimation based on a bijective neural network that transforms random noise to data. At the time of its introduction, Masked Autoregressive Flow achieved state-of-the-art results in general-purpose density estimation. Since its publication, Masked Autoregressive Flow has contributed to the broader understanding of neural density estimation, and has influenced subsequent developments in the field.

The second part of the thesis is about likelihood-free inference. Typically, a statistical model can be specified either as a likelihood function that describes the statistical relationship between model parameters and data, or as a simulator that can be run forward to generate data. Specifying a statistical model as a simulator can offer greater modelling flexibility and can produce more interpretable models, but can also make inference of model parameters harder, as the likelihood of the parameters may no longer be tractable. Traditional techniques for likelihood-free inference such as approximate Bayesian computation rely on simulating data from the model, but often require a large number of simulations to produce accurate results. In this thesis, I cast the problem of likelihood-free inference as a density-estimation problem, and address it with neural density models. My main contribution is the introduction of two new methods for likelihood-free inference: \emph{Sequential Neural Posterior Estimation (Type A)}, which estimates the posterior, and \emph{Sequential Neural Likelihood}, which estimates the likelihood. Both methods use a neural density model to estimate the posterior/likelihood, and a sequential training procedure to guide simulations. My experiments show that the proposed methods produce accurate results, and are often orders of magnitude faster than alternative methods based on approximate Bayesian computation.
\end{abstract}

\iflaysummary

\setcounter{page}{\thesavepage}
\stepcounter{page}
\ifprint
\cleardoublepage
\fi
\setcounter{savepage}{\thepage}

\begin{laysummary}
A big part of science is about understanding how various quantities of interest relate to each other. For example, a weather scientist may want to know how the temperature today relates to the temperature tomorrow. In practice, we often want to answer questions such as ``if the temperature today is high, how likely is it that the temperature tomorrow will be high too?''. In order to answer such questions successfully, it's useful to express relationships between quantities of interest (also known as variables) using the mathematical language of probability. This thesis presents a set of new mathematical techniques for calculating probabilistic relationships between variables more effectively.

One way to calculate probabilistic relationships between variables is to first collect a lot of data on them, and then analyze the data to measure the relationships between the variables directly. For example, we may measure the temperature over a number of days, and then analyze these measurements to calculate how the various temperatures relate to each other. This is easy for a small number of days, but for a large number of days it would be hard to calculate the relationships between all possible combinations of temperatures. The first part of the thesis describes techniques for calculating probabilistic relationships from data even when the number of variables grows very large.

Another way to estimate probabilistic relationships is to use our knowledge of the physical world in order to create a computer simulation of how some variables affect other variables. For example, a weather expert might know the explicit mechanism of how the temperature today will affect the temperature tomorrow. However, in such situations it may still be hard to know how variables relate in the opposite direction. For example, if we know that the temperature today is high, how can we know how likely it is that the temperature yesterday was high too, if we can only know how temperature changes forward in time? The second part of the thesis describes techniques for calculating relationships in the backward direction when we have a computer simulation that tells us relationships only in the forward direction.
\end{laysummary}

\else
\fi

\setcounter{page}{\thesavepage}
\stepcounter{page}
\ifprint
\cleardoublepage
\fi
\setcounter{savepage}{\thepage}

\begin{acknowledgements}
I'm grateful to Iain Murray for supervising me during my PhD\@. Iain has had a massive impact on my research, my thinking, and my professional development; his contributions extend far beyond this thesis.
I'm also grateful to John Winn and Chris Williams, who served on my PhD committee and gave me useful feedback in our annual meetings.

I'm grateful to my co-authors Theo Pavlakou and David Sterratt for their help and contribution. Conor Durkan, Maria Gorinova and Amos Storkey generously read parts of this thesis and gave me useful feedback. I'm grateful to Michael Gutmann and Kyle Cranmer for our discussions, which shaped part of the thesis, and to Johann Brehmer for fixing an important error in my code.

I'd like to extend a special thank you to the developers of Theano \citep{Al-Rfou:2016:theano}, which I used in all my experiments. Their contribution to machine-learning research as a whole has been monumental, and they deserve significant credit for that.

I'm grateful to the Centre for Doctoral Training in Data Science, the Engineering and Physical Sciences Research Council, the University of Edinburgh and Microsoft Research, whose generous funding and support enabled me to pursue a PhD\@.
\end{acknowledgements}

\setcounter{page}{\thesavepage}
\stepcounter{page}
\ifprint
\cleardoublepage
\fi
\setcounter{savepage}{\thepage}

\begin{declaration}
I declare that this thesis was composed by me,
that the work contained herein is my own 
except where explicitly stated otherwise in the text,
and that this work has not been submitted for any other degree or
professional qualification except as specified.\par
\vspace{1in}\raggedleft({\em \myname})
\end{declaration}
   
\setcounter{page}{\thesavepage}
\stepcounter{page}
\ifprint
\cleardoublepage
\fi

\tableofcontents

\chapter{Introduction}
\label{chapter:intro}

\emph{Density estimation} and \emph{likelihood-free inference} are two fundamental problems of interest in machine learning and statistics; they lie at the core of probabilistic modelling and reasoning under uncertainty, and as such they play a significant role in scientific discovery and artificial intelligence. The goal of this thesis is to develop a new set of methods for addressing these two problems, based on recent advances in neural networks and deep learning.

The first part of the thesis is about \emph{density estimation}, the problem of estimating the joint probability density of a collection of random variables from samples. 
In a sense, density estimation is the reverse of sampling: in density estimation, we are given samples and we want to retrieve the density function from which the samples were generated; in sampling, we are given a density function and we want to generate samples from it.

Density estimation addresses one of the most fundamental problems in machine learning, the problem of discovering structure from data in an unsupervised manner. A density function is a complete description of the joint statistical properties of the data, and in that sense a model of the density function can be viewed as a model of data structure. As such, a model of the density function can be used in a variety of downstream tasks that involve knowledge of data structure, such as inference, prediction, data completion and data generation.

The second part of the thesis is about \emph{statistical inference}, the problem of inferring parameters of interest from observations given a model of their statistical relationship. In this work, I adopt a Bayesian framework for statistical inference: beliefs over unknown quantities are represented by density functions, and Bayes' rule is used to update prior beliefs given new observations. Bayesian inference is one of the two main approaches to statistical inference (the other one being frequentist inference), and is widely used in science and engineering. From now on, if not made specific, the term `inference' will refer to `Bayesian inference'.

\emph{Likelihood-free inference} refers to the situation where the likelihood of the model is too expensive to evaluate, which is typically the case when the model is specified as a simulator that stochastically generates observations given parameters. Such simulators are used ubiquitously in science and engineering for modelling complex mechanistic processes of the real world; as a result, several scientific and engineering problems can be framed as likelihood-free inference of a simulator's parameters. The goal of likelihood-free inference is to compute posterior beliefs over parameters using simulations from the model rather than likelihood evaluations.

In this thesis, I approach both density estimation and likelihood-free inference as machine-learning problems; in either case, the task is to estimate a density model from data. The methods developed in this thesis are heavily based on neural networks and deep learning. The use of deep learning is motivated by two reasons. First, neural networks have demonstrated state-of-the-art performance in a wide variety of machine-learning tasks, such as computer vision \citep{Krizhevsky:2012:alexnet}, natural-language processing \citep{Devlin:2018:bert}, generative modelling \citep{Radford:2016:dcgan}, and reinforcement learning \citep{Silver:2016:alphago} --- as we will see in this thesis, neural networks advance the state of the art in density estimation and likelihood-free inference too. Second, deep learning is actively supported by software frameworks such as \emph{Theano} \citep{Al-Rfou:2016:theano}, \emph{TensorFlow} \citep{Tensorflow:2015:whitepaper} and \emph{PyTorch} \citep{Paszke:2017:pytorch}, which provide access to powerful hardware (such as graphics-processing units) and facilitate designing and training neural networks in practice. This thesis focuses on the application of neural networks to density estimation and likelihood-free inference, and not on deep learning itself; knowledge of deep learning is generally assumed, but not absolutely required. A comprehensive review of neural networks and deep learning is provided
by \citet{Goodfellow:2016:deeplearningbook}.

\section{List of contributions}

The main contribution of this thesis is a set of new methods for density estimation and likelihood-free inference, which are based on techniques from neural networks and deep learning. In particular, the new methods contributed by this thesis are the following:
\begin{enumerate}[label=(\roman*)]
\item \textbf{Masked Autoregressive Flow}, an expressive neural density model whose density is tractable to evaluate. MAF can be trained on data to estimate their underlying density function. At the time of its introduction, MAF achieved state-of-the-art performance in density estimation, and has been influential ever since.
\item \textbf{Sequential Neural Posterior Estimation (Type A)}, a method for likelihood-free inference of simulator models that is based on neural density estimation. SNPE-A trains a neural density model on simulated data to approximate the posterior density of the parameters given observations. The designation `Type A' is in order to distinguish \mbox{SNPE-A} from its `Type-B' variant, which was proposed later. SNPE-A was shown to both improve accuracy and dramatically reduce the required number of simulations compared to traditional methods for likelihood-free inference.
\item \textbf{Sequential Neural Likelihood}, an alternative method for likelihood-free inference that uses a neural density model to approximate the likelihood instead of the posterior. SNL can be as fast and accurate as SNPE-A, but is more generally applicable and significantly more robust. Unlike SNPE-A, SNL can be used with Masked Autoregressive Flow.
\end{enumerate}

\section{Structure of the thesis}

The thesis is divided into two parts: part \ref{part:nde} is about \emph{neural density estimation}, whereas part \ref{part:lfi} is about \emph{likelihood-free inference}. Each part has a separate introduction and a separate conclusion. The two parts are intended to be standalone, and can be read independently. However, the part on likelihood-free inference makes heavy use of density-estimation techniques, hence, although not absolutely necessary, I would recommend that the part on density estimation be read first.

The thesis consists of three chapters, each of which is centred on a different published paper. Each chapter includes the paper as published; in addition, it provides extra background in order to motivate the paper, and evaluates the contribution and impact of the paper since its publication. The three chapters are listed below:

\nobibliography*

Chapter \ref{chapter:maf} is about \textbf{Masked Autoregressive Flow}, and is based on the following paper:

\begin{quote}
\bibentry{Papamakarios:2017:maf}
\end{quote}

Chapter \ref{chapter:efree} is about \textbf{Sequential Neural Posterior Estimation (Type A)}, and is based on the following paper:

\begin{quote}
\bibentry{Papamakarios:2016:efree}
\end{quote}

Chapter \ref{chapter:snl} is about \textbf{Sequential Neural Likelihood}, and is based on the following paper:

\begin{quote}
\bibentry{Papamakarios:2019:snl}
\end{quote}

\part{Neural density estimation}
\label{part:nde}
\chapter{Masked Autoregressive Flow for Density Estimation}
\label{chapter:maf}

This chapter is devoted to \emph{density estimation}, and how we can use neural networks to estimate densities. We start by explaining what density estimation is, why it is useful, and what challenges it presents (section \ref{sec:maf:intro}). We then review some standard methods for density estimation, such as \emph{mixture models}, \emph{histograms} and \emph{kernel density estimators}, and introduce the idea of \emph{neural density estimation} (section \ref{sec:maf:methods}). The main contribution of this chapter is the paper \emph{Masked Autoregressive Flow of Density Estimation}, which presents \emph{Masked Autoregressive Flow}, a new model for density estimation (sections \ref{sec:maf:paper} and \ref{sec:maf:contrib}). We conclude the chapter by reviewing advances in neural density estimation since the publication of the paper (section \ref{sec:maf:flows}), and by discussing how neural density estimators fit more broadly in the space of generative models (section \ref{sec:maf:gen}).

\section{Density estimation: what and why?}
\label{sec:maf:intro}

Suppose we have a stationary process that generates data. The process could be of any kind, as long as it doesn't change over time: it could be a black-box simulator, a computer program, an agent acting, the physical world, or a thought experiment. Suppose that each time the process is run forward it independently generates a $D$-dimensional real vector $\vect{x}'$. We can think of the \emph{probability density} at $\vect{x}\in\R^D$ as a measure of how often the process generates data near $\vect{x}$ per unit volume. In particular, let $B_{\epsilon}\br{\vect{x}}$ be a ball centred at $\vect{x}$ with radius $\epsilon>0$, and let $\abs{B_{\epsilon}\br{\vect{x}}}$ be its volume. Informally speaking, the probability density at $\vect{x}$ is given by:
\begin{equation}
\prob{\vect{x}} = \frac{\Prob{\vect{x}'\in B_{\epsilon}\br{\vect{x}}}}{\abs{B_{\epsilon}\br{\vect{x}}}}
\quad\text{for }\epsilon\rightarrow 0,
\label{eq:maf:density_def}
\end{equation}
where $\Prob{\vect{x}'\in B_{\epsilon}\br{\vect{x}}}$ is the probability that the process generates data in the ball. For brevity, I will often just say \emph{density} and mean probability density.

A function $\prob{\cdot}$ that takes an arbitrary vector $\vect{x}$ and outputs the density at $\vect{x}$ is called a \emph{probability-density function}. To formally define the density function, suppose that $\Prob{\vect{x}'\in \mathcal{X}}$ is a probability measure defined on all Lebesgue-measurable subsets of $\R^D$. We require $\Prob{\vect{x}'\in \mathcal{X}}$ to be absolutely continuous with respect to the Lebesgue measure, which  means $\Prob{\vect{x}'\in \mathcal{X}}$ is zero if $\mathcal{X}$ has zero volume. A real-valued function $\prob{\cdot}$ is called a probability-density function if it has the following two properties:
\begin{align}
\prob{\vect{x}} &\ge 0\quad\text{for all }\vect{x}\in{\R^D},\\
\integralx{\prob{\vect{x}}}{\vect{x}}{\mathcal{X}} &= \Prob{\vect{x}'\in\mathcal{X}}\quad\text{for all Lebesgue-measurable }\mathcal{X}\subseteq\R^D.
\label{eq:maf:density_property_2}
\end{align}
The second property implies that a density function must integrate to $1$, that is:
\begin{equation}
\integralx{\prob{\vect{x}}}{\vect{x}}{\R^D} = 1.
\end{equation}
The \emph{Radon--Nikodym theorem} \citep[theorem 32.2]{Billingsley:1995:measure} guarantees that a density function exists and is unique almost everywhere, in the sense that any two density functions can only differ in a set of zero volume.

The density function is not defined if $\Prob{\vect{x}'\in \mathcal{X}}$ is not absolutely continuous. For example, this is the case if $\Prob{\vect{x}'\in \mathcal{X}}$ is discrete, i.e.~concentrated on a countable subset of $\R^D$. From now on we will assume that the density function always exists, but many of the techniques discussed in this thesis can be adapted for e.g.~discrete probability measures.

In practice, we often don't have access to the density function of the process we are interested in. Rather, we have a set of datapoints generated by the process (or the ability to generate such a dataset) and we would like to estimate the density function from the dataset. Hence, the problem of \emph{density estimation} can be stated as follows:

\begin{quote}
\emph{Given a set $\set{\vect{x}_1, \ldots, \vect{x}_N}$ of independently and identically generated datapoints, how can we estimate the probability density at an arbitrary location $\vect{x}$?}
\end{quote}

\subsection{Why estimate densities?}
\label{sec:maf:intro:why_densities}

Before I address the problem of \emph{how} to estimate densities, I will discuss the issue of \emph{why}. The question I will attempt to answer is the following:

\begin{quote}
\emph{Why is the density function useful, and why should we expend resources trying to estimate it?}
\end{quote}

To begin with, a model of the density function is a complete statistical model of the generative process. With the density function, we can (up to computational limitations) do the following:
\begin{enumerate}[label=(\roman*)]
\item Calculate the probability of any Lebesgue-measurable subset of $\R^D$ under the process, by integration using property \eqref{eq:maf:density_property_2}.
\item Sample new data using general-purpose sampling algorithms, such as Markov-chain Monte Carlo \citep{Neal:1993:mcmc, Murray:2007:mcmc}.
\item Calculate expectations under the process using integration:
\begin{equation}\avg{f\br{\vect{x}}} = \integralx{f\br{\vect{x}}p\br{\vect{x}}}{\vect{x}}{\R^D}.
\end{equation}
\item Test the density model against data generated from the actual process, using one-sample goodness-of-fit testing \citep{Chwialkowski:2016:gof, Liu:2016:gof, Jitkrittum:2017:gof}.
\end{enumerate}

Nevertheless, although the above are useful applications of the density function, I would argue that they don't necessitate modelling the density function. Indeed, one could estimate a \emph{sampling model} from the data, i.e.~a model that generates data in way that is statistically similar to the process. A sampling model can be used (again up to computational limitations) instead of a density model in all the above applications as follows:
\begin{enumerate}[label=(\roman*)]
\item The probability of a Lebesgue-measurable subset $\mathcal{X}\subset\R^D$ can be estimated by the fraction of samples falling in $\mathcal{X}$.
\item Sampling new data is trivial (and in practice usually much more efficient) with the sampling model.
\item Calculating expectations can be estimated by Monte-Carlo integration.
\item Testing the model against the actual process can be done with a two-sample test \citep{Gretton:2012:mmd}.
\end{enumerate}

So why invest in density models if there are other ways to model the desirable statistical properties of a generative process? In the following, I discuss some applications for which knowing the value of the density is important.

\subsubsection{Bayesian inference}

In the context of Bayesian inference, density functions encode degrees of belief. Bayes' rule describes how beliefs should change in light of new evidence as a operation over \emph{densities}. In particular, if we express our beliefs about a quantity $\bm{\theta}$ using a density model $\prob{\bm{\theta}}$ and the statistical relationship between $\vect{x}$ and $\bm{\theta}$ using a conditional density model $\prob{\vect{x}\g\bm{\theta}}$, we can calculate how our beliefs about $\bm{\theta}$ should change in light of observing $\vect{x}$ by:
\begin{equation}
\prob{\bm{\theta}\g\vect{x}} \propto \prob{\vect{x}\g\bm{\theta}}\, \prob{\bm{\theta}}.
\end{equation}

Being able to estimate densities such as the prior $\prob{\bm{\theta}}$, the likelihood $\prob{\vect{x}\g\bm{\theta}}$ and the posterior $\prob{\bm{\theta}\g\vect{x}}$ from data can be valuable for Bayesian inference. For example, density estimation on large numbers of unlabelled data can be used for constructing effective priors \citep{Zoran:2011:patches}. In cases where the likelihood is unavailable, density estimation on joint examples of $\vect{x}$ and $\bm{\theta}$ can be used to model the posterior or the likelihood \citep{Papamakarios:2016:efree, Lueckmann:2017:snpe, Papamakarios:2019:snl, Lueckmann:2018:maxvar}. A big part of this thesis is dedicated to using density estimation for likelihood-free inference, and this is a topic that I will examine thoroughly in the following chapters.

\subsubsection{Data compression}

There is a close relationship between density modelling and data compression. Suppose we want to encode a message $\vect{x}$ up to a level of precision defined by a small neighbourhood $B\br{\vect{x}}$ around $\vect{x}$. Assuming the message was generated from a distribution with density $\prob{\vect{x}}$, the information content associated with $\vect{x}$ at this level of precision is:
\begin{equation}
I\br{\vect{x}} = -\log\integralx{\prob{\vect{x'}}}{\vect{x'}}{B\br{\vect{x}}} \approx -\log\prob{\vect{x}} - \log \abs{B\br{\vect{x}}}.
\end{equation}
The above means that the density at $\vect{x}$ tells us how many bits an optimal compressor should use to encode $\vect{x}$ at a given level of precision (assuming base-$2$ logarithms). Conversely, a data compressor implicitly defines a density model. Given a perfect model of the density function, data compressors such as arithmetic coding \citep[section 6.2]{MacKay:2002:itila} can achieve almost perfect compression. On the other hand, if the data compressor uses (explicitly or implicitly) a density model $q\br{\vect{x}}$ that is not the same as the true model $\prob{\vect{x}}$, the average number of wasted bits is:
\begin{equation}
\avgx{-\log q\br{\vect{x}} - \log \abs{B\br{\vect{x}}}}{\prob{\vect{x}}} - \avgx{-\log\prob{\vect{x}}- \log \abs{B\br{\vect{x}}}}{\prob{\vect{x}}} = \kl{\prob{\vect{x}}}{q\br{\vect{x}}} > 0.
\end{equation}
Hence, a more accurate density model implies a more efficient data compressor, and vice versa.

\subsubsection{Model training and evaluation}

Even if we are not interested in estimating the density function per se, the density of the training data under the model is useful as an objective for training the model. For example, suppose we wish to fit a density model $q\br{\vect{x}}$ to training data $\set{\vect{x}_1, \ldots, \vect{x}_N}$ using maximum-likelihood estimation. The objective to be maximized at training time is:
\begin{equation}
L\br{q} = \frac{1}{N}\sum_n \log q\br{\vect{x}_n}.
\end{equation}
After training, being able to calculate $L\br{q}$ on a held-out test set is useful for ranking and comparing models. Hence, the usefulness of the density function as a training and evaluation objective motivates endowing our models with density-estimation capabilities even if we don't intend to use the model as a density estimator per se.

It is possible to formulate training and evaluation objectives that don't explicitly require densities, for example based on likelihood ratios \citep{Goodfellow:2014:gan}, kernel-space discrepancies \citep{Dziugaite:2015:mmdgan}, or Wasserstein distances \citep{Arjovsky:2017:wgan}. One argument in favour of the maximum-likelihood objective is its good asymptotic properties: a maximum-likelihood density estimator is \emph{consistent}, i.e.~it converges in probability to the density being estimated (assuming it's sufficiently flexible to represent it), and \emph{efficient}, i.e.~among all consistent estimators it attains the lowest mean squared error \citep[section 9.4]{Wasserman:2010:stats}. Moreover, in the limit of infinite data, maximizing $\frac{1}{N}\sum_n \log q\br{\vect{x}_n}$ is equivalent to minimizing $\kl{\prob{\vect{x}}}{q\br{\vect{x}}}$. In addition to its interpretation as a measure of efficiency of a data compressor which I already discussed, the KL divergence is the only divergence between density functions that possesses a certain set of properties, namely \emph{locality}, \emph{coordinate invariance} and \emph{subsystem independence}, as defined by \citet{Caticha:2004:entropy}. \citet{Rezende:2018:divergence} has argued that these properties of the KL divergence justify its use as a training and evaluation objective, and explain its popularity in machine learning.

\subsubsection{Density models as components of other algorithms}

Density models are often found as components of other algorithms in machine learning and statistics. For instance, sampling algorithms such as importance sampling and sequential Monte Carlo rely on an auxiliary model, known as the \emph{proposal}, which is required to provide the density of its samples \citep{Papamakarios:2015:distilling, Gu:2015:neural_smc, Paige:2016:inference_nets, Muller:2018:nis}. Variational autoencoders (which will be discussed in more detail in section \ref{sec:maf:gen:vaes}) require two density models as subcomponents, known as the \emph{encoder} and the \emph{prior} \citep{Rezende:2014:vae, Kingma:2014:vae}. Finally, in variational inference of continuous parameters, the approximate posterior is required to be a density model over the parameters of interest \citep{Ranganath:2014:bbvi, Kucukelbir:2015:autovistan}.

\subsection{Density estimation in high dimensions}
\label{sec:maf:intro:high_dim}

Estimating densities in high dimensions is a hard problem. Naive methods that work well in low dimensions often break down as the dimensionality increases (as I will discuss in more detail in section \ref{sec:maf:methods}). The observation that density estimation, as well as other machine-learning tasks, becomes dramatically harder as the dimensionality increases is often referred to as the \emph{curse of dimensionality} 
(\citeauthor{Hastie:2001:elements}, \citeyear{Hastie:2001:elements}, section 2.5; \citeauthor{Bishop:2006:prml}, \citeyear{Bishop:2006:prml}, section 1.4).

I will illustrate the curse of dimensionality for density estimation with a simple example. Consider a process that generates data uniformly in the $D$-dimensional unit cube $\left[0, 1\right]^D$. Clearly, the density $\prob{\vect{x}}$ is equal to $1$ for all $\vect{x}$ inside the cube. Suppose that we try to estimate $\prob{\vect{x}}$ for some $\vect{x}$ inside the cube using the fraction of training datapoints that fall into a small ball around $\vect{x}$. The expected fraction of training datapoints that fall into a ball $B_{\epsilon}\br{\vect{x}}$ centred at $\vect{x}$ with radius $\epsilon$ is no greater than the volume of the ball, which is given by:
\begin{equation}
\abs{B_{\epsilon}\br{\vect{x}}}
= \frac{\br{\pi^{\nicefrac{1}{2}}\epsilon}^D}{\Gamma\br{\frac{D}{2} + 1}},
\end{equation}
where $\Gamma\br{\cdot}$ is the Gamma function. In practice, we would need to make the ball large enough to contain at least one datapoint, otherwise the estimated density will be zero. However, no matter how large we make the radius $\epsilon$, the volume of the ball approaches zero as $D$ grows larger. In other words, in a dimension high enough, almost all balls will be empty, even if we make their radius larger than the side of the cube!

\begin{figure}[t]
\centering
\subfloat[Volume of a $D$-dimensional ball of radius $\epsilon$.\label{fig:maf:balls:a}]{\includegraphics[width=0.48\textwidth]{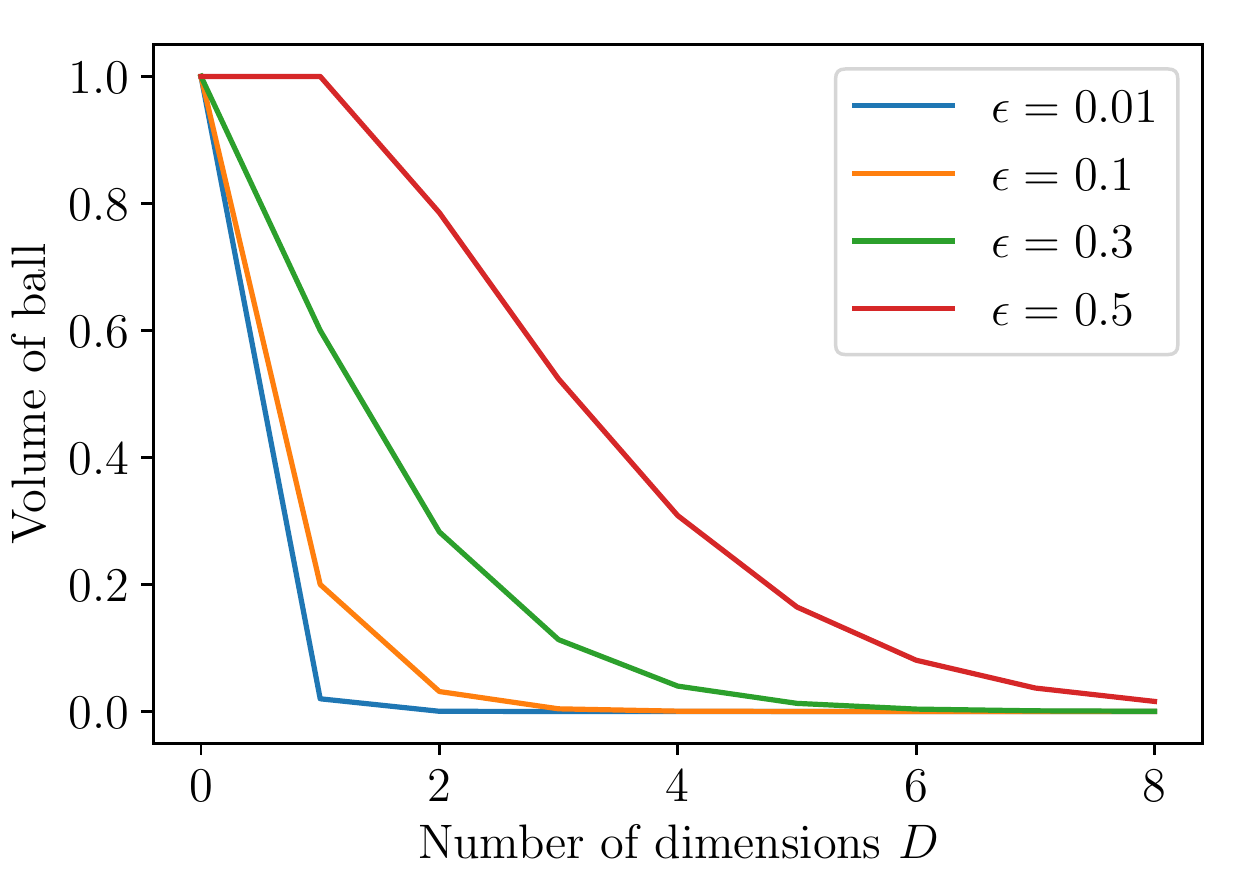}}
\hfill
\subfloat[Expected number of datapoints we need to generate until one falls in the ball.\label{fig:maf:balls:b}]
{\includegraphics[width=0.48\textwidth]{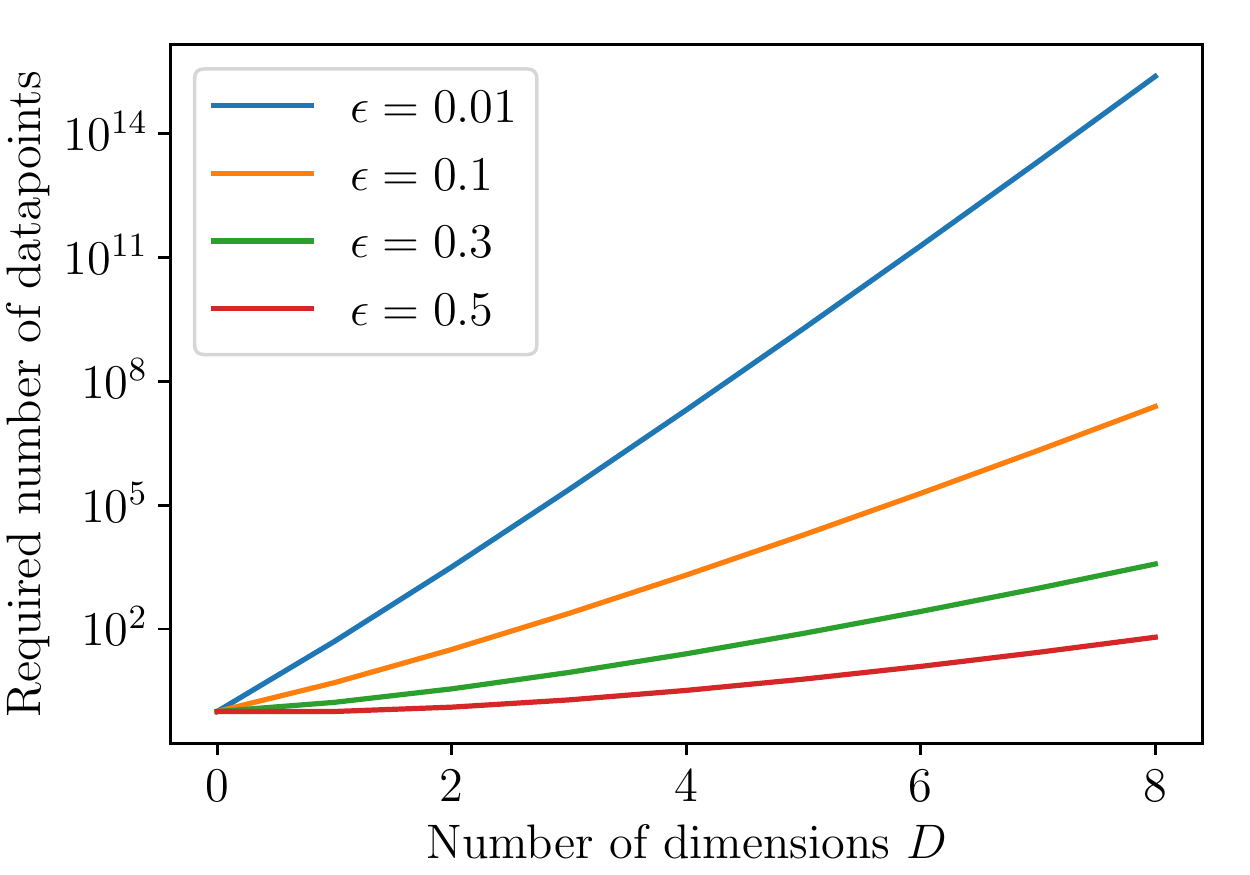}}
\label{fig:maf:balls}
\caption{Illustration of the curse of dimensionality for density estimation.}
\end{figure}

Figure \ref{fig:maf:balls:a} illustrates the shrinkage in volume of a $D$-dimensional ball as $D$ increases. Figure \ref{fig:maf:balls:b} shows the expected number of datapoints we would need to generate from the process until one of them falls into the ball, which is no less than the inverse of the ball's volume. As we can see, for a ball of radius $\epsilon=0.01$ and in $D=8$ dimensions, we would need a dataset of size at least a \emph{thousand trillion} datapoints!

In practice, in order to scale density estimation to high dimensions without requiring astronomical amounts of data, we make assumptions about the densities we wish to estimate and encode these assumptions into our models. With careful model design, it has been possible to train good density models on high-dimensional data such as high-resolution images \citep{Menick:2019:pixel_nets, Kingma:2018:glow, Dinh:2017:rnvp, Salimans:2017:pixelcnnpp} and audio streams \citep{VanDenOord:2016:WaveNet, Kim:2018:FloWaveNet, Prenger:2018:WaveGlow}. Fortunately, we can often make assumptions that are generic enough to apply to broad domains, while still making density estimation practical. In the following, I will examine a few generic assumptions that often guide model design.

\subsubsection{Smoothness}

We often assume that the density function varies smoothly, that is, if $\norm{\vect{x}_A -\vect{x}_B}$ is small, then $\abs{\prob{\vect{x}_A} -\prob{\vect{x}_B}}$ is also small. The assumption of smoothness encourages the model to interpolate over small regions that happen to have no training data due to sampling noise, rather than assign zero density to them. In practice, we enforce smoothness either by limiting the flexibility of the density model, or by regularizing it.

\subsubsection{Low intrinsic dimensionality}

We often assume that the world has fewer degrees of freedom than the measurements we make to describe it. For example, an image of a natural scene could only vary in certain semantically meaningful ways (e.g.~location of objects in the scene, direction of lighting, etc.), whereas each pixel of the image can't vary arbitrarily. This means that such natural images would approximately lie on a manifold of low intrinsic dimensionality embedded in the $D$-dimensional space of pixel intensities --- see also discussions by \citet{Basri:2003:lambertian} and \citet{Hinton:1997:manifolds} on the low-dimensional manifold structure of simple images. In practice, low-dimensional manifold structure can be modelled by limiting the degrees of freedom that the model can represent (e.g.~by introducing information bottlenecks in the model's structure).

\subsubsection{Symmetries and invariances}

Real data often have symmetries, some examples of which are listed below.
\begin{enumerate}[label=(\roman*)]
\item \emph{Translation symmetry}. An image of a car moved a few pixels to the right is still an image of the same car.
\item \emph{Scale symmetry}. An audio stream of music played twice as fast may still be a plausible piece of music.
\item \emph{Mirror symmetry}. An image flipped horizontally may still be a plausible image.
\item \emph{Order symmetry}. A dataset whose datapoints have been reordered is still the same dataset.
\end{enumerate}
Such symmetries in the data are often reflected in model design. For example, top-performing image models employ convolutions and multi-scale architectures \citep{Menick:2019:pixel_nets, Kingma:2018:glow, Dinh:2017:rnvp}, which equip the model with a degree of translation and scale symmetry. Additionally, symmetries in the data can be used for data augmentation: for instance, an image model can be trained with all training images flipped horizontally as additional training data \citep{Papamakarios:2017:maf, Dinh:2017:rnvp}.

\subsubsection{Independencies and loose dependencies}

Sometimes we know or suspect that certain measurements are either independent of or at least loosely dependent on other measurements. For example, in an audio stream, one could assume that the current audio intensity is only loosely dependent on audio intensities of more than a few seconds before it. This could be encoded in a model by limiting the range (or the receptive field) of each variable in the data, for example in the case of audio \citep{VanDenOord:2016:WaveNet} or pixel intensities \citep{VanDenOord:2016:PixelCNN}.

\section{Methods for density estimation}
\label{sec:maf:methods}

Methods for density estimation can broadly be classified as either \emph{parametric} or \emph{non-parametric}. Parametric methods model the density function as a specified functional form with a fixed number of tunable parameters. Non-parametric methods are those that don't fit the above description: typically, they specify a model whose complexity grows with the number of training datapoints. In this section I will discuss and compare some standard density-estimation methods from both categories, and I will introduce the idea of \emph{neural density estimation}.

\subsection{Simple parametric models and mixture models}
\label{sec:maf:methods:simple_parametric}

In parametric density estimation, we first specify a density model $q_{\bm{\phi}}\br{\vect{x}}$ with a fixed number of tunable parameters $\bm{\phi}$, and then we try to find a setting of $\bm{\phi}$ that makes $q_{\bm{\phi}}\br{\vect{x}}$ as similar as possible to the true density $\prob{\vect{x}}$. A straightforward approach is to choose $q_{\bm{\phi}}\br{\vect{x}}$ to be in a simple parametric family, for example the \emph{Gaussian family}:
\begin{equation}
q_{\bm{\phi}}\br{\vect{x}} = \frac{1}{\abs{\det\br{2\pi\bm{\Sigma}}}^{\nicefrac{1}{2}}}\exp\br{-\frac{1}{2}\br{\vect{x}-\bm{\mu}}^T\bm{\Sigma}^{-1}\br{\vect{x}-\bm{\mu}}}
\quad\text{where}\quad
\bm{\phi} = \set{\bm{\mu}, \bm{\Sigma}}.
\end{equation}
The parameters of the Gaussian family are a real $D$-dimensional vector $\bm{\mu}$ and a $D\times D$ symmetric positive-definite matrix $\bm{\Sigma}$. There are also special cases of the Gaussian family that further restrict the form of $\bm{\Sigma}$, such as the probabilistic versions of \emph{principal-components analysis} \citep{Tipping:1999:ppca}, \emph{minor-components analysis} \citep{Williams:2002:pmca}, and \emph{extreme-components analysis} \citep{Welling:2004:eca}.

The problem with simple parametric families such as the above is that the set of density functions they can represent is limited. For example, the Gaussian family can't represent density functions with more than one mode. One way to increase the expressivity of parametric models is to combine a number of models into one \emph{mixture model}. Let $q^{\br{1}}_{\bm{\phi}_1}\br{\vect{x}}, \ldots, q^{\br{K}}_{\bm{\phi}_K}\br{\vect{x}}$ be $K$ parametric models from the same or different families. A mixture model is a parametric model defined as:
\begin{equation}
q_{\bm{\phi}}\br{\vect{x}} = \sum_k \alpha_k \,q^{\br{k}}_{\bm{\phi}_k}\br{\vect{x}}
\quad\text{where}\quad
\sum_k \alpha_k = 1
\quad\text{and}\quad
\alpha_k \ge 0\,\,\text{ for all }k.
\end{equation}
The parameters of a mixture model are $\bm{\phi} = \set{\alpha_1, \bm{\phi}_1, \ldots, \alpha_K, \bm{\phi}_K}$.
Mixture models where all $q^{\br{k}}_{\bm{\phi}_k}\br{\vect{x}}$ are Gaussian are usually referred to as \emph{Gaussian mixture models}. Gaussian mixture models are a strong density-estimation baseline: with sufficiently many components they can approximate any density arbitrarily well \citep{McLachlan:1988:mixture_models}. However, they may require a large number of components to approximate density functions that can be expressed compactly in a different form (e.g.~a uniform density in the unit cube would require a large number of narrow Gaussians to approximate its steep boundary).

Parametric density models are typically estimated by maximum likelihood. Given a set of training datapoints $\set{\vect{x}_1, \ldots, \vect{x}_N}$ that have been independently and identically generated by a process with density $\prob{\vect{x}}$, we seek a setting of the model's parameters $\bm{\phi}$ that maximize the average log likelihood on the training data:
\begin{equation}
L\br{\bm{\phi}} = \frac{1}{N}\sum_n\log q_{\bm{\phi}}\br{\vect{x}_n}.
\end{equation}
From the strong law of large numbers, as $N\rightarrow\infty$ we have that $L\br{\bm{\phi}}$ converges almost surely to $\avgx{\log q_{\bm{\phi}}\br{\vect{x}}}{\prob{\vect{x}}}$. Hence, for a large enough training set, maximizing $L\br{\bm{\phi}}$ is equivalent to minimizing $\kl{\prob{\vect{x}}}{q_{\bm{\phi}}\br{\vect{x}}}$, since:
\begin{equation}
\kl{\prob{\vect{x}}}{q_{\bm{\phi}}\br{\vect{x}}} = -\avgx{\log q_{\bm{\phi}}\br{\vect{x}}}{\prob{\vect{x}}} + \mathrm{const}.
\end{equation}
Some of the merits of maximum-likelihood estimation and of KL-divergence minimization have already been discussed in section \ref{sec:maf:intro:why_densities}.

For certain simple models, the optimizer of $L\br{\bm{\phi}}$ has a closed-form solution. For example, the maximum-likelihood parameters of a Gaussian model are the empirical mean and covariance of the training data:
\begin{equation}
\bm{\mu}^* = \frac{1}{N}\sum_n{\vect{x}_n}
\quad\text{and}\quad
\bm{\Sigma}^* = \frac{1}{N}\sum_n{\br{\vect{x}_n - \bm{\mu}^*}\br{\vect{x}_n - \bm{\mu}^*}^T}.
\end{equation}
For mixture models based on simple parametric families, such as Gaussian mixture models, $L\br{\bm{\phi}}$ can be (locally) maximized using the \emph{expectation-maximization algorithm} \citep{Dempster:1977:em} or its online variant \citep{Cappe:2008:online_em}. More generally, if $L\br{\bm{\phi}}$ is differentiable with respect to $\bm{\phi}$, it can be (locally) maximized with gradient-based methods, as I will discuss in section \ref{sec:maf:methods:nde}.

\subsection{Histograms}

The histogram is one of the simplest and most widely used methods for density estimation. The idea of the histogram is to partition the data space into a set of non-overlapping bins $\set{B_1, \ldots, B_K}$, and estimate $\Prob{\vect{x}'\in B_k}$ by the fraction of training datapoints in $B_k$. Then, the density in $B_k$ is approximated by the estimate of $\Prob{\vect{x}'\in B_k}$ divided by the volume of $B_k$.

Histograms are often described as non-parametric models \citep[e.g.][section 2.5]{Bishop:2006:prml}. However, given the definition of a parametric model I gave earlier, I would argue that histograms are better described as parametric models that are trained with maximum likelihood. Given a partition of the data space into $K$ non-overlapping bins, a histogram is the following parametric model:
\begin{equation}
q_{\bm{\phi}}\br{\vect{x}} = \prod_k \pi_k^{I\br{\vect{x}\in B_k}}
\quad\text{where}\quad
\sum_k \pi_k\abs{B_k}=1
\quad\text{and}\quad
\pi_k \ge 0 \,\,\text{ for all }k.
\end{equation}
In the above, $I\br{\cdot}$ is the indicator function, which takes a logical statement and outputs $1$ if the statement is true and $0$ otherwise.
Each $\pi_k$ represents the density in bin $B_k$. The parameters of the histogram are $\bm{\phi} = \set{\pi_1, \ldots, \pi_K}$.

The average log likelihood of the histogram on training data $\set{\vect{x}_1, \ldots, \vect{x}_N}$ is:
\begin{equation}
L\br{\bm{\phi}} = \frac{1}{N}\sum_k N_k \log \pi_k,
\end{equation}
where $N_k = \sum_n I\br{\vect{x}_n\in B_k}$ is the number of training datapoints in $B_k$. Taking into account the equality constraint $\sum_k \pi_k\abs{B_k}=1$, we can write the Lagrangian of the maximization problem as:
\begin{equation}
\mathcal{L}\br{\bm{\phi}, \lambda} = L\br{\bm{\phi}} - \lambda\br{\sum_k \pi_k\abs{B_k}-1},
\end{equation}
where $\lambda$ is a Lagrange multiplier enforcing the equality constraint. Taking derivatives of the Lagrangian with respect to $\pi_k$ and $\lambda$ and jointly solving for zero, we find that the maximum-likelihood optimizer is:
\begin{equation}
\pi_k^* = \frac{N_k}{N\abs{B_k}},
\end{equation}
which also satisfies $\pi_k \ge 0$. As expected, the maximum-likelihood density in $B_k$ is the fraction of the training data in $B_k$ divided by the volume of $B_k$.

In practice, to construct the bins we typically grid up each axis between two extremes, and take the $D$-dimensional hyperrectangles formed this way to be the bins. How fine or coarse we grid up the space determines the volume of the bins and the granularity of the histogram. There is a bias-variance tradeoff controlled by bin volume: a histogram with too many bins of small volume may overfit, whereas a histogram with too few bins may underfit.

A drawback of histograms is that they suffer from the curse of dimensionality. To illustrate why, suppose we are trying to estimate a uniform density in the $D$-dimensional unit cube $\left[0, 1\right]^D$, and that we'd like a granularity of $K$ equally sized bins per axis. The total number of bins will be $K^D$, which is also the expected number of datapoints until one of them falls in a given bin. Hence, the amount of training data we'd need to populate the histogram scales exponentially with dimensionality. In practice, histograms are often used in low dimensions if there are enough datapoints (e.g.~for visualization purposes) but rarely in more than two or three dimensions.

\subsection{Kernel density estimation}

Kernel density estimation is a non-parametric method for estimating densities. A kernel density estimator can be thought of as a smoothed version of the empirical distribution of the training data. Given training data $\set{\vect{x}_1, \ldots, \vect{x}_N}$, their \emph{empirical distribution} $q_0\br{\vect{x}}$ is an equally weighted mixture of $N$ delta distributions located at training datapoints:
\begin{equation}
q_0\br{\vect{x}} = \frac{1}{N}\sum_n \dirac{\vect{x}-\vect{x}_n}.
\end{equation}
We can smooth out the empirical distribution and turn it into a density by replacing each delta distribution with a \emph{smoothing kernel}.
A smoothing kernel $k_{\epsilon}\br{\vect{u}}$ is a density function defined by:
\begin{equation}
k_{\epsilon}\br{\vect{u}} = \frac{1}{\epsilon^D}\,k_{1}\br{\frac{\vect{u}}{\epsilon}},
\end{equation}
where $\epsilon > 0$, and $k_1\br{\vect{u}}$ is a density function bounded from above. The parameter $\epsilon$ controls the ``width'' of the kernel; as $\epsilon \rightarrow 0$,  $k_{\epsilon}\br{\vect{u}}$ approaches $\dirac{\vect{u}}$. Given a smoothing kernel $k_{\epsilon}\br{\vect{u}}$, the \emph{kernel density estimator} is defined as:
\begin{equation}
q_{\epsilon}\br{\vect{x}} = \frac{1}{N}\sum_n k_{\epsilon}\br{\vect{x}-\vect{x}_n}.
\end{equation}
In practice, common choices of kernel include the Gaussian kernel:
\begin{equation}
k_{1}\br{\vect{u}} = 
\frac{1}{\br{2\pi}^{\nicefrac{D}{2}}}\exp\br{-\frac{1}{2}\norm{\vect{u}}^2},
\end{equation}
or the multiplicative Epanechnikov kernel:
\begin{equation}
k_{1}\br{\vect{u}} = \begin{cases}
\br{\frac{3}{4}}^D\prod_d \br{1-{u_d^2}} 
& \abs{u_d} \le 1\text{ for all }d \\
0 & \text{otherwise.}
\end{cases}
\end{equation}
The multiplicative Epanechnikov kernel is the most efficient among decomposable kernels, in the sense that asymptotically it achieves the lowest mean squared error \citep{Epanechnikov:1969:kernel}.

In the limit $\epsilon \rightarrow 0$, the kernel density estimator is \emph{unbiased}: it is equal to the true density in expectation. This is because as $\epsilon\rightarrow 0$ we have $q_{\epsilon}\br{\vect{x}}\rightarrow q_{0}\br{\vect{x}}$, and
\begin{equation}
\avg{q_{0}\br{\vect{x}}} = \frac{1}{N}\sum_n \avgx{\dirac{\vect{x}-\vect{x}_n}}{\prob{\vect{x}_n}}
=\avgx{\dirac{\vect{x}-\vect{x}'}}{\prob{\vect{x}'}}
=\prob{\vect{x}}.
\end{equation}
Moreover, the kernel density estimator is \emph{consistent}: it approaches the true density for small $\epsilon$ and large $N$, provided $\epsilon$ doesn't shrink too fast with $N$. To show this, we first upper-bound the variance of the estimator:
\begin{align}
\var{q_{\epsilon}\br{\vect{x}}} &= \frac{1}{N^2}\sum_n \varx{k_{\epsilon}\br{\vect{x}-\vect{x}_n}}{\prob{\vect{x}_n}}
= \frac{1}{N}\varx{k_{\epsilon}\br{\vect{x}-\vect{x}'}}{\prob{\vect{x}'}}\\
&\le \frac{1}{N}\avgx{k^2_{\epsilon}\br{\vect{x}-\vect{x}'}}{\prob{\vect{x}'}} 
= \frac{1}{N\epsilon^{2D}}\integralx{k^2_{1}\br{\frac{\vect{x}-\vect{x}'}{\epsilon}}\prob{\vect{x}'}}{\vect{x}'}{\R^D}\\
&\le \frac{\sup_{\vect{u}}{k^2_{1}\br{\vect{u}}}}{N\epsilon^{2D}}.
\end{align}
We can see that the variance approaches zero as $N\epsilon^{2D}$ approaches infinity. Hence, $q_{\epsilon}\br{\vect{x}}$ converges in probability to $\prob{\vect{x}}$ as $N$ approaches infinity, provided that $\epsilon$ approaches zero at a rate less than $N^{-\nicefrac{1}{2D}}$.

In practice, the width parameter $\epsilon$ controls the degree of smoothness, and trades off bias for variance: if $\epsilon$ is too low the model may overfit, whereas if $\epsilon$ is too high the model may underfit. In general, we want $\epsilon$ to be smaller the more data we have and larger the higher the dimension is; there are rules of thumb for setting $\epsilon$ based on $N$ and $D$ such as \emph{Scott's rule} \citep{Scott:1992:mde} or \emph{Silverman's rule} \citep{Silverman:1986:density}.

Sometimes it is not possible to find a value for $\epsilon$ that works equally well everywhere. For instance, a lower value may be more appropriate in regions with high concentration of training data than in regions with low concentration. One possible solution, known as the \emph{method of nearest neighbours}, is to choose a different $\epsilon$ for each location $\vect{x}$, such that the effective number of training datapoints contributing to the density at $\vect{x}$ is constant. However, the method of nearest neighbours doesn't always result in a normalizable density \citep[section 2.5.2]{Bishop:2006:prml}.

The kernel density estimator is widely used and a strong baseline in low dimensions due to its flexibility and good asymptotic properties. However it suffers from the curse of dimensionality in high dimensions. To illustrate why, consider estimating the uniform density in the unit cube $\left[0, 1\right]^D$ using the multiplicative Epanechnikov kernel, whose support is a $D$-dimensional hyperrectangle of side $2\epsilon$. The volume of space covered by kernels is at most $N\br{2\epsilon}^D$, which approaches zero as $D$ grows large for any $\epsilon < \nicefrac{1}{2}$. Hence, to avoid covering only a vanishing amount of space, we must either make the support of the kernel at least as large as the support of the entire density, or have the number of training datapoints grow at least exponentially with dimensionality.

Compared to parametric methods, kernel density estimation and non-parametric methods in general have the advantage that they don't require training: there is no need to search for a model because the training data \emph{is} the model. However, the memory cost of storing the model and the computational cost of evaluating the model grow linearly with $N$, which can be significant for large datasets. In contrast, parametric models have fixed memory and evaluation costs.

\subsection{Neural density estimation}
\label{sec:maf:methods:nde}

Neural density estimation is a parametric method for density estimation that uses neural networks to parameterize a density model. A \emph{neural density estimator} is a neural network with parameters $\bm{\phi}$ that takes as input a datapoint $\vect{x}$ and returns a real number $f_{\bm{\phi}}\br{\vect{\vect{x}}}$ such that:
\begin{equation}
\integralx{\exp\br{f_{\bm{\phi}}\br{\vect{\vect{x}}}}}{\vect{x}}{\R^D} = 1.
\end{equation}
The above constraint is enforced by construction, that is, the architecture of the neural network is such that $\exp\br{f_{\bm{\phi}}\br{\vect{x}}}$ integrates to $1$ for all settings of $\bm{\phi}$ (I will discuss how this can be achieved in section \ref{sec:maf:paper}). Since $\exp\br{f_{\bm{\phi}}\br{\vect{x}}}$ meets all the requirements of a density function, the neural network can be used as density model $q_{\bm{\phi}}\br{\vect{x}} = \exp\br{f_{\bm{\phi}}\br{\vect{x}}}$.

Given training data $\set{\vect{x}_1, \ldots, \vect{x}_N}$, neural density estimators are typically trained by maximizing the average log likelihood:
\begin{equation}
L\br{\bm{\phi}} = \frac{1}{N}\sum_n \log q_{\bm{\phi}}\br{\vect{x}_n} =  \frac{1}{N}\sum_n f_{\bm{\phi}}\br{\vect{x}_n}.
\end{equation}
The maximization of $L\br{\bm{\phi}}$ is typically done using a variant of \emph{stochastic-gradient ascent} \citep{Bottou:2012:sgd}. First, the parameters $\bm{\phi}$ are initialized to some arbitrary value. The algorithm proceeds in a number of iterations, in each of which $\bm{\phi}$ is updated. In each iteration, a subset of $M$ training datapoints $\set{\vect{x}_{n_1}, \ldots, \vect{x}_{n_M}}$, known as a \emph{minibatch}, is selected at random. The selection is usually done without replacement; if no more datapoints are left, all datapoints are put back in. Then, the gradient with respect to $\bm{\phi}$ of the average log likelihood on the minibatch is computed:
\begin{equation}
\nabla_{\bm{\phi}}\hat{L}\br{\bm{\phi}} = \frac{1}{M}\sum_m \nabla_{\bm{\phi}}f_{\bm{\phi}}\br{\vect{x}_{n_m}}.
\end{equation}
Each $\nabla_{\bm{\phi}}f_{\bm{\phi}}\br{\vect{x}_{n_m}}$ can be computed in parallel using \emph{reverse-mode automatic differentiation}, also known as \emph{backpropagation} in the context of neural networks \citep[section 6.5]{Goodfellow:2016:deeplearningbook}. The gradient $\nabla_{\bm{\phi}}\hat{L}\br{\bm{\phi}}$ is an unbiased estimator of $\nabla_{\bm{\phi}}L\br{\bm{\phi}}$, and is known as a \emph{stochastic gradient}. Finally, an \emph{ascent direction} $\vect{d}$ is computed based on its previous value, the total number of iterations so far, the current stochastic gradient, and possibly a window (or running aggregate) of previous stochastic gradients, and the parameters are updated by
$\bm{\phi} \leftarrow \bm{\phi} + \vect{d}$. There are various strategies for computing $\vect{d}$, such as \emph{momentum} \citep{Qian:1999:momentum},  \emph{AdaGrad} \citep{Duchi:2011:adagrad}, \emph{AdaDelta} \citep{Zeriler:2012:adadelta}, \emph{Adam} \citep{Kingma:2015:adam}, and \emph{AMSGrad} \citep{Reddi:2018:amsgrad}.

Stochastic-gradient ascent is a general algorithm for optimizing differentiable functions that can be written as averages of multiple terms. Due to its generality, it has the advantage that it decouples the task of modelling the density from the task of optimizing the training objective. Due to its use of stochastic gradients instead of full gradients, it scales well to large datasets, and it can be used with training datasets of infinite size (such as data produced by a generative process on the fly). Finally, there is some preliminary evidence that the stochasticity of the gradients may contribute in finding parameter settings that generalize well \citep{Keskar:2017:gen_gap}.

The question that remains to be answered is how we can design neural networks such that their exponentiated output integrates to $1$ by construction. This is one of the main contributions of this thesis, and it will be the topic of section \ref{sec:maf:paper}. As we shall see, it is possible to design neural density estimators that, although parametric, are flexible enough to approximate complex densities in thousands of dimensions.

\section{The paper}
\label{sec:maf:paper}

This section presents the paper \emph{Masked Autoregressive Flow for Density Estimation}, which is the main contribution of this chapter. The paper discusses state-of-the-art methods for constructing neural density estimators, and proposes a new method which we term \emph{Masked Autoregressive Flow}. We show how Masked Autoregressive Flow can increase the flexibility of previously proposed neural density estimators, and demonstrate MAF's performance in high-dimensional density estimation.

The paper was initially published as a preprint on arXiv in May 2017. Then, it was accepted for publication at the conference \emph{Advances in Neural Information Processing Systems (NeurIPS)} in December 2017. It was featured as an oral presentation at the conference; of the $3{,}240$ papers submitted to NeurIPS in 2017, $678$ were accepted for publication, of which $40$ were featured as oral presentations.

\subsubsection{Author contributions}

The paper is co-authored by me, Theo Pavlakou and Iain Murray. As the leading author, I conceived and developed Masked Autoregressive Flow, performed the experiments, and wrote the paper. Theo Pavlakou prepared the UCI datasets used in section 4.2 of the paper; his earlier work on density estimation using the UCI datasets served as a guide and point of reference for the experiments in the paper. Iain Murray supervised the project, offered suggestions, and helped revise the final version.

\subsubsection{Differences in notation}

The previous sections  used $\prob{\vect{x}}$ to mean the true density of the generative process and $q_{\bm{\phi}}\br{\vect{x}}$ to mean the density represented by a parametric model with parameters $\bm{\phi}$. The paper uses $\prob{\vect{x}}$  both for the true density and for the model density depending on context. In section 3.2 and appendix A of the paper, where disambiguation between the two densities is required, we use $\pi_x\br{\vect{x}}$ for the true density and $p_x\br{\vect{x}}$ for the model density.

\subsubsection{Corrections from original version}

About a year after the initial publication of the paper, we discovered that the experimental results on conditional density estimation with Masked Autoregressive Flow were incorrect due to an error in the code. Upon discovery, we corrected the results and issued a replacement of the paper on arXiv. The version included in this chapter is the corrected version of the paper, which was published on arXiv in June 2018. The particular results that were updated from earlier versions are indicated with a footnote in this version.

\includepdf[pages=-]{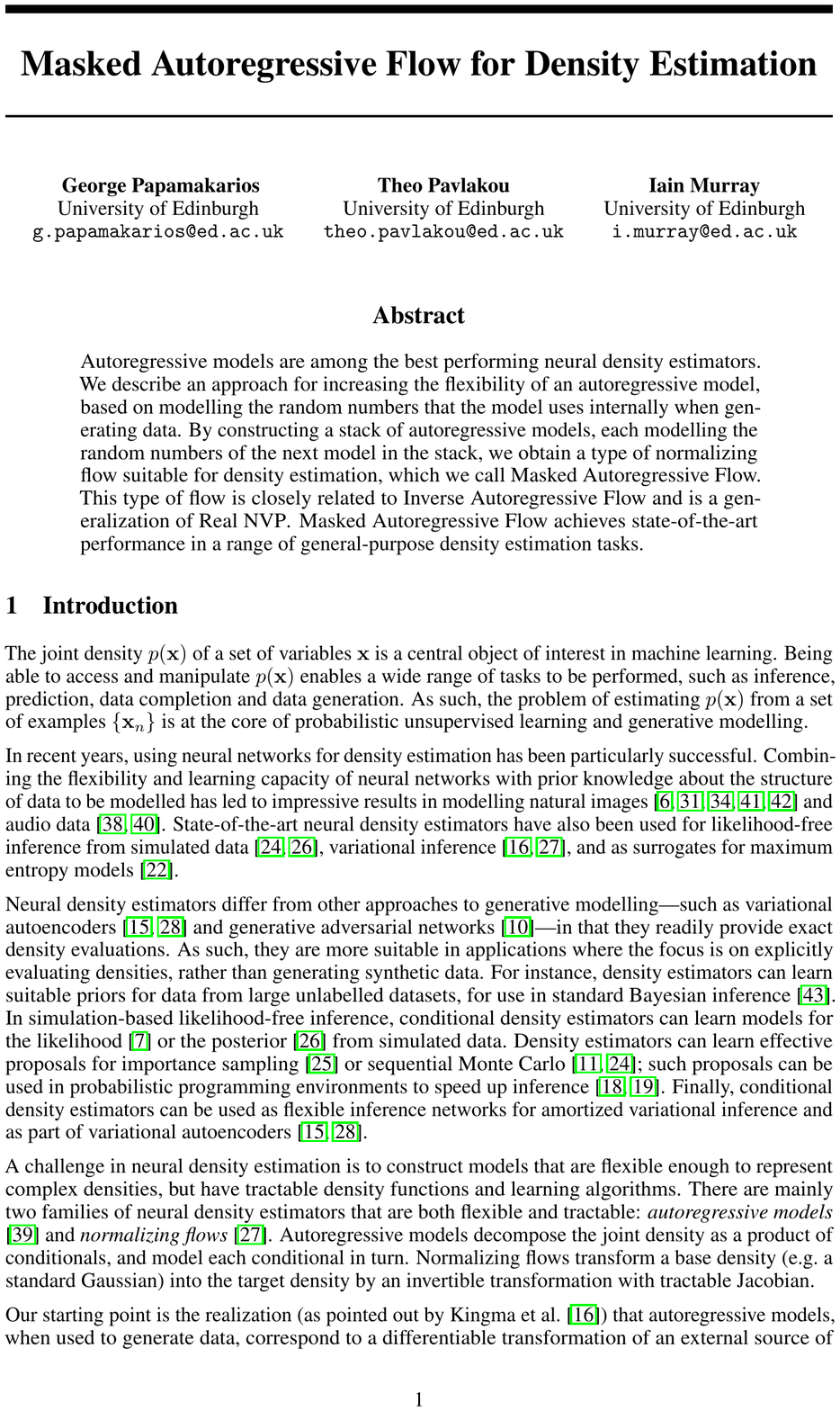}

\section{Contribution and impact}
\label{sec:maf:contrib}

The paper \emph{Masked Autoregressive Flow for Density Estimation} showed that we can increase the flexibility of simple neural density estimators by explicitly modelling their internal randomness. By reparameterizing a density model in terms of its internal randomness, we obtain an invertible transformation that can be composed into a normalizing flow. The resulting model can be more expressive than the original, while it remains tractable to train, evaluate and sample from. \emph{Masked Autoregressive Flow} is a specific implementation of this idea that uses masked autoregressive models with Gaussian conditionals as building blocks.

In addition to introducing a new density estimator, the paper contributed to the understanding of the relationship between MADE, MAF, IAF and Real NVP, and clarified the computational tradeoffs involved with using these models for density estimation and variational inference. Specifically, the paper explained the following relationships:
\begin{enumerate}[label=(\roman*)]
\item A MAF with one layer is a MADE with Gaussian conditionals.
\item An IAF is a MAF with its transformation inverted (and vice versa).
\item A Real NVP is a MAF/IAF with one autoregressive step instead of $D$ autoregressive steps.
\item Fitting a MAF to the training data by maximum likelihood can be viewed as fitting an implicit IAF to the base density by stochastic variational inference.
\end{enumerate}
Finally, the paper clarified the following computational tradeoffs:
\begin{enumerate}[label=(\roman*)]
\item MAF is fast to evaluate but slow to sample from.
\item IAF is slow to evaluate but fast to sample from.
\item Real NVP is fast to both evaluate and sample from, at the cost of decreased flexibility compared to MAF/IAF\@.
\end{enumerate}

According to Google Scholar, the paper has received $66$ citations as of \thedate. MAF has been used as a prior and/or decoder for variational autoencoders \citep{Alemi:2018:broken_elbo, Dillon:2017:tfd, Choi:2019:waic, Bauer:2019:resampled, Tran:2018:distr_pp, Vikram:2019:loracs}, for modelling state-action pairs in imitation learning \citep{Schroecker:2019:imitation}, and for estimating likelihoods or likelihood ratios in likelihood-free inference \citep{Brehmer:2018:mining, Papamakarios:2019:snl}. The five datasets we used for our experiments on unconditional density estimation (namely POWER, GAS, HEPMASS, MINIBOONE and BSDS300) have been made available online \citep{Papamakarios:2018:maf_datasets} and have been used by other researchers as density-estimation benchmarks \citep{Huang:2018:naf, Grathwohl:2018:ffjord, Oliva:2018:tan, Li:2018:glow, DeCao:2019:bnaf, Nash:2019:AEM}. Finally, MAF has been implemented as part of \emph{TensorFlow Probability} \citep{Dillon:2017:tfd}, a software library for probabilistic modelling and inference which forms part of TensorFlow \citep{Tensorflow:2015:whitepaper}.

There are two ways in which MAF is limited. First, MAF can be slow to sample from in high dimensions, as the computational cost of generating $D$-dimensional data from MAF scales linearly with $D$. This is true in general for autoregressive models, such as WaveNet \citep{VanDenOord:2016:WaveNet} or PixelCNN \citep{VanDenOord:2016:PixelCNN, Salimans:2017:pixelcnnpp}. Slow sampling limits the applicability of MAF as a neural sampler or as a variational posterior. Second, it is still unknown whether MAF is a universal density approximator, i.e.~whether it can model any well-behaved density arbitrarily well given enough layers and hidden units. In the next section, I will review advances in normalizing flows since the publication of the paper, and I will discuss further the tradeoffs between efficiency and expressivity in existing models.

\section{Further advances in normalizing flows}
\label{sec:maf:flows}

Since the publication of the paper \emph{Masked Autoregressive Flow for Density Estimation}, there has been a lot of research interest in normalizing flows. In this section, I review advances in normalizing flows after the publication of the paper, and I show how the various approaches are related to each other. Some of the advances are based on Masked Autoregressive Flow, and others are independent threads of research.

\subsection{Non-affine autoregressive layers}

MAF, IAF and Real NVP are all composed of \emph{affine autoregressive layers}, i.e.~autoregressive layers where each variable is scaled and shifted as a function of previous variables. (Since the coupling layers used by Real NVP are special cases of autoregressive layers, I won't make a distinction between the two from now on.) As we discussed in section~\ref{sec:maf:paper}, an affine autoregressive layer transforms each noise variable $u_i$ into a data variable $x_i$ as follows:
\begin{equation}
x_i = \alpha_i u_i + \beta_i,
\end{equation}
where $\alpha_i$ and $\beta_i$ are functions of $\vect{x}_{1:i-1}$ or $\vect{u}_{1:i-1}$. Restricting the transformation from $u_i$ to $x_i$ to be affine allowed us to invert it and compute the determinant of its Jacobian efficiently.

However, one could increase the expressivity of an autoregressive layer by allowing more general transformations from $u_i$ to $x_i$ of the form:
\begin{equation}
x_i = g_{\bm{\psi}_i}\br{u_i},
\end{equation}
where $\bm{\psi}_i$ is a function of $\vect{x}_{1:i-1}$ or $\vect{u}_{1:i-1}$ that parameterizes the transformation. As long as $g_{\bm{\psi}_i}\!$ is taken to be smooth and invertible, the resulting flow is a \emph{non-affine autoregressive flow}. The absolute determinant of the Jacobian of such a flow is:
\begin{equation}
\abs{\det\br{\deriv{f^{-1}}{\vect{x}}}} = \prod_i{\abs{\deriv{g_{\bm{\psi}_i}}{u_i}}^{-1}}.
\end{equation}

\subsubsection{Neural autoregressive flows}

\emph{Neural autoregressive flows} \citep{Huang:2018:naf} are non-affine autoregressive flows that use monotonically-increasing neural networks to parameterize $g_{\bm{\psi}_i}$. A feedforward neural network with one input $u_i$ and one output $x_i$ can be made monotonically increasing if (a) all its activation functions are monotonically increasing (sigmoid or leaky-ReLU activation functions have this property), and (b) all its weights are strictly positive. \citet{Huang:2018:naf} propose neural architectures that follow this principle, termed \emph{deep sigmoidal flows} and \emph{deep dense sigmoidal flows}. The derivative of $g_{\bm{\psi}_i}\!$ with respect to its input (needed for the computation of the Jacobian determinant above) can be obtained by automatic differentiation. A special case of a neural autoregressive flow is \emph{Flow++} \citep{Ho:2019:flowpp}, which parameterizes $g_{\bm{\psi}_i}\!$ as a mixture of logistic CDFs, and is equivalent to a neural network with positive weights and one hidden layer of logistic-sigmoid units.

The advantage of neural autoregressive flows is their expressivity. \citet{Huang:2018:naf} show that with a sufficiently flexible transformation $g_{\bm{\psi}_i}$, a single neural autoregressive layer can approximate any well-behaved density arbitrarily well. This is because if $g_{\bm{\psi}_i}\!$ becomes equal to the inverse CDF of the conditional $\prob{x_i\g \vect{x}_{1:i-1}}$, the neural autoregressive layer transforms the joint density $\prob{\vect{x}}$ into a uniform density in the unit cube \citep{Hyvarinen:1999:nonlinear_ica}.

The disadvantage of neural autoregressive flows is that in general they are not analytically invertible. That is, even though the inverse of  $g_{\bm{\psi}_i}\!$ exists, it's not always available in closed form. In order to invert the flow, one would have to resort to numerical methods. A neural autoregressive flow can still be used to estimate densities if it is taken to parameterize the transformation from $\vect{x}$ to $\vect{u}$, but if the transformation from $\vect{u}$ to $\vect{x}$ is not available analytically, it would not be possible to sample from the trained model efficiently.

\subsubsection{Non-linear squared flow}

If we take $g_{\bm{\psi}_i}\!$ to be non-affine but restrict it to have an analytic inverse, we would have a non-affine autoregressive flow that we could sample from. An example is the \emph{non-linear squared flow} \citep{Ziegler:2019:latent_flows}, which adds an inverse-quadratic perturbation to the affine transformation as follows:
\begin{equation}
g_{\bm{\psi}_i}\br{u_i} = \alpha_i u_i + \beta_i + \frac{\gamma_i}{1 + \br{\delta_i u_i + \epsilon_i}^2}
\quad\text{where}\quad
\bm{\psi}_i = \set{\alpha_i, \beta_i, \gamma_i, \delta_i, \epsilon_i}.
\end{equation}
The above transformation is not generally invertible, but it can be made monotonically increasing if we restrict $\alpha_i>\frac{9}{8\sqrt{3}}\abs{\gamma_i}\delta_i$ and $\delta_i>0$.
For $\gamma_i = 0$, the non-linear squared flow reduces to an affine autoregressive flow. 
Given $x_i$, the equation $x_i = g_{\bm{\psi}_i}\br{u_i}$ is a cubic polynomial with respect to $u_i$, so it can be solved analytically. The above transformation is more expressive than an affine transformation, but not as expressive as a general neural autoregressive flow.

\subsubsection{Piecewise-polynomial autoregressive flows}

Another approach to creating non-affine but analytically invertible autoregressive flows is to parameterize $g_{\bm{\psi}_i}\!$ as a piecewise-linear or piecewise-quadratic monotonically-increasing function \citep{Muller:2018:nis}. In this case, the parameters $\bm{\psi}_i$ correspond to the locations of the segments and their shape (i.e.~slope and curvature). In order to invert $g_{\bm{\psi}_i}\!$ for a given $x_i$, one would need to first identify which segment this $x_i$ corresponds to (which can be done by binary search since the segments are sorted), and then invert that segment (which is easy for a linear or quadratic segment). The more segments we have, the more flexible the transformation becomes.

\subsection{Invertible convolutional layers}

If we are interested in modelling image data, we may want to design a flow that contains invertible convolutional layers. For this discussion, we will assume that the data is an image of shape $H\times W\times C$, where $H$ is the height, $W$ is the width, and $C$ is the number of channels. A convolutional layer transforms noise of shape $H\times W\times C$ into data via a convolution with filter $k$. Let $\vect{x}$ and $\vect{u}$ represent the vectorized image and noise respectively. Since convolution is a linear operation, we can write it as the following matrix multiplication:
\begin{equation}
{\vect{x}} = \mat{W}_{k} \,\vect{u},
\end{equation}
where $\mat{W}_{k}$ is a matrix of shape $HWC\times HWC$ whose entries depend on the filter ${k}$. If $\mat{W}_{k}$ is invertible then the convolution is invertible, and its Jacobian has absolute determinant:
\begin{equation}
\abs{\det\br{\deriv{f^{-1}}{\vect{x}}}} = \abs{\det\br{\mat{W}_{k}}}^{-1}.
\end{equation}
Nonetheless, naively inverting $\mat{W}_{k}$ or calculating its determinant has a cost of $\bigo{\br{HWC}^3}$, so more scalable solutions have to be found in practice.

\subsubsection{Invertible $1\times 1$ convolutions and Glow}

\citet{Kingma:2018:glow} introduced \emph{invertible $1\times 1$ convolutions} in their model \emph{Glow}. An invertible $1\times 1$ convolution is essentially a linear transformation where each pixel of size $C$ (with one value for each channel) is multiplied by the same matrix $\mat{V}$ of shape $C\times C$. The equivalent matrix $\mat{W}_{k}$ can be obtained by:
\begin{equation}
\mat{W}_{k} = \mat{V} \otimes \mat{I},
\end{equation}
where $\mat{I}$ is the identity matrix of shape $HW\times HW$ and $\otimes$ is the Kronecker product. The inverse and determinant of $\mat{W}_{k}$ have a cost of  $\bigo{C^3}$, which may not be prohibitive for moderate $C$. To further reduce the cost, \citet{Kingma:2018:glow} suggest parameterizing $\mat{V}$ as follows:
\begin{equation}
\mat{V} = \mat{P}\mat{L}\mat{U},
\end{equation}
where $\mat{P}$ is a fixed permutation matrix, $\mat{L}$ is a lower triangular matrix with ones in its diagonal, and $\mat{U}$ is an upper triangular matrix. In that case, the absolute determinant of $\mat{W}_{k}$ becomes:
\begin{equation}
\abs{\det\br{\mat{W}_{k}}} = HW\prod_c \abs{U_{cc}},
\end{equation}
where $U_{cc}$ is the $c$-th element of $\mat{U}$'s diagonal. If we further restrict every $U_{cc}$ to be positive, we guarantee that the transformation is always invertible. In addition to modelling images, invertible $1\times 1$ convolutions have been used in modelling audio by \emph{WaveGlow} \citep{Prenger:2018:WaveGlow} and \emph{FloWaveNet} \citep{Kim:2018:FloWaveNet}.

\subsubsection{Autoregressive and emerging convolutions}

One way of obtaining scalable invertible convolutions without restricting the receptive field to be $1\times 1$ is via \emph{autoregressive convolutions} \citep{Hoogeboom:2019:emerging}. In an autoregressive convolution, pixels are assumed to be ordered, and part of the filter is zeroed out so that output pixel $i$ only depends on input pixels $1$ to $i-1$\@. An autoregressive convolution corresponds to a triangular matrix $\mat{W}_{k}$, and hence its determinant can be calculated at a cost of $\bigo{HWC}$. A convolution that is not restricted to be autoregressive can be obtained by composing an autoregressive convolution whose matrix $\mat{W}_{{k}_1}$ is upper triangular with an autoregressive convolution whose matrix $\mat{W}_{{k}_2}$ is lower triangular; the result is equivalent to a non-autoregressive convolution with matrix $\mat{W}_{{k}_2}\mat{W}_{{k}_1}$. This is analogous to parameterizing an LU decomposition of the convolution matrix, and is termed \emph{emerging convolution} \citep{Hoogeboom:2019:emerging}.

\subsubsection{Periodic convolutions in the Fourier domain}

Finally, another way of scaling up invertible convolutions is via the Fourier domain. According to the convolution theorem, the convolution between a filter $k$ and a signal $u$ is equal to:
\begin{equation}
{k} * {u} = \mathcal{F}^{-1}\br{\mathcal{F}\br{{k}}\,\mathcal{F}\br{{u}}},
\end{equation}
where $\mathcal{F}$ is the Fourier transform and $\mathcal{F}^{-1}$ is its inverse. Since the Fourier transform is a unitary linear operator, its discrete version corresponds to multiplication with a particular unitary matrix $\mat{F}$. Hence, the convolution of a discrete signal $\vect{u}$ can be written in vectorized form as:
\begin{equation}
\mat{W}_k\,\vect{u} = \mat{F}^T\br{\mat{F}\vect{k} \odot \mat{F}\vect{u}} =
\br{\mat{F}^T \mat{D}_k\,\mat{F}}\,\vect{u}
\end{equation}
where $\odot$ is elementwise multiplication, and $\mat{D}_k$ is a diagonal matrix whose diagonal is $\mat{F}\vect{k}$. Since the absolute determinant of $\mat{F}$ is $1$, the absolute determinant of $\mat{W}_k$ is:
\begin{equation}
\abs{\det\br{\mat{W}_{k}}} = \prod_i \abs{D_{k,ii}},
\end{equation}
where $D_{k,ii}$ is the $i$-th element of the filter $k$ expressed in the Fourier domain.

In a typical convolution layer with $C$ filters and $C$ input channels, we perform a total of $C^2$ convolutions (one for each combination of filter and input-channel), and the resulting $C^2$ output maps are summed across input channels to obtain $C$ output channels. If we express the entire convolution layer in the Fourier domain as we did above, we obtain a block-diagonal matrix $\mat{D}_k$ of shape $HWC \times HWC$ whose diagonal contains $HW$ matrices of shape $C\times C$ \citep{Hoogeboom:2019:emerging}. Hence, calculating the determinant of a convolution layer using its Fourier representation has a cost of $\bigo{HWC^3}$, which may be acceptable for moderate $C$. This type of convolution layer is termed \emph{periodic convolution} \citep{Hoogeboom:2019:emerging}.

\subsection{Invertible residual layers}

A residual layer \citep{He:2016:resnet} is a transformation of the following form:
\begin{equation}
\vect{x} = f\br{\vect{u}} =  \vect{u} + g\br{\vect{u}}.
\end{equation}
Residual layers are designed to avoid vanishing gradients in deep neural networks. Since the Jacobian of a residual layer is:
\begin{equation}
\deriv{f}{\vect{u}} = \mat{I} + \deriv{g}{\vect{u}},
\end{equation}
the propagated gradient doesn't vanish even if the Jacobian of $g$ does.

\subsubsection{Sylvester flow}

Residual networks are not generally invertible, but can be made to be if $g$ is restricted accordingly. One such example is the \emph{Sylvester flow} \citep{Berg:2018:sylvester}, where $g$ is taken to be:
\begin{equation}
g\br{\vect{u}} = \mat{Q}\mat{R}\,h\br{\mat{\tilde{R}}\mat{Q}^T\vect{u} + \vect{b}}.
\end{equation}
In the above, $\mat{Q}$ is a $D\times M$ matrix whose columns form an orthonormal basis, $\mat{R}$ and $\mat{\tilde{R}}$ are $M\times M$ upper-triangular matrices, $\vect{b}$ is an $M$-dimensional bias, $h\br{\cdot}$ is a smooth activation function applied elementwise, and $M\le D$. The above transformation can be thought of as a feedforward neural network with one hidden layer of $M$ units, whose weight matrices have been parameterized in a particular way.

Theorem 2 of \citet{Berg:2018:sylvester} gives sufficient conditions for the invertibility of the Sylvester flow. However, their proof requires $\tilde{\mat{R}}$ to be invertible, which as I show here is not necessary. In theorem \ref{theorem:maf:sylvester_invertibility} below, I give a more succinct proof of the invertibility of the Sylvester flow, which doesn't require the invertibility of $\tilde{\mat{R}}$.

\vspace{1em}
\begin{theorem}[Invertibility of the Sylvester flow]
\label{theorem:maf:sylvester_invertibility}
The Sylvester flow is invertible if $h\br{\cdot}$ is monotonically increasing with bounded derivative (e.g.~a sigmoid activation function has this property), and if for all $m$ we have:
\begin{equation}
\tilde{R}_{mm}R_{mm} > -\frac{1}{\sup_z{h'\br{z}}}.
\label{eq:maf:sylvester_inv}
\end{equation}
\end{theorem}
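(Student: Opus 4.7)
The plan is to reduce the invertibility of $f$ on $\R^D$ to the invertibility of an $M$-dimensional map with upper-triangular structure, and then to solve that map by backward substitution, reducing invertibility to a sequence of scalar bijection problems.

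First, I would exploit the orthonormal-columns property of $\mat{Q}$ to decompose the ambient space. Pick any $\mat{Q}^{\perp}\in\R^{D\times(D-M)}$ whose columns form an orthonormal basis of the orthogonal complement of the column space of $\mat{Q}$, so that $[\mat{Q},\mat{Q}^{\perp}]$ is a $D\times D$ orthogonal matrix. Then every $\vect{u}\in\R^D$ factors uniquely as $\vect{u}=\mat{Q}\vect{z}+\mat{Q}^{\perp}\vect{w}$ with $\vect{z}=\mat{Q}^{T}\vect{u}\in\R^M$ and $\vect{w}=(\mat{Q}^{\perp})^{T}\vect{u}\in\R^{D-M}$. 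Substituting this into the definition of the Sylvester flow and using $\mat{Q}^{T}\mat{Q}^{\perp}=0$ yields
\begin{equation}
f(\vect{u}) = \mat{Q}\bigl(\vect{z}+\mat{R}\,h(\mat{\tilde R}\vect{z}+\vect{b})\bigr) + \mat{Q}^{\perp}\vect{w}.
\end{equation}
In the $(\vect{z},\vect{w})$ coordinates, $f$ is the identity on the $\vect{w}$ block and equals $g(\vect{z}):=\vect{z}+\mat{R}\,h(\mat{\tilde R}\vect{z}+\vect{b})$ on the $\vect{z}$ block. Because the change of coordinates is an orthogonal bijection, $f$ is invertible on $\R^D$ if and only if $g$ is invertible on $\R^M$.

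Next I would solve $g(\vect{z})=\vect{y}$ by backward substitution, using that both $\mat{R}$ and $\mat{\tilde R}$ are upper triangular. Writing the $m$-th coordinate of $g(\vect{z})$ and dropping vanishing terms gives
\begin{equation}
g(\vect{z})_m = z_m + \sum_{k\ge m} R_{mk}\,h\!\Bigl(b_k + \sum_{j\ge k}\tilde R_{kj}\,z_j\Bigr),
\end{equation}
so $g(\vect{z})_m$ depends only on $z_m,z_{m+1},\ldots,z_M$. Starting at $m=M$ and moving downward, once $z_{m+1},\ldots,z_M$ have been determined, the equation for $z_m$ has the form $\phi_m(z_m)=d_m$, where
\begin{equation}
\phi_m(z) = z + R_{mm}\,h(\tilde R_{mm}\,z + c_m)
\end{equation}
and $c_m,d_m$ are constants computed from the already-known entries. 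So the whole problem reduces to showing each $\phi_m:\R\to\R$ is a bijection.

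Finally, I would verify the scalar bijection. Since $h$ is monotonically increasing, $h'\ge 0$; since $h'$ is bounded, $S:=\sup_z h'(z)<\infty$. The derivative
\begin{equation}
\phi_m'(z) = 1 + R_{mm}\tilde R_{mm}\,h'(\tilde R_{mm}\,z + c_m)
\end{equation}
is always $\ge 1$ when $R_{mm}\tilde R_{mm}\ge 0$, and is $\ge 1+R_{mm}\tilde R_{mm}\,S$ when $R_{mm}\tilde R_{mm}<0$, which is strictly positive by hypothesis \eqref{eq:maf:sylvester_inv}. Hence there exists $\epsilon_m>0$ with $\phi_m'\ge\epsilon_m$ everywhere, so $\phi_m$ is strictly increasing and satisfies $|\phi_m(z)|\ge\epsilon_m|z|-|\phi_m(0)|\to\infty$ as $|z|\to\infty$; this gives $\phi_m(\R)=\R$ and uniqueness, hence a unique $z_m$. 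Running this recursion recovers $\vect{z}$ uniquely, so $g$ (and therefore $f$) is a bijection. The only mildly delicate step is noticing that the condition $\tilde R_{mm}R_{mm}>-1/S$ is exactly what is needed in the sign-negative case to bound $\phi_m'$ away from zero; no invertibility of $\mat{\tilde R}$ is ever invoked, since $\tilde R_{mm}=0$ merely makes $\phi_m$ affine and trivially bijective.
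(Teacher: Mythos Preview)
Your proof is correct and takes a genuinely different route from the paper's. The paper computes the Jacobian determinant via the matrix-determinant lemma, obtaining $\det(\partial f/\partial\vect{u}) = \prod_m(1 + A_{mm}\tilde R_{mm}R_{mm})$ with $A_{mm}=h'(\cdot)\ge 0$, observes that condition~\eqref{eq:maf:sylvester_inv} forces each factor to be strictly positive, and then appeals to the inverse-function theorem. Your argument instead exploits the orthonormal-columns structure of $\mat{Q}$ to split off the trivial $(D-M)$-dimensional direction, reduces to an $M$-dimensional triangular system, and inverts that system by explicit backward substitution, checking that each scalar step $\phi_m$ is a bijection of~$\R$. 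The paper's route is shorter and recovers the Jacobian-determinant formula as a by-product, which is independently useful for normalizing-flow density computations; your route is more elementary (no determinant identity needed) and, importantly, establishes \emph{global} bijectivity directly --- the inverse-function theorem as usually stated yields only local invertibility, so your constructive argument in fact closes a subtle gap left open by the paper's proof.
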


\begin{proof}
Using the \emph{matrix-determinant lemma}, the determinant of the Jacobian of $f$ can be written as:
\begin{equation}
\det\br{\deriv{f}{\vect{u}}} = \det\br{\mat{I} + \mat{A}\mat{\tilde{R}}\mat{R}},
\end{equation}
where $\mat{A}$ is an $M\times M$ diagonal matrix whose diagonal is $h'\br{\mat{\tilde{R}}\mat{Q}^T\vect{u} + \vect{b}}$. Since $\mat{I} + \mat{A}\mat{\tilde{R}}\mat{R}$ is an $M\times M$ upper-triangular matrix, its determinant is the product of its diagonal elements, hence:
\begin{equation}
\det\br{\deriv{f}{\vect{u}}} = \prod_m
\br{1 + A_{mm}\tilde{R}_{mm}R_{mm}}.
\end{equation}
Condition \eqref{eq:maf:sylvester_inv} ensures that the Jacobian determinant is positive everywhere. Hence, from the \emph{inverse-function theorem} it follows that $f$ is invertible.
\end{proof}

Sylvester flows are related to other normalizing flows. For $M=1$, the Sylvester flow becomes a special case of the \emph{planar flow} \citep{Rezende:2015:flows}. Furthermore, if $M=D$, $\mat{Q}$ is taken to be a reverse-permutation matrix and $\mat{R}$ is strictly upper triangular, $g$ becomes a MADE \citep{Germain:2015:made} with one hidden layer of $D$ units. In that case, the Sylvester flow becomes a special case of the \emph{Inverse Autoregressive Flow} \citep{Kingma:2016:iaf}, where the scaling factor is $1$ and the shifting factor is $g\br{\vect{u}}$.

Even though the Sylvester flow is invertible and has a tractable Jacobian, its inverse is not available analytically. This means that the Sylvester flow can calculate efficiently only the density of its own samples, so it can be used as a variational posterior. Alternatively, if we take it to parameterize the transformation from $\vect{x}$ to $\vect{u}$, we can use it as a density estimator but we won't be able to sample from it efficiently.

\subsubsection{Contractive residual layers and iResNet}

In general, a residual layer is invertible if $g$ is a \emph{contraction}, i.e.~its Lipschitz constant is less than~$1$. A residual layer with this property is termed an \emph{iResNet} \citep{Behrmann:2018:iresnet}. To see why an iResNet is invertible, fix a value for $\vect{x}$ and consider the sequence $\vect{u}_1, \vect{u}_2, \ldots$ obtained by:
\begin{equation}
\vect{u}_{k + 1} = \vect{x} - g\br{\vect{u}_k}.
\label{eq:maf:iresnet_fixed_point}
\end{equation}
The map $\vect{u}_k\mapsto\vect{u}_{k+1}$ is a contraction, so by the \emph{Banach fixed-point theorem} it follows that the sequence converges for any choice of $\vect{u}_1$ to the same fixed point $\vect{u}_{\infty}$. That fixed point is the unique value that satisfies $\vect{x} = f\br{\vect{u}_{\infty}}$, which proves the invertibility of $f$.

One way to construct an iResNet is to parameterize $g$ to be a feedforward neural network with contractive activation functions (such as sigmoids or ReLUs), and with weight matrices of spectral norm less than $1$. Even though its inverse is not analytically available in general, an iResNet can be numerically inverted using the iterative procedure of equation \eqref{eq:maf:iresnet_fixed_point}.

Directly calculating the Jacobian determinant of an iResNet costs $\bigo{D^3}$. Alternatively, following \citet{Behrmann:2018:iresnet},
the log absolute determinant of the Jacobian can be first expanded into a power series as follows:
\begin{equation}
\log{\abs{\det\br{\deriv{f}{\vect{u}}}}} = 
\sum_{k=1}^{\infty}\frac{\br{-1}^{k+1}}{k}\trace\br{\mat{J}_g^k}
\quad\text{where}\quad
\mat{J}_g = \deriv{g}{\vect{u}},
\end{equation}
and then it can be approximated by truncating the power series at a desired accuracy. Further, $\trace\br{\mat{J}_g^k}$ can be approximated by the following unbiased stochastic estimator, known as the \emph{Hutchinson estimator} \citep{Hutchinson:1990:estimator}:
\begin{equation}
\trace\br{\mat{J}_g^k} \approx \vect{v}^T\mat{J}_g^k\vect{v}
\quad\text{where}\quad
\vect{v}\sim\gaussian{\vect{0}}{\mat{I}}.
\label{eq:maf:hutchinson}
\end{equation}
Calculating $\vect{v}^T\mat{J}_g^k\vect{v}$ doesn't require explicitly computing the Jacobian, as it can be done by backpropagating through $g$ a total of $k$ times, once for each Jacobian-vector product.

The main advantage of iResNets over other types of invertible residual networks is the flexibility in constructing $g$. Unlike Sylvester flows where $g$ is restricted to one hidden layer and to no more than $D$ hidden units, an iResNet can have any number of layers and hidden units. However, unlike other normalizing flows, it is expensive both to sample and to calculate exact densities under an iResNet, which limits its applicability in practice.

\subsection{Infinitesimal flows}

So far I have discussed normalizing flows consisting of a fixed number of layers. We can imagine a flow where the number of layers grows larger and larger, but at the same time the effect of each layer becomes smaller and smaller. In the limit of infinitely many layers each of which has an infinitesimal effect, we obtain an flow where $\vect{u}$ is transformed continuously, rather than in discrete steps. We call such flows \emph{infinitesimal flows}.

\subsubsection{Deep diffeomorphic flow}

One type of infinitesimal flow is the \emph{deep diffeomorphic flow} \citep{Salman:2018:diffeomorphic}. We start with a residual layer mapping $\vect{u}_t$ to $\vect{u}_{t+1}$:
\begin{equation}
\vect{u}_{t+1} = \vect{u}_t + g_t\br{\vect{u}_t},
\end{equation}
and then extend it to a variable-sized step as follows:
\begin{equation}
\vect{u}_{t+dt} = \vect{u}_t + dt\, g_t\br{\vect{u}_t}.
\label{eq:maf:resnet_dt}
\end{equation}
In the limit $dt\rightarrow 0$, we obtain the following ordinary differential equation:
\begin{equation}
\tderiv{\vect{u}_t}{t} = g_t\br{\vect{u}_t}.
\end{equation}
In the above ODE, we can interpret $g_t$ as a time-varying velocity field. For small enough $dt$, we can interpret a residual layer of the form of equation \eqref{eq:maf:resnet_dt} as the \emph{Euler integrator} of this ODE\@. In that case, we can approximately invert the flow by running the integrator backwards:
\begin{equation}
\vect{u}_{t} \approx \vect{u}_{t+dt} - dt\, g_t\br{\vect{u}_{t+dt}},
\end{equation}
which becomes exact for $dt\rightarrow 0$. We can also think of a diffeomorphic flow as a special case of an iResNet, where $dt$ is small enough such that the transformation is contractive. In that case, each step of running the integrator backwards corresponds to a single iteration of the iResNet-inversion algorithm, where $\mathbf{u}_t$ is initialized with $\mathbf{u}_{t+dt}$, the value to be inverted at each step.

Since the deep diffeomorphic flow is a special case of an iResNet, calculating its Jacobian determinant can be done similarly. Using a small (but not infinitesimal) $dt$, we begin by writing the log absolute determinant of each step of the Euler integrator as a power series:
\begin{equation}
\log{\abs{\det\br{\deriv{\vect{u}_{t+dt}}{\vect{u}_t}}}} = 
\sum_{k=1}^{\infty}\frac{\br{-1}^{k+1}}{k}\br{dt}^k\trace\br{\mat{J}_{g_t}^k}
\quad\text{where}\quad
\mat{J}_{g_t} = \deriv{g_t}{\vect{u}_t},
\end{equation}
and then approximate it by truncating the power series at a desired level of accuracy. The smaller we take $dt$ the more we can afford to truncate. Finally, we can estimate $\trace\br{\mat{J}_{g_t}^k}$ using the Hutchinson estimator in equation \eqref{eq:maf:hutchinson}.

The main drawback of the deep diffeomorphic flow is that inverting it and computing its Jacobian determinant are approximate operations that can introduce approximation error. To make the approximation more accurate, one needs to reduce $dt$, which in turn can make the flow prohibitively deep, since the number of layers scales as $\bigo{\nicefrac{1}{dt}}$.

\subsubsection{Neural ODEs and FFJORD}

The drawbacks of the deep diffeomorphic flow stem from the fact that (a) it backpropagates through the integrator, (b) the Euler integrator it uses is not exactly invertible, and (c) the calculation of the Jacobian determinant is approximate. A different approach that avoids these issues is \emph{Neural ODEs} \citep{Chen:2018:neural_odes}. Instead of using the Euler integrator and backpropagating through it, Neural ODEs defines an additional ODE that describes the evolution of the gradient of a loss as it backpropagates through the flow. Hence, instead of backpropagating through the integrator in order to train the flow, the gradients with respect to its parameters can be obtained as the solution to this additional ODE\@. Both the ODE for the forward pass and the ODE for the backward pass can be solved with any integrator, which allows one to use an integrator that is exactly invertible. Furthermore, the integrator can choose the number of integration steps (or layers of the flow) adaptively.

Instead of approximating the Jacobian determinant by truncating its power series, we can define directly how the log density evolves through the flow via a third ODE\@. We start by rewriting the power series of the log absolute determinant as follows:
\begin{equation}
\log{\abs{\det\br{\deriv{\vect{u}_{t+dt}}{\vect{u}_t}}}} = 
dt\,\trace\br{\mat{J}_{g_t}} + \bigo{{dt}^2}.
\end{equation}
Using the above, we can express the evolution of the log density from $t$ to $t+dt$ as:
\begin{equation}
\log q_{t+dt}\br{\vect{u}_{t+dt}} = \log q_{t}\br{\vect{u}_{t}}
-dt\,\trace\br{\mat{J}_{g_t}} + \bigo{{dt}^2}.
\end{equation}
Taking the limit $dt\rightarrow 0$, we obtain:
\begin{equation}
\frac{d}{dt}\log q_{t}\br{\vect{u}_{t}} = -\trace\br{\mat{J}_{g_t}} = -\nabla\cdot g_t\br{\vect{u}_t}.
\end{equation}
The above ODE is a special case of the \emph{Fokker--Planck equation} where the diffusion is zero. As before, $\trace\br{\mat{J}_{g_t}}$ can be approximated using the Hutchinson estimator. The above ODE can be solved together with the ODEs for forward and backward propagation through the flow with the integrator of our choice. The resulting flow is termed \emph{FFJORD}
\citep{Grathwohl:2018:ffjord}. A special case of FFJORD is the \emph{Monge--Amp\`{e}re flow}
\citep{Zhang:2018:monge_ampere}, which takes $g_t$ to be the gradient of a scalar field (in which case $g_t$ has zero rotation everywhere).

The advantage of Neural ODEs and FFJORD is their flexibility. Unlike other normalizing flows where the architecture has to be restricted in some way or another, $g_t$ can be parameterized by any neural network, which can depend on $\vect{u}_t$ and $t$ in arbitrary ways. On the other hand, their disadvantage is that they necessitate using an ODE solver for sampling and calculating the density under the model, which can be slow compared to a single pass through a neural network. Additionally, unlike a neural network which has a fixed evaluation time, the evaluation time of an ODE solver may vary depending on the value of its input (i.e.~its initial state).

\section{Generative models without tractable densities}
\label{sec:maf:gen}

So far in this chapter I have focused on neural density estimators, that is, models whose density can be calculated efficiently. However, neural density estimation is not the only approach to generative modelling. Before I conclude the chapter, I will take a step back and briefly review generative models without tractable densities. I will discuss how such models relate to neural density estimators, how they differ from them, and what alternative capabilities they may offer.

\subsection{Energy-based models}

In section \ref{sec:maf:methods:nde}, we defined a neural density estimator to be a neural network that takes a vector $\vect{x}$ and returns a real number $f_{\bm{\phi}}\br{\vect{x}}$ such that for any parameter setting $\bm{\phi}$:
\begin{equation}
\integralx{\exp\br{f_{\bm{\phi}}\br{\vect{\vect{x}}}}}{\vect{x}}{\R^D} = 1.
\end{equation}
The above property allows us to interpret $q_{\phi}\br{\vect{x}} = \exp\br{f_{\bm{\phi}}\br{\vect{\vect{x}}}}$ as a density function. In order to enforce this property, we had to restrict the architecture of the neural network, which as we saw can hurt the flexibility of the model.

We can relax this restriction by requiring only that the integral of $\exp\br{f_{\bm{\phi}}\br{\vect{\vect{x}}}}$ be finite. That is, for every parameter setting $\bm{\phi}$:
\begin{equation}
\integralx{\exp\br{f_{\bm{\phi}}\br{\vect{\vect{x}}}}}{\vect{x}}{\R^D} = Z_{\bm{\phi}} < \infty.
\end{equation}
If that's the case, we can still define a valid density function as follows:
\begin{equation}
q_{\bm{\phi}}\br{\vect{x}} = \frac{1}{Z_{\bm{\phi}}}\exp\br{f_{\bm{\phi}}\br{\vect{\vect{x}}}}.
\end{equation}
Models defined this way are called \emph{energy-based models}. The quantity $-f_{\bm{\phi}}\br{\vect{\vect{x}}}$ is known as the \emph{energy}, and $Z_{\bm{\phi}}$ is known as the \emph{normalizing constant}. Under the above definitions, a neural density estimator is an energy-based model whose normalizing constant is always $1$.

The main advantage of energy-based modelling is the increased flexibility in specifying $f_{\bm{\phi}}\br{\vect{x}}$. However, evaluating the density under an energy-based model is intractable, since calculating the normalizing constant involves a high-dimensional integral. There are ways to estimate the normalizing constant, e.g.~via \emph{annealed importance sampling} \citep{Salakhutdinov:2008:ais} or \emph{model distillation} \citep{Papamakarios:2015:distilling}, but they are approximate and often expensive.

In addition to their density being intractable, energy-based models typically don't provide a mechanism of generating samples. Instead, one must resort to approximate methods such as Markov-chain Monte Carlo \citep{Murray:2007:mcmc, Neal:1993:mcmc} to generate samples. In contrast, some neural density estimators such \emph{Masked Autoregressive Flow} or \emph{Real NVP} are capable of generating exact independent samples directly.

Finally, the intractability of $q_{\bm{\phi}}\br{\vect{x}}$ makes likelihood-based training of energy-based models less straightforward than  of neural density estimators. In particular, the gradient of $\log{q_{\bm{\phi}}\br{\vect{x}}}$ with respect to $\bm{\phi}$ is:
\begin{equation}
\deriv{}{\bm{\phi}}\log{q_{\bm{\phi}}\br{\vect{x}}} = \deriv{}{\bm{\phi}}f_{\bm{\phi}}\br{\vect{x}} - \deriv{}{\bm{\phi}}\log Z_{\bm{\phi}}.
\end{equation}
By substituting in the definition of the normalizing constant and by exchanging the order of differentiation and integration, we can write the gradient of $\log Z_{\bm{\phi}}$ as follows:
\begin{align}
\deriv{}{\bm{\phi}}\log Z_{\bm{\phi}} 
&= \frac{1}{Z_{\bm{\phi}}}\deriv{}{\bm{\phi}}\integralx{\exp\br{f_{\bm{\phi}}\br{\vect{\vect{x}}}}}{\vect{x}}{\R^D}\\
&= \integralx{\frac{1}{Z_{\bm{\phi}}}\exp\br{f_{\bm{\phi}}\br{\vect{\vect{x}}}}\,\deriv{}{\bm{\phi}}f_{\bm{\phi}}\br{\vect{x}}}{\vect{x}}{\R^D}\\
&= \integralx{q_{\bm{\phi}}\br{\vect{x}}\,\deriv{}{\bm{\phi}}f_{\bm{\phi}}\br{\vect{x}}}{\vect{x}}{\R^D}\\
&= \avgx{\deriv{}{\bm{\phi}}f_{\bm{\phi}}\br{\vect{x}}}{q_{\bm{\phi}}\br{\vect{x}}}.
\end{align}
The above expectation is intractable, hence direct gradient-based optimization of the average log likelihood is impractical. There are ways to either approximate the above expectation or avoid computing it entirely, such as \emph{contrastive divergence} \citep{Hinton:2002:products_experts}, \emph{score matching} \citep{Hyvarinen:05:score_matching}, or \emph{noise-contrastive estimation} \citep{Gutmann:2012:nce}. In contrast, for neural density estimators the above expectation is always zero by construction, which makes likelihood-based training by backpropagation readily applicable.

\subsection{Latent-variable models and variational autoencoders}
\label{sec:maf:gen:vaes}

One way of increasing the expressivity of a simple generative model is by introducing \emph{latent variables}. Latent variables are auxiliary variables $\vect{z}$ that are used to augment the data $\vect{x}$. For the purposes of this discussion we will assume that $\vect{z}$ is continuous (i.e.~$\vect{z}\in\R^K$), but discrete latent variables are also possible.

Having decided on how many latent variables to introduce, one would typically define a joint density model $q_{\bm{\phi}}\br{\vect{x}, \vect{z}}$. This is often done by separately defining a prior model $q_{\bm{\phi}}\br{\vect{z}}$ and a conditional model $q_{\bm{\phi}}\br{\vect{x}\g \vect{z}}$. Then, the density of $\vect{x}$ is obtained by:
\begin{equation}
q_{\bm{\phi}}\br{\vect{x}} = \integralx{q_{\bm{\phi}}\br{\vect{x}, \vect{z}}}{\vect{z}}{\R^K}
 = \integralx{q_{\bm{\phi}}\br{\vect{x}\g \vect{z}}q_{\bm{\phi}}\br{\vect{z}}}{\vect{z}}{\R^K}.
\end{equation}

The motivation for introducing latent variables and then integrating them out is that the marginal $q_{\bm{\phi}}\br{\vect{x}}$ can be complex even if the prior $q_{\bm{\phi}}\br{\vect{z}}$ and the conditional $q_{\bm{\phi}}\br{\vect{x}\g \vect{z}}$ are not. Hence, one can obtain a complex model of the data despite using simple modelling components. Another way to view $q_{\bm{\phi}}\br{\vect{x}}$ is as a mixture of infinitely many components $q_{\bm{\phi}}\br{\vect{x}\g \vect{z}}$ indexed by $\vect{z}$, weighted by $q_{\bm{\phi}}\br{\vect{z}}$. In fact, mixture models of finitely many components such as those discussed in section \ref{sec:maf:methods:simple_parametric} can be thought of as latent-variable models with discrete latent variables.

Like energy-based models, the density of a latent-variable model is intractable to calculate since it involves a high-dimensional integral. If that integral is tractable, the model is essentially a neural density estimator and can be treated as such. So, for the purposes of this discussion, we'll assume that the integral is always intractable by definition.

Even though the model density is intractable, it's possible to lower-bound $\log{q_{\bm{\phi}}\br{\vect{x}}}$ by introducing an auxiliary density model $r_{\bm{\psi}}\br{\vect{z}\g\vect{x}}$ with parameters $\bm{\vect{\psi}}$ as follows:
\begin{align}
\log{q_{\bm{\phi}}\br{\vect{x}}} &= \log\integralx{q_{\bm{\phi}}\br{\vect{x}\g \vect{z}}q_{\bm{\phi}}\br{\vect{z}}}{\vect{z}}{\R^K}\\
&= \log\,{\avgx{\frac{q_{\bm{\phi}}\br{\vect{x}\g \vect{z}}q_{\bm{\phi}}\br{\vect{z}}}{r_{\bm{\psi}}\br{\vect{z}\g\vect{x}}}}{r_{\bm{\psi}}\br{\vect{z}\g\vect{x}}}}\\
&\ge \avgx{\log q_{\bm{\phi}}\br{\vect{x}\g \vect{z}} + \log q_{\bm{\phi}}\br{ \vect{z}} - \log r_{\bm{\psi}}\br{\vect{z}\g\vect{x}}}{r_{\bm{\psi}}\br{\vect{z}\g\vect{x}}},
\end{align}
where we used \emph{Jensen's inequality} to exchange the logarithm with the expectation. The above lower bound is often called \emph{evidence lower bound} or \emph{ELBO}\@. The ELBO is indeed a lower bound given any choice of conditional density $r_{\bm{\psi}}\br{\vect{z}\g\vect{x}}$, and it becomes equal to $\log{q_{\bm{\phi}}\br{\vect{x}}}$ if:
\begin{equation}
r_{\bm{\psi}}\br{\vect{z}\g\vect{x}} =
q_{\bm{\phi}}\br{\vect{z}\g\vect{x}}
= \frac{q_{\bm{\phi}}\br{\vect{x}\g\vect{z}}q_{\bm{\phi}}\br{\vect{z}}}{q_{\bm{\phi}}\br{\vect{x}}}.
\end{equation}

In practice, the ELBO can be used as a tractable objective to train the model. Maximizing the average ELBO over training data with respect to $\bm{\phi}$ is equivalent to maximizing a lower bound to the average log likelihood. Moreover, maximizing the average ELBO with respect to $\bm{\psi}$ makes the ELBO a tighter lower bound. Stochastic gradients of the average ELBO with respect to $\bm{\phi}$ and $\bm{\psi}$ can be obtained by reparameterizing $r_{\bm{\psi}}\br{\vect{z}\g\vect{x}}$ with respect to its internal random variables (similarly to how we reparameterized MADE in section \ref{sec:maf:paper}), and by estimating the expectation over the reparameterized $r_{\bm{\psi}}\br{\vect{z}\g\vect{x}}$ via Monte Carlo. A latent-variable model trained this way is known as a \emph{variational autoencoder} \citep{Kingma:2014:vae, Rezende:2014:vae}. In the context of variational autoencoders, $r_{\bm{\psi}}\br{\vect{z}\g\vect{x}}$ is often referred to as the \emph{encoder} and $q_{\bm{\phi}}\br{\vect{x}\g \vect{z}}$ as the \emph{decoder}.

Apart from increasing the flexibility of the model, another motivation for introducing latent variables is \emph{representation learning}. Consider for example a variational autoencoder that is used to model images of handwritten characters. Assume that the decoder $q_{\bm{\phi}}\br{\vect{x}\g \vect{z}}$ is deliberately chosen to factorize over the elements of $\vect{x}$, i.e.~the pixels in an image. Clearly, a factorized distribution over pixels is unable to model global structure, which in our case corresponds to factors of variation such as the identity of a character, its shape, the style of handwriting, and so on. Therefore, information about global factors of variation such as the above must be encoded into the latent variables $\vect{z}$ if the model is to represent the data well. If the dimensionality of $\vect{z}$ is deliberately chosen to be smaller than that of $\vect{x}$ (the number of pixels), the latent variables correspond to a compressed encoding of the global factors of variation in an image. Given an image $\vect{x}$, plausible such encodings can be directly sampled from $r_{\bm{\psi}}\br{\vect{z}\g\vect{x}}$.

Finally, we can further encourage the latent variables to acquire semantic meaning by engineering the structure of the model appropriately. Examples include: hierarchically-structured latent variables that learn to represent datasets \citep{Edwards:2017:statistician}, exchangeable groups of latent variables that learn to represent objects in a scene \citep{Nash:2017:multi_entity_vae, Eslami:2016:air}, and latent variables with a Markovian structure that learn to represent environment states \citep{Gregor:2019:tdvae, Buesing:2018:fast_gen_models}.

\subsection{Neural samplers and generative adversarial networks}

A \emph{neural sampler} is a neural network $f_{\bm{\phi}}$ with parameters $\bm{\phi}$ that takes random unstructured noise $\vect{u}\in \R^{K}$ and transforms it into data $\vect{x}\in \R^D$:
\begin{equation}
\vect{x} = f_{\bm{\phi}}\br{\vect{u}}
\quad\text{where}\quad
\vect{u}\sim \pi_u\br{\vect{u}}.
\end{equation}
In the above, $\pi_u\br{\vect{u}}$ is a fixed source of noise (not necessarily a density model). Under this definition, normalizing flows with tractable sampling, such as Masked Autoregressive Flow or Real NVP, are also neural samplers. A typical variational autoencoder is also a neural sampler, in which case $\vect{u}$ are the internal random numbers needed to sample $\vect{z}$ from the prior and $\vect{x}$ from the decoder.

Typically, the main purpose of a neural sampler is to be used as a data generator and not as a density estimator. Hence, a neural sampler doesn't need to have the restrictions of a normalizing flow: $f_{\bm{\phi}}$ doesn't have to be bijective and $K$ (the dimensionality of the noise) doesn't have to match $D$ (the dimensionality of the data). As a result, a neural sampler offers increased flexibility in specifying the neural network $f_{\bm{\phi}}$ compared to a normalizing flow.

A consequence of the freedom of specifying $f_{\bm{\phi}}$ and $K$ is that a neural sampler may no longer correspond to a valid density model of the data. For example, if $f_{\bm{\phi}}$ is not injective, a whole subset of $\R^K$ of non-zero measure under $\pi_u\br{\vect{u}}$ can be mapped to a single point in $\R^D$, where the model density will be infinite. Moreover, if $K<D$, the neural sampler can only generate data on a $K$-dimensional manifold embedded in $\R^D$. Hence, the model density will be zero almost everywhere, except for a manifold of zero volume where the model density will be infinite.

Training a neural sampler is typically done by generating a set of samples from the model, selecting a random subset of the training data, and calculating a measure of discrepancy between the two sets. If that discrepancy measure is differentiable, its gradient with respect to the model parameters $\bm{\phi}$ can be obtained by backpropagation. Then, the model can be trained by stochastic-gradient descent, with stochastic gradients obtained by repeating the above procedure.

One way to obtain a differentiable discrepancy measure is via an additional neural network, known as the \emph{discriminator}, whose task is to discriminate generated samples from training data. The classification performance of the discriminator (as measured by e.g.~cross entropy) can be used as a differentiable discrepancy measure. In practice, the discriminator is trained at the same time as the neural sampler. Neural samplers trained this way are known as \emph{generative adversarial networks} \citep{Goodfellow:2014:gan}. Other possible discrepancy measures between training data and generated samples include the maximum mean discrepancy \citep{Dziugaite:2015:mmdgan}, or the Wasserstein distance \citep{Arjovsky:2017:wgan}.

Generative adversarial networks often excel at data-generation performance. When trained as image models, they are able to generate high-resolution images that are visually indistinguishable from real images \citep{Brock:2018:biggan, Karras:2018:progan, Karras:2018:stylegan}. However, their good performance as data generators doesn't necessarily imply good performance in terms of average log likelihood. \citet{Grover:2018:flowgan} and \citet{Danihelka:2017:rnvp_gan} have observed that a Real NVP trained as a generative adversarial network can generate realistic-looking samples, but it may perform poorly as a density estimator.

\section{Summary and conclusions}

The probability density of a generative process at a location $\vect{x}$ is how often the process generates data near $\vect{x}$ per unit volume. Density estimation is the task of modelling the density of a process given training data generated by the process. In this chapter, I discussed how to perform density estimation with neural networks, and why this may be a good idea.

Instead of densities, we could directly estimate other statistical quantities associated with the process, such as expectations of various functions under the process, or we could model the way data is generated by the process. Why estimate densities then? In section \ref{sec:maf:intro:why_densities}, I identified the following reasons as to why estimating densities can be useful:
\begin{enumerate}[label=(\roman*)]
\item Bayesian inference is a calculus over densities.
\item Accurate densities imply good data compression.
\item The density of a model is a principled training and evaluation objective.
\item Density models are useful components in other algorithms.
\end{enumerate}
That said, I would argue that in practice we should first identify what task we are interested in, and then use the tool that is more appropriate for the task. If the task calls for a density model, then a density model should be used. Otherwise, a different solution may be more appropriate.

Estimating densities is hard, and it becomes dramatically harder as the dimensionality of the data increases. As I discussed in section \ref{sec:maf:intro:high_dim}, the reason is the curse of dimensionality: a dataset becomes sparser in higher dimensions, and naive density estimation doesn't scale. In section \ref{sec:maf:methods}, I argued that standard density estimation methods based on histograms or smoothing kernels are particularly susceptible to the curse of dimensionality.

Is high-dimensional density estimation hopeless? In section \ref{sec:maf:intro:high_dim} I argued that it's not, as long as we are prepared to make assumptions about the densities we want to estimate, and encode these assumptions into our models. I identified the following assumptions that are often appropriate to make for real-world densities:
\begin{enumerate}[label=(\roman*)]
\item They vary smoothly.
\item Their intrinsic dimensionality is smaller than that of the data.
\item They have symmetries.
\item They involve variables that are loosely dependent.
\end{enumerate}
Neural density estimators scale to high dimensions partly because their architecture can encode the above assumptions. In contrast, it's not clear how these assumptions can be encoded into models such as parametric mixtures, histograms, or smoothing kernels.

The main contribution of this chapter is \emph{Masked Autoregressive Flow}, a normalizing flow that consists of a stack of reparameterized autoregressive models. In section \ref{sec:maf:contrib}, I discussed the impact Masked Autoregressive Flow has had since its publication, how it contributed to further research, and what its main limitations have been.

Since the publication of Masked Autoregressive Flow in December 2017, the field of normalizing flows has experienced considerable research interest. In section \ref{sec:maf:flows}, I reviewed advances in the field since the publication of the paper, and I examined their individual strengths and weaknesses. Even though normalizing flows have improved significantly, no model yet exists that has all the following characteristics:
\begin{enumerate}[label=(\roman*)]
\item The density under the model can be exactly evaluated in one neural-network pass (like Masked Autoregressive Flow)\@.
\item Exact independent samples can be generated in one neural-network pass (like Inverse Autoregressive Flow)\@.
\item The modelling performance is on par with the best autoregressive models.
\end{enumerate}
Of all the models that satisfy both (i) and (ii), the best-performing ones are Real NVP and its successor Glow. However, as we saw in section \ref{sec:maf:paper}, the coupling layers used by Real NVP and Glow are not as expressive as the autoregressive layers used by MAF and IAF\@. On the other hand, satisfying (iii) often requires sacrificing either (i) or (ii). Nonetheless, there is a lot of research activity currently in normalizing flows, and several new ideas such as Neural ODEs and FFJORD are beginning to emerge, so it's reasonable to expect further advances in the near future.

Neural density estimation is only one approach to generative modelling, with variational autoencoders and generative adversarial networks being the main alternatives when exact density evaluations are not required. As I discussed in section \ref{sec:maf:gen}, the advantage of variational autoencoders is their ability to learn structured representations, whereas the strength of generative adversarial networks is their performance in generating realistic data. As I argued earlier, choosing among a neural density estimator, a variational autoencoder and a generative adversarial network should primarily depend on the task we are interested in.

That said, normalizing flows, variational autoencoders and generative adversarial networks are all essentially the same model: a neural network that takes unstructured noise and turns it into data. The difference is in the requirements the different models must satisfy (a normalizing flow must be invertible), in the capabilities they offer (a normalizing flow provides explicit densities in addition to sampling), and in the way they are trained. The fact that generative adversarial networks can take white noise and turn it into convincingly realistic images should be seen as proof that (a) this task can be achieved by a smooth function, and (b) such a function is learnable from data. Since this task is demonstrably possible, it should serve as a performance target for neural density estimators too.

\part{Likelihood-free inference}
\label{part:lfi}
\chapter{Fast $\epsilon$-free Inference of Simulator Models with Bayesian Conditional Density Estimation}
\label{chapter:efree}

This chapter is about likelihood-free inference, i.e.~Bayesian inference when the likelihood is intractable, and how neural density estimation can be used to address this challenge. We start by explaining what likelihood-free inference is, in what situations the likelihood may be intractable, and why it is important to address this challenge (section \ref{sec:efree:intro}). We then review \emph{approximate Bayesian computation}, a family of methods for likelihood-free inference based on simulations (section \ref{sec:efree:abc}). The main contribution of this chapter is the paper \emph{Fast $\epsilon$-free Inference of Simulator Models with Bayesian Conditional Density Estimation}, which introduces a new method for likelihood-free inference based on neural density estimation (section \ref{sec:efree:paper}). We conclude the chapter with a discussion of the contributions, impact and limitations of the paper, and a comparison of the proposed method with previous and following work (section \ref{sec:efree:discussion}).

\section{Likelihood-free inference: what and why?}
\label{sec:efree:intro}

Suppose we have a stationary process that generates data and is controlled by parameters $\bm{\theta}$. Every time the process is run forward, it stochastically generates a datapoint $\vect{x}$ whose distribution depends on $\bm{\theta}$. We will assume that $\vect{x}$ and $\bm{\theta}$ are continuous vectors, but many of the techniques discussed in this thesis can be adapted for discrete $\vect{x}$ and/or $\bm{\theta}$. We will further assume that for every setting of $\bm{\theta}$, the corresponding distribution over $\vect{x}$ admits a finite density everywhere; in other words, the process defines a conditional density function $\prob{\vect{x}\g\bm{\theta}}$.

Given a datapoint $\vect{x}_o$ known to have been generated by the process, we are interested in inferring plausible parameter settings that could have generated $\vect{x}_o$. In particular, given prior beliefs over parameters described by a density $\prob{\bm{\theta}}$, we are interested in computing the posterior density $\prob{\bm{\theta}\g\vect{x}=\vect{x}_o}$ obtained by \emph{Bayes' rule}:
\begin{equation}
\prob{\bm{\theta}\g\vect{x}=\vect{x}_o} = \frac{\prob{\vect{x}_o \g \bm{\theta}}\,\prob{\bm{\theta}}}{Z\br{\vect{x}_o}}
\quad\text{where}\quad
Z\br{\vect{x}_o}=\integral{\prob{\vect{x}_o \g \bm{\theta}'}\,\prob{\bm{\theta}'}}{\bm{\theta}'}.
\label{eq:efree:bayes_rule}
\end{equation}
We assume that the normalizing constant $Z\br{\vect{x}_o}$ is finite for every $\vect{x}_o$, so that the posterior density is always well-defined.

\subsection{Density models and simulator models}
\label{sec:efree:intro:sim_models}

The choice of inference algorithm depends primarily on how the generative process is modelled. In this section, I discuss and compare two types of models: density models and simulator models.

A \emph{density model}, also known as an \emph{explicit model}, describes the conditional density function of the process. Given values for $\vect{x}$ and $\bm{\theta}$, a density model returns the value of the conditional density $\prob{\vect{x}\g\bm{\theta}}$ (or an approximation of it if the model is not exact). With a density model, we can easily evaluate the posterior density $\prob{\bm{\theta}\g\vect{x}=\vect{x}_o}$ up to a normalizing constant using Bayes' rule (equation \ref{eq:efree:bayes_rule}). Even though the normalizing constant $Z\br{\vect{x}_o}$ is typically intractable, we can sample from the posterior using e.g.~Markov-chain Monte Carlo \citep{Murray:2007:mcmc, Neal:1993:mcmc}, or approximate the posterior with a more convenient distribution using e.g.~variational inference \citep{Ranganath:2014:bbvi, Kucukelbir:2015:autovistan, Blei:2017:vi} or knowledge distillation \citep{Papamakarios:2015:distilling, Papamakarios:2015:distilling_model_knowledge}. We refer to such methods as \emph{likelihood-based inference methods}, as they explicitly evaluate the likelihood $\prob{\vect{x}_o \g \bm{\theta}}$.

On the other hand, a \emph{simulator model}, also known as an \emph{implicit model}, describes how the process generates data. For any parameter setting $\bm{\theta}$, a simulator model can be run forward to generate exact independent samples from $\prob{\vect{x}\g\bm{\theta}}$ (or approximate independent samples if the model is not perfect). Unlike a density model, we can't typically evaluate the density under a simulator model. In order to perform inference in a simulator model we need methods that make use of simulations from the model but don't require density evaluations. We refer to such methods as \emph{likelihood-free inference methods}.

In general, likelihood-free methods are less efficient than likelihood-based methods. As we will see in sections \ref{sec:efree:abc} and \ref{sec:efree:paper}, likelihood-free methods can require a large number of simulations from the model to produce accurate results, even for fairly simple models. One of the main topics of this thesis, and of research in likelihood-free inference in general, is how to reduce the required number of simulations without sacrificing inference quality.

Since likelihood-free methods are less efficient than likelihood-based methods, why should we bother with simulator models at all, and not just require all models to be density models instead? The answer is that simulators can often be more natural and more interpretable modelling tools than density functions. A simulator model is a direct specification of how a process of interest generates data, which can be closer to our understanding of the process as a working mechanism. In contrast, a density function is a mathematical abstraction that is often hard to interpret, and as a result it can be an unintuitive description of real-world phenomena.

Simulator models are particularly well-suited for scientific applications. In high-energy physics for instance, simulators are used to model collisions in particle accelerators and the interaction of produced particles with particle detectors \citep{Agostinelli:2003:geant, Sjostrand:2008:pythia}. Inference in such models can be used to infer parameters of physical theories and potentially discover new physics \citep{Brehmer:2018:eft, Brehmer:2018:eft_guide}. Other scientific applications of simulator models and likelihood-free inference methods include
astronomy and cosmology \citep{Schafer:2012:lfi_cosmology, Alsing:2018:lfi_cosmology, Alsing:2019:lfi_cosmology},
systems biology \citep{Wilkinson:2011:systems_bio},
evolution and ecology \citep{Pritchard:1999:pop_growth, Ratmann:2007:protein_nets, Wood:2010:sl, Beaumont:2010:abc_evo_eco},
population generics \citep{Beaumont:2002:abc_pop_gen},
and computational neuroscience \citep{Pospischil:2008:hh_models, Markram:2015:neurosim, Lueckmann:2017:snpe, Sterratt:2011:neuro}

In addition to their scientific applications, simulator models can be useful in engineering and artificial intelligence. For instance, one can build an image model using a computer-graphics engine: the engine takes a high-level description of a scene (which corresponds to the model parameters $\bm{\theta}$) and renders an image of that scene (which corresponds to the generated data~$\vect{x}$). Computer vision and scene understanding can be cast as inference of scene parameters given an observed image \citep{Mansinghka:2013:graphics, Kulkarni:2015:picture, Romaszko:2017:vision_as_graphics}. Other examples of applications of simulator models in artificial intelligence include perceiving physical properties of objects using a physics engine \citep{Wu:2015:galileo}, and inferring the goals of autonomous agents using probabilistic programs \citep{Cusumano:2017:agent_goals}.

\subsection{When is the likelihood intractable?}

In the previous section, I argued that simulator models are useful modelling tools. Nonetheless, this doesn't necessitate the use of likelihood-free inference methods: given a simulator model, we could try to calculate its likelihood based on the simulator's internal workings, and then use likelihood-based inference methods instead. In this section, I will discuss certain cases in which it may be too expensive or even impossible to calculate the likelihood of a simulator, and thus likelihood-free methods may be preferable or necessary.

\subsubsection{Integrating over the simulator's latent state}

Typically, the stochasticity of a simulator model comes from internal generation of random numbers, which form the simulator's latent state. Consider for example a simulator whose latent state consists of variables $\vect{z}_1, \ldots, \vect{z}_K$, generated from distributions $p_{\bm{\psi}_1}, \ldots, p_{\bm{\psi}_K}\!$ whose parameters $\bm{\psi}_1, \ldots, \bm{\psi}_K$ can be arbitrary functions of all variables preceding them. For simplicity, assume that all variables are continuous, and that $K$ is fixed (and not random). Such a simulator can be expressed as the following sequence of statements:
\begin{align}
\bm{\psi}_1 &= f_1\br{\bm{\theta}}\\
\vect{z}_1 &\sim p_{\bm{\psi}_1}\\
&\ldots\notag\\
\bm{\psi}_K &= f_K\br{\vect{z}_{1:K-1}, \bm{\theta}}\\
\vect{z}_K &\sim p_{\bm{\psi}_K}\\
\bm{\phi} &= g\br{\vect{z}_{1:K}, \bm{\theta}}\\
\vect{x} &\sim p_{\bm{\phi}}.
\end{align}
If the distributions $p_{\bm{\psi}_1}, \ldots, p_{\bm{\psi}_K}\!$ and $p_{\bm{\phi}}$ admit tractable densities, the joint density of data $\vect{x}$ and latent state $\vect{z}_{1:K}$ is also tractable, and can be calculated by:
\begin{equation}
\prob{\vect{x}, \vect{z}_{1:K}\g\bm{\theta}} = p_{\bm{\phi}}\br{\vect{x}}\prod_k p_{\bm{\psi}_k}\br{\vect{z}_k}.
\end{equation}
Given observed data $\vect{x}_o$, the likelihood of the simulator is:
\begin{equation}
\prob{\vect{x}_o\g\bm{\theta}} = \integral{\prob{\vect{x}_o, \vect{z}_{1:K}\g\bm{\theta}}}{\vect{z}_{1:K}}.
\end{equation}
Since the likelihood of the simulator involves integration over the latent state, it is generally intractable.

Even when integration over the latent state makes the likelihood intractable, the joint likelihood $\prob{\vect{x}_o, \vect{z}_{1:K}\g\bm{\theta}}$ may still be tractable (as in the above example). We could then sample from the joint posterior $\prob{\vect{z}_{1:K},\bm{\theta}\g\vect{x} = \vect{x}_o}$ using e.g.~MCMC, and simply discard the latent-state samples. However, this may be infeasible when the latent state is large. Consider for example a simulator of a physical process involving Brownian motion of a large set of particles. Inferring the latent state of the simulator would entail sampling from the space of all possible particle trajectories that are consistent with the data. If both the observed data $\vect{x}_o$ and the parameters of interest $\bm{\theta}$ correspond to macroscopic variables associated with the process, likelihood-free inference methods that don't involve inferring the microscopic state of the process may be preferable.

\subsubsection{Deterministic transformations}

Sometimes, the likelihood of a simulator model is intractable because the output of the simulator is a deterministic transformation of the simulator's latent state. Consider for example the following simulator model:
\begin{align}
\vect{z} &\sim \prob{\vect{z}\g\bm{\theta}}\\
\vect{x} &= f\br{\vect{z}, \bm{\theta}},
\end{align}
where $\vect{z}$ is continuous and the density $\prob{\vect{z}\g\bm{\theta}}$ is tractable. The joint distribution of $\vect{x}$ and $\vect{z}$ does not admit a proper density, but can be expressed using a delta distribution as follows:
\begin{equation}
\prob{\vect{x}, \vect{z}\g\bm{\theta}} = \dirac{\vect{x} - f\br{\vect{z}, \bm{\theta}}}\,\prob{\vect{z}\g\bm{\theta}}.
\end{equation}
The likelihood of the simulator is:
\begin{equation}
\prob{\vect{x}_o\g\bm{\theta}} = \integral{\dirac{\vect{x}_o - f\br{\vect{z}, \bm{\theta}}}\,\prob{\vect{z}\g\bm{\theta}}}{\vect{z}}.
\end{equation}
Calculating the above integral requires a change of variables from $\vect{z}$ to $\vect{x}$ which is not tractable in general. Moreover, the distribution $\prob{\vect{x}\g\bm{\theta}}$ may not admit a proper density even if $\prob{\vect{z}\g\bm{\theta}}$ does.

Sampling from the joint posterior of $\vect{z}$ and $\bm{\theta}$ in order to avoid calculating the likelihood also poses difficulties. Given observed data $\vect{x}_o$, the joint posterior $\prob{\vect{z},\bm{\theta}\g\vect{x} = \vect{x}_o}$ is constrained on a manifold in the  $\pair{\vect{z}}{\bm{\theta}}$-space defined by $\vect{x}_o = f\br{\vect{z}, \bm{\theta}}$. 
One approach to sampling from this posterior is to replace the joint distribution $\prob{\vect{x}, \vect{z}\g\bm{\theta}}$ with the following approximation:
\begin{equation}
p_{\epsilon}\br{\vect{x}, \vect{z}\g\bm{\theta}} = \gaussianx{\vect{x}}{f\br{\vect{z}, \bm{\theta}}}{\epsilon^2\mat{I}}\,\prob{\vect{z}\g\bm{\theta}},
\end{equation}
where $\epsilon$ is a small positive constant. This is equivalent to replacing the original simulator model with the following one:
\begin{align}
\vect{z} &\sim \prob{\vect{z}\g\bm{\theta}}\\
\vect{z}' &\sim \gaussian{\vect{0}}{\mat{I}}\\
\vect{x} &= f\br{\vect{z}, \bm{\theta}} + \epsilon\vect{z}'.
\end{align}
An issue with this approach is that for small $\epsilon$ methods such as MCMC may struggle, whereas for large $\epsilon$ the posterior no longer corresponds to the original model. MCMC methods on constrained manifolds exist and may be used instead, but make assumptions about the manifold which may not hold for a simulator model in general \citep{Brubaker:2012:constrained_mcmc, Graham:2017:differentiable_simulators}.

\subsubsection{Inaccessible internal workings}

Even if it is theoretically possible to use a likelihood-based method with a simulator model, this may be difficult in practice if the simulator is a black box whose internal workings we don't have access to. For example, the simulator may be provided as an executable program, or it may be written in a low-level language (e.g.~for running efficiency), or it may be an external library routine. I would argue that a general-purpose inference engine ought to have inference algorithms that support such cases.

Finally, the ``simulator'' may not be a model written in computer code, but an actual process in the real world. For example, the ``simulator`` could be an experiment in a lab, asking participants to perform a task, or a measurement of a physical phenomenon. If a model doesn't exist, likelihood-free inference methods could be used directly in the real world.

\section{Approximate Bayesian computation}
\label{sec:efree:abc}

In this section, I will discuss a family of likelihood-free inference methods broadly known as \emph{approximate Bayesian computation} or \emph{ABC}\@. In general, ABC methods draw approximate samples from the parameter posterior by repeatedly simulating the model and checking whether simulated data match the observed data. I will discuss three types of ABC methods: \emph{rejection ABC}, \emph{Markov-chain Monte Carlo ABC}, and \emph{sequential Monte Carlo ABC}.

\subsection{Rejection ABC}
\label{sec:efree:abc:rej_abc}

Let $B_{\epsilon}\br{\vect{x}_o}$ be a neighbourhood of $\vect{x}_o$, defined as the set of datapoints whose distance from $\vect{x}_o$ is no more than $\epsilon$:
\begin{equation}
B_{\epsilon}\br{\vect{x}_o} = \set{\vect{x}: \norm{\vect{x}-\vect{x}_o}\le \epsilon}.
\end{equation}
In the above, $\norm{\cdot}$ can be the Euclidean norm, or any other norm in $\R^D$. For a small $\epsilon$, we can approximate the likelihood  $\prob{\vect{x}_o\g\bm{\theta}}$ by the average conditional density inside $B_{\epsilon}\br{\vect{x}_o}$:
\begin{equation}
\prob{\vect{x}_o\g\bm{\theta}}\approx \frac{\Prob{\norm{\vect{x}-\vect{x}_o}\le \epsilon\g\bm{\theta}}}{\abs{B_{\epsilon}\br{\vect{x}_o}}},
\end{equation}
where $\abs{B_{\epsilon}\br{\vect{x}_o}}$ is the volume of $B_{\epsilon}\br{\vect{x}_o}$. Substituting the above likelihood approximation into Bayes' rule (equation \ref{eq:efree:bayes_rule}), we obtain the following approximation to the posterior:
\begin{equation}
\prob{\bm{\theta}\g\vect{x}=\vect{x}_o} \approx 
\frac{\Prob{\norm{\vect{x}-\vect{x}_o}\le \epsilon\g\bm{\theta}}\,\prob{\bm{\theta}}}{\integral{\Prob{\norm{\vect{x}-\vect{x}_o}\le \epsilon\g\bm{\theta}'}\,\prob{\bm{\theta}'}}{\bm{\theta}'}}
= \prob{\bm{\theta}\g\norm{\vect{x}-\vect{x}_o} \le \epsilon}.
\end{equation}
The approximate posterior $\prob{\bm{\theta}\g\norm{\vect{x}-\vect{x}_o} \le \epsilon}$ can be thought of as the exact posterior under an alternative observation, namely that generated data $\vect{x}$ falls inside the neighbourhood $B_{\epsilon}\br{\vect{x}_o}$.
As $\epsilon$ approaches zero, $B_{\epsilon}\br{\vect{x}_o}$ becomes infinitesimally small, and the approximate posterior approaches the exact posterior $\prob{\bm{\theta}\g\vect{x}=\vect{x}_o}$ \citep[for a formal proof under suitable regularity conditions, see e.g.][supplementary material, theorem 1]{Prangle:2017:adapting_distance}. Conversely, as $\epsilon$ approaches infinity, $B_{\epsilon}\br{\vect{x}_o}$ covers the whole space, and the approximate posterior approaches the prior $\prob{\bm{\theta}}$.

\emph{Rejection ABC} \citep{Pritchard:1999:pop_growth} is a rejection-sampling method for obtaining independent samples from the approximate posterior $\prob{\bm{\theta}\g\norm{\vect{x}-\vect{x}_o} \le \epsilon}$. It works by first sampling parameters from the prior $\prob{\bm{\theta}}$, then simulating the model using the sampled parameters, and only accepting the sample if the simulated data is no further than a distance $\epsilon$ away from  $\vect{x}_o$. The parameter $\epsilon$ controls the tradeoff between approximation quality and computational efficiency: as $\epsilon$ becomes smaller, the accepted samples follow more closely the exact posterior, but the algorithm accepts less often.
Rejection ABC is shown in algorithm \algref{alg:efree:rej_abc}.

\begin{figure}[h]
\alglabel{alg:efree:rej_abc}
\begin{algorithm}[H]
\DontPrintSemicolon
\setstretch{1.2}
\Repeat{$\norm{\vect{x} - \vect{x}_o}\le \epsilon$}{
Sample $\bm{\theta}\sim\prob{\bm{\theta}}$\;
Simulate $\vect{x}\sim\prob{\vect{x}\g\bm{\theta}}$\;
}
\Return $\bm{\theta}$\;
\caption{Rejection ABC\label{alg:efree:rej_abc}}
\end{algorithm}
\end{figure}

An issue with rejection ABC is that it can become prohibitively expensive for small $\epsilon$, especially when the data is high-dimensional. As an illustration, consider the case where $\norm{\cdot}$ is the maximum norm, and hence the acceptance region $B_{\epsilon}\br{\vect{x}_o}$ is a cube of side $2\epsilon$ centred at $\vect{x}_o$. For small $\epsilon$, the acceptance probability can be approximated as follows:
\begin{equation}
\Prob{\norm{\vect{x}-\vect{x}_o}\le \epsilon\g\bm{\theta}} \approx \prob{\vect{x}_o\g\bm{\theta}}\abs{B_{\epsilon}\br{\vect{x}_o}}
= \prob{\vect{x}_o\g\bm{\theta}}\,\br{2\epsilon}^D,
\end{equation}
where $D$ is the dimensionality of $\vect{x}$. As $\epsilon\rightarrow 0$, the acceptance probability approaches zero at a rate of $\bigo{\epsilon^D}$. Moreover, as $D$ grows large, the acceptance probability approaches zero for any $\epsilon<\nicefrac{1}{2}$. To put that into perspective, if $\prob{\vect{x}_o\g\bm{\theta}}=1$, $\epsilon=0.05$ and $D=15$, we would need on average about a thousand trillion simulations for each accepted sample!

In practice, in order to maintain the acceptance probability at manageable levels, we typically transform the data into a small number of \emph{features}, also known as \emph{summary statistics}, in order to reduce the dimensionality $D$. If the summary statistics are sufficient for inferring $\bm{\theta}$, then turning data into summary statistics incurs no loss of information about $\bm{\theta}$ \citep[for a formal definition of sufficiency, see e.g.][definition 9.32]{Wasserman:2010:stats}. However, it can be hard to find sufficient summary statistics, and so in practice summary statistics are often chosen based on domain knowledge about the inference task. A large section of the ABC literature is devoted to constructing good summary statistics \citep[see e.g.][and references therein]{Fearnhead:2012:semi_automatic_abc, Blum:2013:dim_reduction_review, Prangle:2014:semi_automatic_abc_model_choice, Charnock:2018:IMNN}.

So far I've described rejection ABC as a rejection-sampling algorithm for the approximate posterior obtained by replacing  $\prob{\vect{x}_o\g\bm{\theta}}$ by the average conditional density near $\vect{x}_o$. An alternative way to view rejection ABC is as a kernel density estimate of the joint density $\prob{\bm{\theta}, \vect{x}}$ that has been conditioned on $\vect{x}_o$. Let $\set{\pair{\bm{\theta}_1}{\vect{x}_1}, \ldots, \pair{\bm{\theta}_N}{{\vect{x}_N}}}$ be a set of joint samples from $\prob{\bm{\theta}, \vect{x}}$, obtained as follows:
\begin{equation}
\bm{\theta}_n \sim \prob{\bm{\theta}}
\quad\text{and}\quad
\vect{x}_n \sim \prob{\vect{x}\g\bm{\theta}_n}.
\end{equation}
Let $k_{\epsilon}\br{\cdot}$ be a smoothing kernel, and let $\dirac{\cdot}$ be the delta distribution. We form the following approximation to the joint density, which is a kernel density estimate in $\vect{x}$ and an empirical distribution in $\bm{\theta}$:
\begin{equation}
\hat{p}\br{\bm{\theta}, \vect{x}} = \frac{1}{N}\sum_n k_{\epsilon}\br{\vect{x} - \vect{x}_n}\dirac{\bm{\theta} - \bm{\theta}_n}.
\end{equation}
By conditioning on $\vect{x}_o$ and normalizing, we obtain an approximate posterior in the form of a weighted empirical distribution:
\begin{equation}
\hat{p}\br{\bm{\theta}\g\vect{x}=\vect{x}_o} = \frac{\hat{p}\br{\bm{\theta}, \vect{x}_o}}{\integral{\hat{p}\br{\bm{\theta}, \vect{x}_o}}{\bm{\theta}}}
= \frac{\sum_n k_{\epsilon}\br{\vect{x}_o - \vect{x}_n}\dirac{\bm{\theta} - \bm{\theta}_n}}{\sum_n k_{\epsilon}\br{\vect{x}_o - \vect{x}_n}}.
\label{eq:efree:smooth_rej_abc}
\end{equation}
If we use the following uniform smoothing kernel:
\begin{equation}
k_{\epsilon}\br{\vect{u}} \propto I\br{\norm{\vect{u}} \le \epsilon} = \begin{cases}
1& \norm{\vect{u}} \le \epsilon\\
0& \mathrm{otherwise},
\end{cases}
\end{equation}
then $\hat{p}\br{\bm{\theta}\g\vect{x}=\vect{x}_o}$ is simply the empirical distribution of samples returned by rejection ABC\@. The above can be seen as a way to generalize rejection ABC to \emph{smooth-rejection ABC}, where we weight each prior sample $\bm{\theta}_n$ by a smoothing kernel $k_{\epsilon}\br{\vect{x}_o - \vect{x}_n}$ instead of accepting or rejecting it \citep{Beaumont:2002:abc_pop_gen}.

Additionally, we can interpret $\hat{p}\br{\bm{\theta}\g\vect{x}=\vect{x}_o}$ as the empirical distribution of the exact posterior of a different model, as given by importance sampling. Consider an alternative simulator model, obtained by adding noise $\vect{u}\sim k_{\epsilon}\br{\vect{u}}$ to the output $\vect{x}$ of the original simulator:
\begin{align}
\vect{x} &\sim \prob{\vect{x}\g\bm{\theta}}\\
\vect{u} &\sim k_{\epsilon}\br{\vect{u}}\\
\vect{x}' &= \vect{x} + \vect{u}.
\end{align}
Suppose we observed $\vect{x}'=\vect{x}_o$, and want to perform inference on $\bm{\theta}$. The posterior $\prob{\bm{\theta}, \vect{x}\g\vect{x}'=\vect{x}_o}$ can be written as:
\begin{equation}
\prob{\bm{\theta}, \vect{x}\g\vect{x}'=\vect{x}_o} \propto k_{\epsilon}\br{\vect{x}_o - \vect{x}}\,\prob{\bm{\theta}, \vect{x}}.
\end{equation}
We will form a weighted empirical distribution of the above posterior using importance sampling, with $\prob{\bm{\theta}, \vect{x}}$ as the proposal.
First, we sample $\set{\pair{\bm{\theta}_1}{\vect{x}_1}, \ldots, \pair{\bm{\theta}_N}{{\vect{x}_N}}}$ from $\prob{\bm{\theta}, \vect{x}}$, and then we form the following importance-weighted empirical distribution:
\begin{equation}
\hat{p}\br{\bm{\theta}, \vect{x}\g\vect{x}'=\vect{x}_o} = \frac{\sum_n k_{\epsilon}\br{\vect{x}_o - \vect{x}_n}\dirac{\bm{\theta} - \bm{\theta}_n}\dirac{\vect{x} - \vect{x}_n}}{\sum_n k_{\epsilon}\br{\vect{x}_o - \vect{x}_n}}.
\end{equation}
By marginalizing out $\vect{x}$ (i.e.~by discarding samples $\vect{x}_1, \ldots, \vect{x}_N$), we obtain the weighted empirical distribution $\hat{p}\br{\bm{\theta}\g\vect{x}=\vect{x}_o}$ as given by equation \eqref{eq:efree:smooth_rej_abc}. This shows that smooth-rejection ABC, and by extension rejection ABC, can be understood as sampling from the exact posterior of an alternative model. In this interpretation, $\epsilon$ controls the extent to which the alternative model is different to the original one. This interpretation of rejection ABC as sampling from a different model was proposed by \citet{Wilkinson:2013:abc_wrong_model} using rejection sampling; here I provide an alternative perspective using importance sampling instead.

\subsection{Markov-chain Monte Carlo ABC}
\label{sec:efree:abc:mcmc_abc}

As we saw in the previous section, rejection ABC proposes parameters from the prior $\prob{\bm{\theta}}$, and only accepts parameters that are likely under the approximate posterior $\prob{\bm{\theta}\g\norm{\vect{x}-\vect{x}_o}\le\epsilon}$. If the approximate posterior is significantly narrower than the prior, as is often the case in practice, a large fraction of proposed parameters will be rejected.

An alternative approach that can lead to fewer rejections is to propose parameters using Markov-chain Monte Carlo in the form of \emph{Metropolis--Hastings}. Instead of proposing parameters independently from the prior, Metropolis--Hastings proposes new parameters by e.g.~perturbing previously accepted parameters. If the perturbation is not too large, the proposed parameters are likely to be accepted too, which can lead to fewer rejections compared to proposing from the prior.

A likelihood-based Metropolis--Hastings algorithm that targets $\prob{\bm{\theta}\g\norm{\vect{x}-\vect{x}_o}\le\epsilon}$ is shown in algorithm \algref{alg:efree:mh}. The algorithm uses a proposal distribution $q\br{\bm{\theta}'\g\bm{\theta}}$ from which it proposes new parameters $\bm{\theta}'$ based on previously accepted parameters $\bm{\theta}$. Then, the following quantity is calculated, known as the \emph{acceptance ratio}:
\begin{equation}
\alpha = \frac
{\Prob{\norm{\vect{x}-\vect{x}_o}\le \epsilon\g\bm{\theta}'}\,\prob{\bm{\theta}'}\,q\br{\bm{\theta}\g\bm{\theta}'}}
{\Prob{\norm{\vect{x}-\vect{x}_o}\le \epsilon\g\bm{\theta}}\,\prob{\bm{\theta}}\,q\br{\bm{\theta}'\g\bm{\theta}}}.
\end{equation}
Finally, the algorithm outputs the proposed parameters $\bm{\theta}'$ with probability $\min\br{1, \alpha}$, otherwise it outputs the previous parameters $\bm{\theta}$.
It can be shown that this procedure leaves the approximate posterior invariant; that is, if $\bm{\theta}$ is a sample from $\prob{\bm{\theta}\g\norm{\vect{x}-\vect{x}_o}\le\epsilon}$, then the algorithm's output is also a sample from $\prob{\bm{\theta}\g\norm{\vect{x}-\vect{x}_o}\le\epsilon}$ \citep[for a proof, see][section 11.2.2]{Bishop:2006:prml}.

\begin{figure}[h]
\alglabel{alg:efree:mh}
\begin{algorithm}[H]
\DontPrintSemicolon
\setstretch{1.2}
Propose $\bm{\theta}'\sim q\br{\bm{\theta}'\g\bm{\theta}}$\;
Calculate acceptance ratio $\alpha = \frac
{\Prob{\norm{\vect{x}-\vect{x}_o}\le \epsilon\g\bm{\theta}'}\,\prob{\bm{\theta}'}\,q\br{\bm{\theta}\g\bm{\theta}'}}
{\Prob{\norm{\vect{x}-\vect{x}_o}\le \epsilon\g\bm{\theta}}\,\prob{\bm{\theta}}\,q\br{\bm{\theta}'\g\bm{\theta}}}$\;
Sample $u \sim \uniform{0}{1}$\;
\If{$u\le \alpha$}{\Return $\bm{\theta}'$\;}
\Else{\Return $\bm{\theta}$\;}
\caption{Likelihood-based Metropolis--Hastings\label{alg:efree:mh}}
\end{algorithm}
\end{figure}

In the likelihood-free setting, we cannot evaluate the approximate likelihood $\Prob{\norm{\vect{x}-\vect{x}_o}\le \epsilon\g\bm{\theta}}$, but we can estimate it as the fraction of simulated data that are at most distance $\epsilon$ away from the observed data $\vect{x}_o$:
\begin{equation}
\Prob{\norm{\vect{x}-\vect{x}_o}\le \epsilon\g\bm{\theta}} \approx \frac{1}{N}\sum_n I\br{\norm{\vect{x}_n-\vect{x}_o}\le \epsilon}
\quad\text{where}\quad
\vect{x}_n\sim \prob{\vect{x}\g\bm{\theta}}.
\label{eq:efree:pmh_estimate}
\end{equation}
The above estimate is unbiased for any number of simulations $N$, that is, its expectation is equal to $\Prob{\norm{\vect{x}-\vect{x}_o}\le \epsilon\g\bm{\theta}}$. It can be shown that Metropolis--Hastings leaves the target distribution invariant even if we use an unbiased estimate of the target distribution instead of its exact value in the acceptance ratio, as long as the estimate itself is made part of the Markov chain
\citep[for a proof, see][]{Andrieu:2009:pseudomarginal}. This means that we can use the above unbiased estimate of the approximate likelihood in the acceptance ratio and obtain a valid likelihood-free Metropolis--Hastings algorithm. This algorithm, which I refer to as \emph{pseudo-marginal Metropolis--Hastings}, is shown in algorithm \algref{alg:efree:pmh}.

\begin{figure}[h]
\alglabel{alg:efree:pmh}
\begin{algorithm}[H]
\DontPrintSemicolon
\setstretch{1.2}
Propose $\bm{\theta}'\sim q\br{\bm{\theta}'\g\bm{\theta}}$\;
\For{$n=1:N$}{Simulate $\vect{x}_n\sim\prob{\vect{x}\g\bm{\theta'}}$\;}
Calculate approximate-likelihood estimate $L' = \frac{1}{N}\sum_n I\br{\norm{\vect{x}_n-\vect{x}_o}\le \epsilon}$\;
Calculate acceptance ratio $\alpha = \frac
{L'\,\prob{\bm{\theta}'}\,q\br{\bm{\theta}\g\bm{\theta}'}}
{L\,\prob{\bm{\theta}}\,q\br{\bm{\theta}'\g\bm{\theta}}}$\;
Sample $u \sim \uniform{0}{1}$\;
\If{$u\le \alpha$}{\Return $\pair{\bm{\theta}'}{L'}$\;}
\Else{\Return $\pair{\bm{\theta}}{L}$\;}
\caption{Pseudo-marginal Metropolis--Hastings\label{alg:efree:pmh}}
\end{algorithm}
\end{figure}

The estimate in equation \eqref{eq:efree:pmh_estimate} is unbiased for any number of simulations $N$, hence pseudo-marginal Metropolis--Hastings is valid for any $N$. For larger $N$, the approximate-likelihood estimate has lower variance, but the runtime of the algorithm increases as more simulations are required per step. Overall, as discussed by \citet{Bornn:2017:single_pseudosample}, the efficiency of the algorithm doesn't necessarily increase with $N$, and so $N=1$ is usually good enough. In the case of $N=1$, i.e.~where only one datapoint $\vect{x}$ is simulated from $\prob{\vect{x}\g\bm{\theta}'}$ per step, and assuming $\bm{\theta}$ is a previously accepted sample, the acceptance ratio simplifies into the following:
\begin{equation}
\alpha = \begin{cases}
\frac
{\prob{\bm{\theta}'}\,q\br{\bm{\theta}\g\bm{\theta}'}}
{\prob{\bm{\theta}}\,q\br{\bm{\theta}'\g\bm{\theta}}} & \text{if }\norm{\vect{x} - \vect{x}_o} \le \epsilon,\\[0.4em]
0 & \text{otherwise.}
\end{cases}
\end{equation}
For $N=1$, pseudo-marginal Metropolis--Hastings  can be simplified into algorithm \algref{alg:efree:mcmc_abc}, which is known as \emph{Markov-chain Monte Carlo ABC}\@. MCMC-ABC was proposed by \citet{Marjoram:2003:mcmc_abc} prior to the introduction of pseudo-marginal MCMC by \citet{Andrieu:2009:pseudomarginal}. Here I provide an alternative perspective, where I describe MCMC-ABC as pseudo-marginal MCMC\@.

\begin{figure}[h]
\alglabel{alg:efree:mcmc_abc}
\begin{algorithm}[H]
\DontPrintSemicolon
\setstretch{1.2}
Propose $\bm{\theta}'\sim q\br{\bm{\theta}'\g\bm{\theta}}$\;
Simulate $\vect{x}\sim\prob{\vect{x}\g\bm{\theta'}}$\;
\If{$\norm{\vect{x}-\vect{x}_o}\le \epsilon$}{
Calculate acceptance ratio $\alpha = \frac
{\prob{\bm{\theta}'}\,q\br{\bm{\theta}\g\bm{\theta}'}}
{\prob{\bm{\theta}}\,q\br{\bm{\theta}'\g\bm{\theta}}}$\;
Sample $u \sim \uniform{0}{1}$\;
\If{$u\le \alpha$}{\Return $\bm{\theta}'$\;}
\Else{\Return $\bm{\theta}$\;}
}
\Else{\Return $\bm{\theta}$\;}
\caption{Markov-chain Monte Carlo ABC\label{alg:efree:mcmc_abc}}
\end{algorithm}
\end{figure}

Compared to rejection ABC, which proposes from the prior and thus can lead to most proposals being rejected, MCMC-ABC tends to accept more often. As discussed earlier, we can think of the proposal $q\br{\bm{\theta}'\g\bm{\theta}}$ as randomly `perturbing' accepted parameters $\bm{\theta}$ in order to propose new parameters $\bm{\theta}'$; if the perturbation is small, the proposed parameters are also likely to be accepted. However, unlike rejection ABC which produces independent samples, MCMC-ABC produces dependent samples, which can lead to low effective sample size.

Similarly to rejection ABC, the acceptance probability of MCMC-ABC vanishes as $\epsilon$ becomes small or as the data dimensionality becomes large. In practice, the data is typically transformed into summary statistics in order to reduce the data dimensionality. Another issue with MCMC-ABC is the initialization of the Markov chain: if the initial parameters $\bm{\theta}$ are unlikely to generate data near $\vect{x}_o$, the chain may be stuck at its initial state for a long time. In practice, one possibility is to run rejection ABC until one parameter sample is accepted, and initialize the Markov chain with that parameter sample.

\subsection{Sequential Monte Carlo ABC}
\label{sec:efree:abc:smc_abc}

As we have seen, the low acceptance rate of rejection ABC is partly due to parameters being proposed from the prior $\prob{\bm{\theta}}$. Indeed, if the approximate posterior $\prob{\bm{\theta}\g\norm{\vect{x}-\vect{x}_o}\le\epsilon}$ is significantly narrower than the prior, it's highly unlikely that a prior sample will be accepted. In the previous section, I discussed an approach for increasing the acceptance rate based on perturbing accepted parameters using Markov-chain Monte Carlo. In this section, I will discuss an alternative approach based on importance sampling.

To begin with, consider a variant of rejection ABC where instead of proposing parameters from the prior $\prob{\bm{\theta}}$, we propose parameters from an alternative distribution $\tilde{p}\br{\bm{\theta}}$. Parameter samples accepted under this procedure will be distributed according to the following distribution:
\begin{equation}
\tilde{p}\br{\bm{\theta}\g \norm{\vect{x}-\vect{x}_o}\le\epsilon}\propto
\Prob{\norm{\vect{x}-\vect{x}_o}\le\epsilon\g\bm{\theta}}\,\tilde{p}\br{\bm{\theta}}.
\end{equation}
Assuming $\tilde{p}\br{\bm{\theta}}$ is non-zero in the support of the prior, we can approximate $\prob{\bm{\theta}\g\norm{\vect{x}-\vect{x}_o}\le\epsilon}$ with a population of $N$ importance-weighted samples $\set{\pair{w_1}{\bm{\theta}_1}, \ldots, \pair{w_N}{\bm{\theta}_N}}$, where each $\bm{\theta}_n$ is obtained using the above procedure, and $\set{w_1, \ldots, w_N}$ are importance weights that are normalized to sum to $1$ and are calculated by:
\begin{equation}
w_n \propto \frac{\prob{\bm{\theta}_n\g\norm{\vect{x}-\vect{x}_o}\le\epsilon}}{\tilde{p}\br{\bm{\theta}_n\g \norm{\vect{x}-\vect{x}_o}\le\epsilon}}
\propto\frac{\Prob{\norm{\vect{x}-\vect{x}_o}\le\epsilon\g\bm{\theta}_n}\,\prob{\bm{\theta}_n}}{\Prob{\norm{\vect{x}-\vect{x}_o}\le\epsilon\g\bm{\theta}_n}\,\tilde{p}\br{\bm{\theta}_n}}
 = \frac{\prob{\bm{\theta}_n}}{\tilde{p}\br{\bm{\theta}_n}}.
\end{equation}
The above procedure, which I refer to as \emph{importance-sampling ABC}, is shown in algorithm \algref{alg:efree:is_abc}. Importance-sampling ABC reduces to rejection ABC if $\tilde{p}\br{\bm{\theta}} = \prob{\bm{\theta}}$.

\begin{figure}[h]
\alglabel{alg:efree:is_abc}
\begin{algorithm}[H]
\DontPrintSemicolon
\setstretch{1.2}
\For{$n=1:N$}{
\Repeat{$\norm{\vect{x} - \vect{x}_o}\le \epsilon$}{
Sample $\bm{\theta}_n\sim\tilde{p}\br{\bm{\theta}}$\;
Simulate $\vect{x}\sim\prob{\vect{x}\g\bm{\theta}_n}$\;
}
}
Calculate importance weights $w_n \propto \frac{\prob{\bm{\theta}_n}}{\tilde{p}\br{\bm{\theta}_n}}$ such that they sum to $1$\;
\Return $\set{\pair{w_1}{\bm{\theta}_1}, \ldots, \pair{w_N}{\bm{\theta}_N}}$\;
\caption{Importance-sampling ABC\label{alg:efree:is_abc}}
\end{algorithm}
\end{figure}

The efficiency of importance-sampling ABC depends heavily on the choice of the proposal $\tilde{p}\br{\bm{\theta}}$. If the proposal is too broad the acceptance rate will be low, whereas if the proposal is too narrow the importance weights will have high variance \citep[see][for a discussion on the optimality of the proposal]{Alsing:2018:optimal_abc_proposal}.
In practice, one approach for constructing a proposal is to perform a preliminary run of importance-sampling ABC with $\epsilon' >\epsilon$, obtain a population of importance-weighted parameter samples $\set{\pair{w_1}{\bm{\theta}_1}, \ldots, \pair{w_N}{\bm{\theta}_N}}$, and take $\tilde{p}\br{\bm{\theta}}$ to be a mixture distribution defined by:
\begin{equation}
\tilde{p}\br{\bm{\theta}} = \sum_n w_n \,q\br{\bm{\theta}\g\bm{\theta}_n}.
\end{equation}
Similar to MCMC-ABC, $q\br{\bm{\theta}\g\bm{\theta}_n}$ can be thought of as a way to perturb $\bm{\theta}_n$. For instance, a common choice is to take $q\br{\bm{\theta}\g\bm{\theta}_n}$ to be a Gaussian distribution centred at $\bm{\theta}_n$, which corresponds to adding zero-mean Gaussian noise to $\bm{\theta}_n$.

Based on the above idea, we can run a sequence of $T$ rounds of importance-sampling ABC with $\epsilon_1 > \ldots > \epsilon_T$, and construct the proposal for round $t$ using the importance-weighted population obtained in round $t-1$. The first round can use the prior $\prob{\bm{\theta}}$ as proposal and a large enough $\epsilon_1$ so that the acceptance rate is not too low. This procedure is known as \emph{sequential Monte Carlo ABC}, and is shown in algorithm \algref{alg:efree:smc_abc}. Further discussion on SMC-ABC and different variants of the algorithm are provided by \citet{Sisson:2007:smc_abc, Bonassi:2015:smc_abc, Beaumont:2009:smc_abc, Toni:2009:smc_abc}.

\begin{figure}[h]
\alglabel{alg:efree:smc_abc}
\begin{algorithm}[H]
\DontPrintSemicolon
\setstretch{1.2}
Run rejection ABC with tolerance $\epsilon_1$\;
Obtain initial population $\set{\bm{\theta}_1^{\br{1}}, \ldots, \bm{\theta}_N^{\br{1}}}$\;
Set initial weights $w_n^{\br{1}}\propto 1$\;
\For{$t=2:T$}{
Set proposal $\tilde{p}\br{\bm{\theta}} = \sum_n w_n^{\br{t-1}} \,q\br{\bm{\theta}\g\bm{\theta}_n^{\br{t-1}}}$\;
Run importance-sampling ABC with tolerance $\epsilon_t$ and proposal $\tilde{p}\br{\bm{\theta}}$\;
Obtain weighted population $\set{\pair{w_1^{\br{t}}}{\bm{\theta}_1^{\br{t}}}, \ldots, \pair{w_N^{\br{t}}}{\bm{\theta}_N^{\br{t}}}}$\;
Estimate effective sample size $\hat{S} = \br{\sum_n w_n^2}^{-1}$\;
\If{$\hat{S}<S_\mathrm{min}$}{
Resample $\set{\bm{\theta}_1^{\br{t}}, \ldots, \bm{\theta}_N^{\br{t}}}$ with probabilities $\set{w_1^{\br{t}}, \ldots, w_N^{\br{t}}}$\;
Set weights $w_n^{\br{t}}\propto 1$\;
}
}
\Return $\set{\pair{w_1^{\br{T}}}{\bm{\theta}_1^{\br{T}}}, \ldots, \pair{w_N^{\br{T}}}{\bm{\theta}_N^{\br{T}}}}$\;
\caption{Sequential Monte Carlo ABC\label{alg:efree:smc_abc}}
\end{algorithm}
\end{figure}

An issue with sequential Monte Carlo more broadly (and not just in the context of ABC) is \emph{sample degeneracy}, which occurs when only few samples have non-negligible weights. If that's the case, the effective sample size, i.e.~the number of independent samples that would be equivalent to the weighted population, is significantly less than the population size $N$. One strategy for avoiding sample degeneracy, which is shown in algorithm \algref{alg:efree:smc_abc}, is to estimate the effective sample size after each round of importance-sampling ABC; if the estimated effective sample size is less than a threshold (e.g.~$N/2$), we resample the population with probabilities given by the weights in order to obtain a new population with equal weights. In the context of ABC in particular, the above resampling may not be necessary, since sampling the next population from the proposal distribution already diversifies its samples. In any case, it's a good idea to monitor the effective sample size of each population as the algorithm progresses, as a consistently low sample size can serve as a warning of potentially poor performance. In practice, the effective sample size $S$ of a weighted population $\set{\pair{w_1}{\bm{\theta}_1}, \ldots, \pair{w_N}{\bm{\theta}_N}}$ can be estimated by:
\begin{equation}
\hat{S} = \frac{1}{\sum_n w_n^2}.
\end{equation}
A justification of the above estimate is provided by \citet{Nowozin:2015:ess}.

SMC-ABC is a strong baseline for likelihood-free inference; its acceptance rate is typically higher than rejection ABC, and it can be easier to tune  than MCMC-ABC\@. One way to use SMC-ABC in practice is to start with a tolerance $\epsilon_1$ that is large enough for rejection ABC to be practical, and exponentially decay it after each round in order to obtain increasingly accurate posterior samples.
Nevertheless, even though SMC-ABC can produce a good proposal distribution after a few rounds, it doesn't solve the problem of the acceptance probability eventually vanishing as $\epsilon$ becomes small. In practice,  as $\epsilon$ decreases in every round, the required number of simulations increases, and can thus become prohibitively large after a few rounds.

\section{The paper}
\label{sec:efree:paper}

This section presents the paper \emph{Fast $\epsilon$-free Inference of Simulator Models with Bayesian Conditional Density Estimation}, which is the main contribution of this chapter. In the paper, we cast likelihood-free inference as density-estimation problem: we estimate the posterior density using a neural density estimator trained on data simulated from the model. We employ a Bayesian mixture-density network as our neural density estimator, and we describe how to use the network in order to guide simulations and thus accelerate the training process.

The paper was first published as a preprint on arXiv in May 2016. Afterwards, it was accepted for publication at the conference \emph{Advances in Neural Information Processing Systems (NeurIPS)} in December 2016. There were about $2{,}500$ submissions to NeurIPS in 2016, of which $568$ were accepted for publication.

\subsubsection{Author contributions}

The paper is co-authored by me and Iain Murray. We jointly conceived and developed the method. As the leading author, I wrote the code, performed the experiments, and drafted the paper. Iain Murray supervised the project, and revised the final draft.

\subsubsection{Differences in notation}

For precision, in this chapter I use $\prob{\cdot}$ for probability densities and $\Prob{\cdot}$ for probability measures. The paper doesn't make this distinction; instead, it uses the term `probability' for both probability densities and probability measures, and denotes both of them by $\prob{\cdot}$ or similar. This overloaded notation, common in the machine-learning literature, should be interpreted appropriately depending on context.

Throughout this chapter I use $\norm{\vect{x}-\vect{x}_o}\le\epsilon$ to denote the acceptance condition of ABC; the paper uses $\norm{\vect{x}-\vect{x}_o}<\epsilon$ instead. The difference is immaterial, since the set $\set{\vect{x}: \norm{\vect{x}-\vect{x}_o}=\epsilon}$ has zero probability measure by assumption.

Finally, the paper uses $\langle f\br{\vect{x}}\rangle_{\prob{\vect{x}}}$ to denote the expectation of a function $f\br{\vect{x}}$ with respect to a distribution $\prob{\vect{x}}$, whereas the rest of the thesis uses $\avgx{f\br{\vect{x}}}{\prob{\vect{x}}}$.

\subsubsection{Minor corrections}

Section 2.1 of the paper states that $\vect{x}=\vect{x}_o$ corresponds to setting $\epsilon=0$ in $\norm{\vect{x}-\vect{x}_o}<\epsilon$. Strictly speaking, this statement is incorrect, since $\set{\vect{x}: \norm{\vect{x}-\vect{x}_o}<0}$ is the empty set. For this reason, in this chapter I use $\norm{\vect{x}-\vect{x}_o}\le\epsilon$, such that setting $\epsilon$ to zero correctly implies $\vect{x}=\vect{x}_o$.

Section 4 of the paper misrepresents the work of \citet{Fan:2013} as a synthetic likelihood method that trains ``a separate density model of
$\vect{x}$ for each $\bm{\theta}$ by repeatedly simulating the model for fixed $\bm{\theta}$''. This is incorrect; in reality, \citet{Fan:2013} use a single conditional-density model of $\vect{x}$ given $\bm{\theta}$ that is trained on pairs $\pair{\bm{\theta}}{\vect{x}}$ independently simulated from a joint density $r\br{\bm{\theta}}\,\prob{\vect{x}\g\bm{\theta}}$, where $r\br{\bm{\theta}}$ is some reference density.

\includepdf[pages=-]{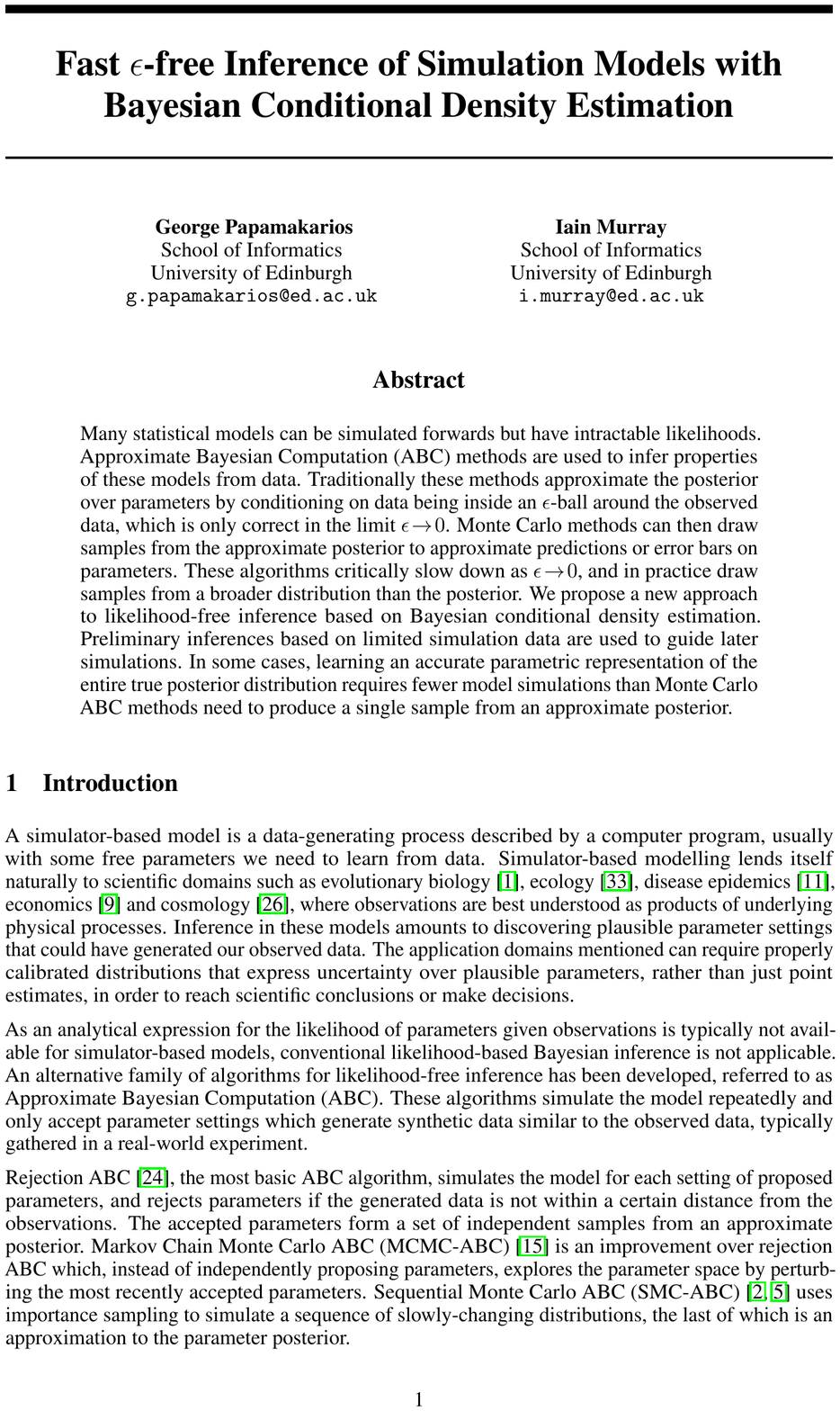}

\section{Discussion}
\label{sec:efree:discussion}

I conclude this chapter with a discussion of the paper \emph{Fast $\epsilon$-free Inference of Simulator Models with Bayesian Conditional Density Estimation}, which was presented in the previous section. In what follows, I will discuss the paper's contribution and impact so far, I will compare the proposed method with regression adjustment, I will criticize the paper's limitations, and I will discuss further advances inspired by the proposed method.

\subsection{Contribution and impact}

The paper establishes neural density estimation as a viable method for likelihood-free inference. Inference is viewed as a density-estimation problem, where the unknown posterior is the target density to estimate, and the simulator is the source of training data. Unlike ABC methods such as those discussed in section \ref{sec:efree:abc}, likelihood-free inference via neural density estimation doesn't involve accepting/rejecting parameter samples, and consequently it doesn't require specifying a tolerance $\epsilon$. The paper demonstrates that it is possible for neural density models to estimate joint posterior densities accurately, and without the need to replace the exact posterior $\prob{\bm{\theta}\g\vect{x}=\vect{x}_o}$ by an approximate posterior $\prob{\bm{\theta}\g\norm{\vect{x}-\vect{x}_o}\le \epsilon}$ as typically done in ABC\@.

Furthermore, the paper demonstrates that likelihood-free inference via neural density estimation can lead to massive computational savings in simulation cost compared to traditional ABC\@. This is achieved by a sequential training procedure, where preliminary estimates of the posterior are used as proposals for obtaining more training data. This sequential procedure is similar to the one used in sequential Monte Carlo ABC, where new parameters are proposed by perturbing previous posterior samples obtained with a larger $\epsilon$ (as explained in section \ref{sec:efree:abc:smc_abc}). As the paper shows, the proposed method can estimate the joint density of the entire posterior with roughly the same number of simulations as SMC-ABC needs to produce (the equivalent of) a single independent posterior sample.

According to Google Scholar, the paper has received $41$ citations as of \thedate. The proposed method has directly influenced subsequent developments in likelihood-free inference, such as \emph{Sequential Neural Posterior Estimation} \citep{Lueckmann:2017:snpe}, which will be discussed in more detail in section \ref{sec:efree:discussion:snpe}, \emph{Sequential Neural Likelihood} \citep{Papamakarios:2019:snl}, which is the topic of next chapter, and \emph{adaptive Gaussian-copula ABC} \citep{Chen:2019:copula_abc}. Finally, variants of the proposed method have been used in applications of likelihood-free inference in cosmology \citep{Alsing:2018:lfi_cosmology, Alsing:2019:lfi_cosmology} and in neuroscience \citep{Lueckmann:2017:snpe}.

\subsection{Comparison with regression adjustment}

As mentioned in section 4 of the paper, \emph{regression adjustment} is closely related to our work, and can be seen as a predecessor to likelihood-free inference via neural density estimation. Here, I discuss in more depth what regression adjustment is, in what way it is similar to neural density estimation, and also how it differs.

Regression adjustment is typically performed as a post-processing step after rejection ABC, to correct for the approximation incurred by using a non-zero tolerance $\epsilon$. Suppose we run rejection ABC with some tolerance $\epsilon$, until we obtain a set of $N$ joint samples $\set{\pair{\bm{\theta}_1}{\vect{x}_1}, \ldots, \pair{\bm{\theta}_N}{\vect{x}_N}}$. As we have seen, each pair $\pair{\bm{\theta}_n}{\vect{x}_n}$ is an independent joint sample from the following distribution:
\begin{equation}
\prob{\bm{\theta},\vect{x}\g\norm{\vect{x}-\vect{x}_o}\le\epsilon}\propto
I\br{\norm{\vect{x}-\vect{x}_o}\le\epsilon}\,\prob{\vect{x}\g\bm{\theta}}\,\prob{\bm{\theta}}.
\end{equation}
Marginally, each $\bm{\theta}_n$ is an independent sample from the approximate posterior $\prob{\bm{\theta}\g\norm{\vect{x}-\vect{x}_o}\le\epsilon}$. If $\epsilon =\infty$ then all simulations are accepted, in which case each $\bm{\theta}_n$ is an independent sample from the prior $\prob{\bm{\theta}}$. The goal of regression adjustment is to transform each sample $\bm{\theta}_n$ into an `adjusted' sample $\bm{\theta}'_n$, such that $\bm{\theta}'_n$ is an independent sample from the exact posterior $\prob{\bm{\theta}\g\vect{x}=\vect{x}_o}$.

The first step of regression adjustment is to estimate the stochastic mapping from $\vect{x}$ to $\bm{\theta}$, where $\vect{x}$ and $\bm{\theta}$ are obtained by rejection ABC and are jointly distributed according to $\prob{\bm{\theta},\vect{x}\g\norm{\vect{x}-\vect{x}_o}\le\epsilon}$ as described above. Regression adjustment models this mapping with a parametric regressor of the following form:
\begin{equation}
\bm{\theta} = f_{\bm{\phi}}\br{\vect{u}, \vect{x}},
\end{equation}
where $\vect{u}$ is random noise from some distribution $\pi_u\br{\vect{u}}$ that doesn't depend on $\vect{x}$, and $\bm{\phi}$ are the parameters of the regressor. The random noise $\vect{u}$ accounts for the stochasticity of the mapping from $\vect{x}$ to $\bm{\theta}$. We don't need to specify the noise distribution $\pi_u\br{\vect{u}}$, but we need to design the function $f_{\bm{\phi}}\br{\cdot, \vect{x}}$ to be invertible, so that given $\bm{\theta}$ and $\vect{x}$ we can easily solve for $\vect{u}$. For example, \citet{Beaumont:2002:abc_pop_gen} used the following linear regressor:
\begin{equation}
\bm{\theta} = \mat{A}\vect{x} + \bm{\beta} + \vect{u}
\quad\text{with}\quad
\bm{\phi} = \set{\mat{A}, \bm{\beta}},
\label{eq:efree:reg_abc_linear}
\end{equation}
whereas \citet{Blum:2010:reg_abc} used the following non-linear regressor:
\begin{equation}
\bm{\theta} = g_{\bm{\phi}_1}\br{\vect{x}} + h_{\bm{\phi}_2}\br{\vect{x}} \odot \vect{u}
\quad\text{with}\quad
\bm{\phi} = \set{\bm{\phi}_1, \bm{\phi}_2}.
\label{eq:efree:reg_abc_nonlinear}
\end{equation}
In the above, $g_{\bm{\phi}_1}$ and $h_{\bm{\phi}_2}$ are neural networks parameterized by $\bm{\phi}_1$ and $\bm{\phi}_2$, and $\odot$ denotes the elementwise product.

In practice, the regressor $f_{\bm{\phi}}$ is trained on the set of samples $\set{\pair{\bm{\theta}_1}{\vect{x}_1}, \ldots, \pair{\bm{\theta}_N}{\vect{x}_N}}$ obtained by rejection ABC\@. Assuming the noise $\vect{u}$ has zero mean, the linear regressor in equation \eqref{eq:efree:reg_abc_linear} can be trained by minimizing the average squared error on the parameters as follows:
\begin{equation}
\br{\mat{A}\!^*, \bm{\beta}^*} = \argmin_{\mat{A}, \bm{\beta}}\,\frac{1}{N}\sum_n \norm{\bm{\theta}_n -  \mat{A}\vect{x}_n - \bm{\beta}}^2.
\end{equation}
The above optimization problem has a unique solution in closed form.
The non-linear regressor in equation \eqref{eq:efree:reg_abc_nonlinear} can be trained in two stages. In the first stage, we assume that $\vect{u}$ has zero mean, and we fit $\bm{\phi}_1$ by minimizing the average squared error on the parameters:
\begin{equation}
\bm{\phi}_1^* = \argmin_{\bm{\phi}_1}\,\frac{1}{N}\sum_n \norm{\bm{\theta}_n -  g_{\bm{\phi}_1}\br{\vect{x}_n}}^2.
\end{equation}
In the second stage, we assume that $\log{\abs{\vect{u}}}$ has zero mean, and we fit $\bm{\phi}_2$ by minimizing the average squared error on the log absolute residuals:
\begin{equation}
\bm{\phi}_2^* = \argmin_{\bm{\phi}_2}\,\frac{1}{N}\sum_n \norm{\log\abs{\bm{\theta}_n - g_{\bm{\phi}_1^*}\br{\vect{x}_n}} -  \log\abs{h_{\bm{\phi}_2}\br{\vect{x}_n}}}^2,
\end{equation}
where the log and absolute-value operators are meant to be understood elementwise.
The above two optimization problems don't have a known closed-form solution, but can be approximately solved by gradient methods.

The second step of regression adjustment is to use the trained regressor $f_{\bm{\phi}^*}$ to adjust the parameter samples $\set{\bm{\theta}_1, \ldots, \bm{\theta}_N}$ obtained by rejection ABC\@. First, we can obtain $N$ independent samples $\set{\vect{u}_1, \ldots, \vect{u}_N}$ from $\pi_u\br{\vect{u}}$ by solving $\bm{\theta}_n = f_{\bm{\phi}^*}\br{\vect{u}_n, \vect{x}_n}$ for $\vect{u}_n$ separately for each $n$. Then, assuming $f_{\bm{\phi}^*}$ captures the exact stochastic mapping from $\vect{x}$ to $\bm{\theta}$, we can generate $N$ samples $\set{\bm{\theta}'_1, \ldots, \bm{\theta}'_N}$ from the exact posterior $\prob{\bm{\theta}\g\vect{x}=\vect{x}_o}$ by simply plugging $\vect{u}_n$ and $\vect{x}_o$ back into the regressor as follows:
\begin{equation}
\bm{\theta}'_n = f_{\bm{\phi}^*}\br{\vect{u}_n, \vect{x}_o}.
\end{equation}
For example, using the linear regressor in equation \eqref{eq:efree:reg_abc_linear} we obtain:
\begin{equation}
\bm{\theta}'_n = \mat{A}\!^*\,\br{\vect{x}_o - \vect{x}_n} + \bm{\theta}_n,
\end{equation}
whereas using the non-linear regressor in equation \eqref{eq:efree:reg_abc_nonlinear} we obtain:
\begin{equation}
\bm{\theta}'_n = g_{\bm{\phi}_1^*}\br{\vect{x}_o} + h_{\bm{\phi}_2^*}\br{\vect{x}_o}\odot
\frac{\bm{\theta}_n - g_{\bm{\phi}_1^*}\br{\vect{x}_n}}{h_{\bm{\phi}_2^*}\br{\vect{x}_n}},
\end{equation}
where division is meant to be understood elementwise.

In practice, $f_{\bm{\phi}^*}$ will be an approximation to the exact stochastic mapping from $\vect{x}$ to $\bm{\theta}$, hence the adjusted samples $\set{\bm{\theta}'_1, \ldots, \bm{\theta}'_N}$ will only approximately follow the exact posterior. For the adjusted samples to be a good description of the exact posterior, $f_{\bm{\phi}^*}$ needs to be a good model of the stochastic mapping from $\vect{x}$ to $\bm{\theta}$ at least within a distance $\epsilon$ away from $\vect{x}_o$. Consequently, a less flexible regressor (such as the linear regressor described above) requires a smaller $\epsilon$, whereas a more flexible regressor can be used with a larger $\epsilon$.

Similar to regression adjustment, likelihood-free inference via neural density estimation is based on modelling the stochastic relationship between $\bm{\theta}$ and $\vect{x}$. The difference is in the type of model employed and in the way the model is used. Neural density estimation uses a conditional-density model $q_{\bm{\phi}}\br{\bm{\theta}\g\vect{x}}$ (which in the paper is taken to be a mixture-density network) with the following characteristics:
\begin{enumerate}[label=(\roman*)]
\item We can calculate the conditional density under $q_{\bm{\phi}}\br{\bm{\theta}\g\vect{x}}$ for any $\bm{\theta}$ and $\vect{x}$.
\item We can obtain exact independent samples from $q_{\bm{\phi}}\br{\bm{\theta}\g\vect{x}}$ for any  $\vect{x}$.
\end{enumerate}
The first property allows us to train $q_{\bm{\phi}}\br{\bm{\theta}\g\vect{x}}$ by maximum likelihood on simulated data, and thus estimate the density of the posterior. The second property allows us to obtain as many samples from the estimated posterior as we wish.

In contrast, regression adjustment models the stochastic relationship between $\bm{\theta}$ and $\vect{x}$ as follows:
\begin{equation}
\bm{\theta} = f_{\bm{\phi}}\br{\vect{u}, \vect{x}}
\quad\text{where}\quad
\vect{u}\sim\pi_u\br{\vect{u}}.
\label{eq:efree:reg_abc_model}
\end{equation}
We can think of the above as a reparameterized version of a neural density estimator in terms of its internal random numbers (the chapter on Masked Autoregressive Flow explores this idea in depth). The crucial difference is that regression adjustment only requires fitting the regressor $f_{\bm{\phi}}$, and it entirely avoids modelling the noise distribution $\pi_u\br{\vect{u}}$. As a consequence, regression adjustment can neither calculate densities under the model, nor generate samples from the model at will, as these operations require knowing $\pi_u\br{\vect{u}}$.

In principle,  we can use a neural density estimator $q_{\bm{\phi}}\br{\bm{\theta}\g\vect{x}}$ to perform regression adjustment, as long as we can write $q_{\bm{\phi}}\br{\bm{\theta}\g\vect{x}}$ as an invertible transformation of noise as in equation \eqref{eq:efree:reg_abc_model}. A mixture-density network cannot be easily expressed this way, but a model such as Masked Autoregressive Flow (or any other normalizing flow) is precisely an invertible transformation of noise by design. Hence, we can use a normalizing flow  to estimate the conditional density of $\bm{\theta}$ given $\vect{x}$, and then obtain posterior samples by adjusting simulated data. 
More precisely, given a simulated pair $\pair{\bm{\theta}_n}{\vect{x}_n}$, we can use the trained normalizing flow to compute the noise $\vect{u}_n$ that would 
have generated $\bm{\theta}_n$ given $\vect{x}_n$, and then run the flow with noise $\vect{u}_n$ but this time conditioned on $\vect{x}_o$ to obtain a sample $\bm{\theta}'_n$ from the exact posterior $\prob{\bm{\theta}\g\vect{x}=\vect{x}_o}$.

Nevertheless, I would argue that using a neural density estimator such as a normalizing flow for regression adjustment is an unnecessarily roundabout way of generating posterior samples. If we assume that the flow is a good model of the stochastic relationship between $\bm{\theta}$ and $\vect{x}$ 
(which is what regression adjustment requires), then we may as well obtain posterior samples by sampling the noise directly from $\pi_u\br{\vect{u}}$ (recall that the flow explicitly models the noise distribution). 
In other words, with neural density estimation there is no need to adjust simulated data to obtain posterior samples; we can obtain as many posterior samples as we like by sampling from the neural density model directly.

\subsection{Limitations and criticism}
\label{sec:efree:discussion:lims}

So far I've discussed the contributions of the paper and the impact the paper has had in the field of likelihood-free inference. In this section, I examine the paper from a critical perspective: I discuss the limitations of the proposed method, and identify its weaknesses.

Recall that the proposed method estimates the posterior in a series of rounds. In each round, parameter samples are proposed from a distribution $\tilde{p}\br{\bm{\theta}}$, then the neural density estimator $q_{\bm{\phi}}\br{\bm{\theta}\g\vect{x}}$ is trained on simulated data, and finally the posterior is estimated as follows:
\begin{equation}
\hat{p}\br{\bm{\theta}\g\vect{x}=\vect{x}_o}\propto\frac{\prob{\bm{\theta}}}{\tilde{p}\br{\bm{\theta}}}\,q_{\bm{\phi}}\br{\bm{\theta}\g\vect{x}_o}.
\label{eq:efree:posterior_est_snpe_a}
\end{equation}
The above posterior estimate is then used as the proposal for the next round, and the algorithm can thus progress for any number of rounds. In the first round, we may use the prior as proposal.

In order to ensure that the computation of the posterior estimate in equation \eqref{eq:efree:posterior_est_snpe_a} is analytically tractable, the proposed method restricts $q_{\bm{\phi}}\br{\bm{\theta}\g\vect{x}}$ to be a conditional Gaussian mixture model (i.e.~a mixture-density network), and assumes that the prior $\prob{\bm{\theta}}$ and the proposal $\tilde{p}\br{\bm{\theta}}$ are also Gaussian. The Gaussian assumption can be relaxed to some extent; equation \eqref{eq:efree:posterior_est_snpe_a} remains analytically tractable if we replace Gaussians with any other exponential family. However, the proposed algorithm is still tied to mixture-density networks and can't employ more advanced neural density estimators, such as Masked Autoregressive Flow. In addition, the algorithm can't be used with arbitrary priors, which limits its applicability.

Even though mixture-density networks can be made arbitrarily flexible, I would argue that, in principle, tying an inference algorithm to a particular neural density estimator should be viewed as a weakness. Having the inference algorithm and the neural density model as separate components of an inference framework encourages modularity and good software design. Furthermore, if an inference algorithm can interface with any neural density model, improvements in neural density estimation (such as those described in the previous chapter) will automatically translate to improvements in the inference algorithm.

An important limitation of the proposed method is that equation \eqref{eq:efree:posterior_est_snpe_a} is not guaranteed to yield a posterior estimate that integrates to $1$. For example, assume that $\prob{\bm{\theta}}\propto 1$, that $q_{\bm{\phi}}\br{\bm{\theta}\g\vect{x}_o}$ has a Gaussian component with covariance $\mat{S}$, and that $\tilde{p}\br{\bm{\theta}}$ is a Gaussian with covariance $\alpha\mat{S}$ for some $\alpha < 1$; in other words, the proposal is narrower than one of the components of $q_{\bm{\phi}}\br{\bm{\theta}\g\vect{x}_o}$. In this case, dividing $q_{\bm{\phi}}\br{\bm{\theta}\g\vect{x}_o}$ by the proposal yields a Gaussian component with covariance:
\begin{equation}
\br{\mat{S}^{-1} - \br{\alpha\mat{S}}^{-1}}^{-1} = \frac{\alpha}{\alpha - 1}\,\mat{S},
\end{equation}
which is negative-definite. When this happens, the algorithm has no good way of continuing to make progress; in practice it terminates early and returns the posterior estimate from the previous round. The problem of negative-definite covariances resulting from dividing one Gaussian by another is also encountered in \emph{expectation propagation} \citep{Minka:2001:EP}.

In appendix C, the paper states 
the following:
\begin{quote}
[The neural density estimator] 
$q_{\bm{\phi}}\br{\bm{\theta}\g\vect{x}_o}$ is trained on parameters sampled from $\tilde{p}\br{\bm{\theta}}$, hence, if trained properly, it tends to be narrower than $\tilde{p}\br{\bm{\theta}}$.
\end{quote}
While in practice this is often the case, I should emphasize that there is no guarantee that $q_{\bm{\phi}}\br{\bm{\theta}\g\vect{x}_o}$ will always be narrower than the proposal. Furthermore, the paper proceeds to state the following:
\begin{quote}
[The covariance] not being positive-definite rarely happens, whereas it happening is an
indication that the algorithm's parameters have not been set up properly.
\end{quote}
This was indeed my experience in the experiments of the paper. However, as I continued to use the algorithm after the publication of the paper, I found that sometimes negative-definite covariances are hard to avoid, regardless of how the algorithm's parameters are set up. In fact, in the next chapter I give two examples of inference tasks in which the algorithm fails after its second round.

Since the publication of the paper, the above two issues, i.e.~restricting the type of neural density estimator and terminating due to an improper posterior estimate, have motivated further research that aims to improve upon the proposed algorithm. In section \ref{sec:efree:discussion:snpe} and in the next chapter, I will discuss methods that were developed to address these two issues.

\subsection{Sequential Neural Posterior Estimation}
\label{sec:efree:discussion:snpe}

About a year after the publication of the paper, the proposed method was extended by \citet{Lueckmann:2017:snpe}, partly in order to address the limitations described in the previous section. \citet{Lueckmann:2017:snpe} called their method \emph{Sequential Neural Posterior Estimation}. I thought that \emph{Sequential Neural Posterior Estimation} was an excellent name indeed, and that it would be an appropriate name for our method as well, which until then had remained nameless. With Jan-Matthis Lueckmann's permission, I now use the term SNPE to refer to both methods; if I need to distinguish between them, I use SNPE-A for our method and SNPE-B for the method of \citet{Lueckmann:2017:snpe}.

Similar to SNPE-A, SNPE-B estimates the posterior density by training a neural density estimator over multiple rounds, where the posterior estimate of each round becomes the proposal for the next round. The main difference between SNPE-A and SNPE-B is the strategy employed in order to correct for sampling parameters from the proposal $\tilde{p}\br{\bm{\theta}}$ instead of the prior $\prob{\bm{\theta}}$. Recall that SNPE-A trains the neural density estimator $q_{\bm{\phi}}\br{\bm{\theta}\g\vect{x}}$ on samples from the proposal, and then analytically adjusts $q_{\bm{\phi}}\br{\bm{\theta}\g\vect{x}_o}$ by multiplying with $\prob{\bm{\theta}}$ and dividing by $\tilde{p}\br{\bm{\theta}}$, as shown in equation \eqref{eq:efree:posterior_est_snpe_a}. As we've seen, this strategy restricts the form of the neural density estimator, and may lead to posterior estimates that don't integrate to $1$.

In contrast, SNPE-B assigns importance weights to the proposed samples, and then trains the neural density estimator on the weighted samples. Let $\set{\pair{\bm{\theta}_1}{\vect{x}_1}, \ldots, \pair{\bm{\theta}_N}{\vect{x}_N}}$ be a set of $N$ independent joint samples
obtained by:
\begin{equation}
\bm{\theta}_n \sim \tilde{p}\br{\bm{\theta}}
\quad\text{and}\quad
\vect{x}_n\sim\prob{\vect{x}\g\bm{\theta}_n}.
\end{equation}
Assuming $\tilde{p}\br{\bm{\theta}}$ is non-zero in the support of the prior, SNPE-B assigns an importance weight $w_n$ to each joint sample $\pair{\bm{\theta}_n}{\vect{x}_n}$, given by:
\begin{equation}
w_n = \frac{\prob{\bm{\theta}_n}}{\tilde{p}\br{\bm{\theta}_n}}.
\end{equation}
Then, the neural density estimator is trained by maximizing the average log likelihood on the weighted dataset, given by:
\begin{equation}
L\br{\bm{\phi}} = \frac{1}{N}\sum_n w_n\log q_{\bm{\phi}}\br{\bm{\theta}_n\g\vect{x}_n}.
\end{equation}
As $N\rightarrow \infty$, due to the strong law of large numbers, $L\br{\bm{\phi}}$ converges almost surely to the following expectation:
\begin{align}
L\br{\bm{\phi}}\,\,\xrightarrow{a.s.}\,\,
\avgx{\frac{\prob{\bm{\theta}}}{\tilde{p}\br{\bm{\theta}}}\log q_{\bm{\phi}}\br{\bm{\theta}\g\vect{x}}}{\tilde{p}\br{\bm{\theta}}\prob{\vect{x}\g\bm{\theta}}}
= \avgx{\log q_{\bm{\phi}}\br{\bm{\theta}\g\vect{x}}}{\prob{\bm{\theta}, \vect{x}}}.
\end{align}
Maximizing the above expectation is equivalent to minimizing $\kl{\prob{\bm{\theta}, \vect{x}}}{q_{\bm{\phi}}\br{\bm{\theta}\g\vect{x}}\,\prob{\vect{x}}}$, which happens only if $q_{\bm{\phi}}\br{\bm{\theta}\g\vect{x}} = \prob{\bm{\theta}\g\vect{x}}$ almost everywhere. Therefore, the training procedure of SNPE-B targets the exact posterior (at least asymptotically), which means that we can estimate the exact posterior simply by:
\begin{equation}
\hat{p}\br{\bm{\theta}\g\vect{x}=\vect{x}_o} = q_{\bm{\phi}}\br{\bm{\theta}\g\vect{x}_o}.
\end{equation}

Unlike SNPE-A where the neural density estimator must be a mixture-density network and the prior and the proposal must be Gaussian, SNPE-B imposes no restrictions in the form of the neural density estimator, the proposal or the prior. This means that the proposal doesn't need to be Gaussian, but can be an arbitrary density model. Moreover, since the neural density estimator targets the posterior directly, we are guaranteed that the posterior estimate is a proper density, and the algorithm can proceed for any number of rounds without the risk of terminating early.

Nonetheless, SNPE-B has its own weaknesses. If the proposal $\tilde{p}\br{\bm{\theta}}$ is not a good match to the prior $\prob{\bm{\theta}}$ (which is likely in later rounds), the importance weights will have high variance, which means that the average log likelihood $L\br{\bm{\phi}}$ will have high variance too. As a consequence, training can be unstable, and the trained model may not be an accurate posterior estimate. As we will see in the next chapter, SNPE-B typically produces less accurate results than SNPE-A, at least when SNPE-A doesn't terminate early.

In conclusion, although SNPE-B is an improvement over SNPE-A in terms of robustness and flexibility, I would argue that there is still scope for further developments in order to improve the performance and robustness of current methods. In the next chapter, I will discuss such a development, which I refer to as \emph{Sequential Neural Likelihood}; SNL is an alternative to both SNPE-A and SNPE-B in that it targets the likelihood instead of the posterior. Nonetheless, SNPE remains an actively researched topic, so it would be reasonable to expect further improvements in the near future.

\chapter{Sequential Neural Likelihood: Fast Likelihood-free Inference with Autoregressive Flows}
\label{chapter:snl}

The previous chapter discussed the challenge of likelihood-free inference, and described \emph{Sequential Neural Posterior Estimation},  a method that estimates the unknown posterior with a neural density model. This chapter delves further into likelihood-free inference via neural density estimation, and introduces \emph{Sequential Neural Likelihood}, an alternative approach that estimates the intractable likelihood instead. We begin with the paper \emph{Sequential Neural Likelihood: Fast Likelihood-free Inference with Autoregressive Flows}, which introduces SNL and is the main contribution of this chapter (section \ref{sec:snl:paper}). We continue with a discussion of the paper, and a comparison with a related approach based on active learning (section \ref{sec:snl:discussion}). Finally,
we conclude the part on likelihood-free inference with a summary and a discussion of this and the previous chapter (section \ref{sec:snl:summary}).

\section{The paper}
\label{sec:snl:paper}

This section presents the paper \emph{Sequential Neural Likelihood: Fast Likelihood-free Inference with Autoregressive Flows}, which is the main contribution of this chapter. The paper introduces \emph{Sequential Neural Likelihood}, a new method for likelihood-free inference via neural density estimation that was designed to overcome some of the limitations of SNPE\@.

The paper was first published as a preprint on arXiv in May 2018. Afterwards, it was accepted for publication at the 22nd \emph{International Conference on Artificial Intelligence and Statistics (AISTATS)} in April 2019. There were $1{,}111$ submissions to AISTATS that year, of which $360$ were accepted for publication.

\subsubsection{Author contributions}

The paper is co-authored by me, David Sterratt and Iain Murray. As the leading author, I developed the method, implemented the code, and performed the experiments. David Sterratt implemented the simulator for the Hodgkin--Huxley model, and advised on neuroscience matters. Iain Murray supervised the project and revised the final draft. The paper was written by me, except for sections A.4 and B.4 of the appendix which were written by David Sterratt.

\includepdf[pages=-]{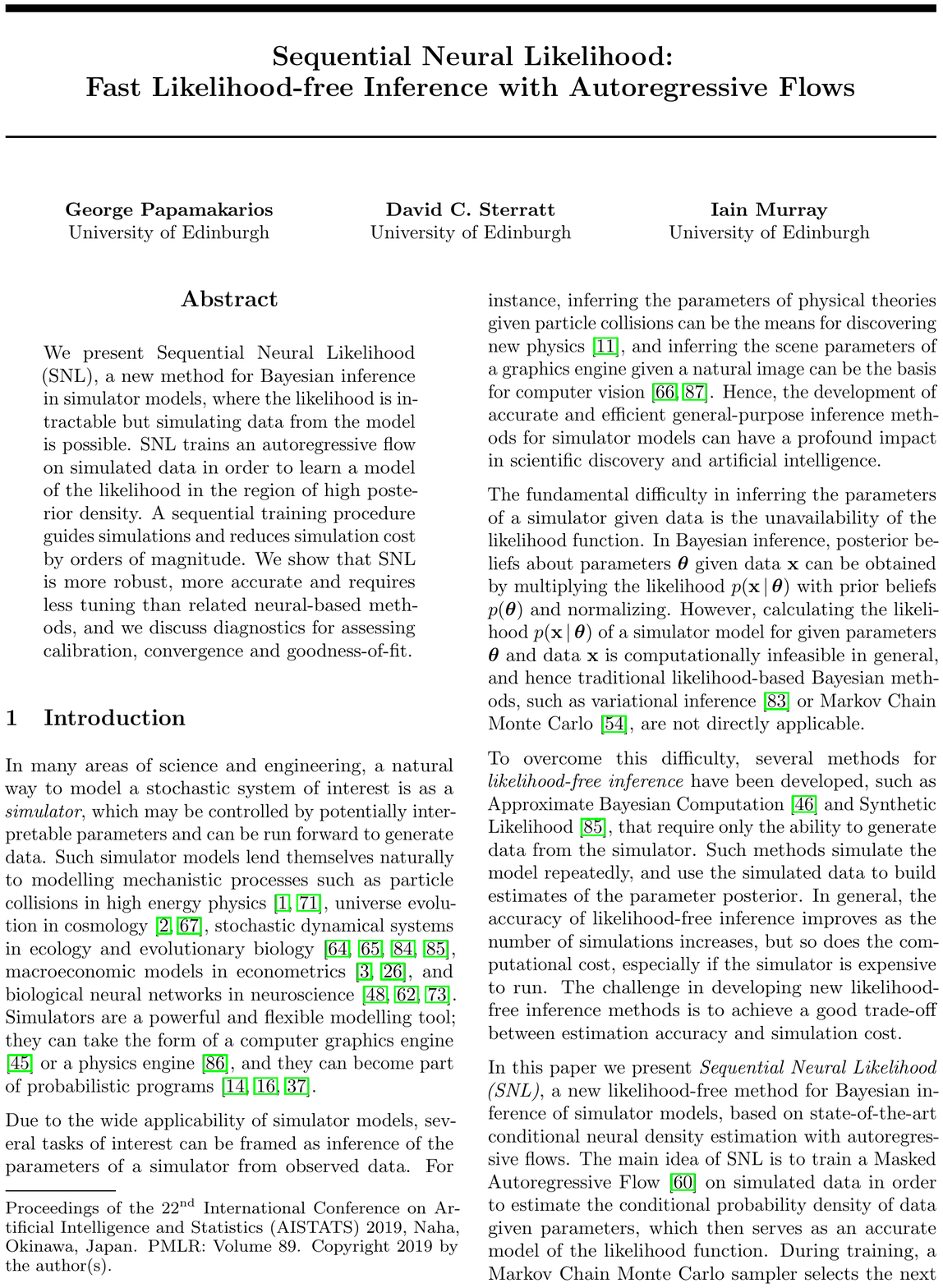}

\section{Discussion}
\label{sec:snl:discussion}

In this section, I further discuss the paper \emph{Sequential Neural Likelihood: Fast Likelihood-free Inference with Autoregressive Flows}, which was presented in the previous section. In what follows, I evaluate the contribution and impact the paper has had so far, and I compare Sequential Neural Likelihood with an alternative approach for estimating the simulator's intractable likelihood based on active learning.

\subsection{Contribution and impact}

The paper builds upon our previous work on \emph{Sequential Neural Posterior Estimation (Type A)} and \emph{Masked Autoregressive Flow}. Its main contribution is to identify the limitations of SNPE-A and SNPE-B, and to introduce a new algorithm, \emph{Sequential Neural Likelihood}, that overcomes these limitations. Compared to SNPE, SNL has the following advantages:
\begin{enumerate}[label=(\roman*)]
\item SNL is more general, since it can be used with any neural density estimator and any prior distribution. In contrast, SNPE-A can only be used with mixture-density networks and exponential-family priors.
\item SNL makes more efficient use of simulations, as it reuses simulated data from previous rounds to train the neural density estimator. In contrast, SNPE only uses the simulations from the last round for training. 
\item SNL is more robust, as it doesn't need to correct for proposing parameter samples from a distribution other than the prior. As a result, it doesn't suffer from early termination due to negative-definite covariances (such as SNPE-A) or from high variance due to importance weights (such as SNPE-B)\@.
\end{enumerate}

A disadvantage of SNL is that it requires an additional inference step to generate posterior samples. The paper uses MCMC in the form of axis-aligned slice sampling; however, in principle any likelihood-based inference method that can generate samples (such as variational inference) can be used instead. In contrast, SNPE trains a model of the posterior directly, which can generate independent samples without requiring an additional inference step.

The paper suggests using SNL with Masked Autoregressive Flow. MAF is a flexible model for general-purpose density estimation, which makes it a good default choice. However, I should emphasize that SNL is not tied to a particular model, and it can be used with any neural density estimator. In fact, even though this thesis has focused on continuous parameter and data spaces, SNL can in principle be used even when the parameters and/or the data are discrete.

It is too early to evaluate the impact of the paper, as it was published shortly before this thesis was written. As of \thedate, the paper has received $7$ citations according to Google Scholar. Along with similar methods, SNL has so far been used for likelihood-free inference in cosmology \citep{Alsing:2019:lfi_cosmology}. It remains to be seen whether SNL will prove useful in other applications, and whether it will inspire further research on the topic.

\subsection{Comparison with active-learning methods}

In each round, SNL selects parameters to simulate at by independently proposing from the posterior estimate obtained in the round before. The same strategy is used in SNPE, and a similar strategy is used in SMC-ABC\@. As we have seen, proposing parameters from the previous posterior estimate can reduce the number of simulations by orders of magnitude compared to proposing parameters from the prior, and can improve posterior-estimation accuracy significantly. Nonetheless, this strategy was motivated by intuition and empirical validation, rather than a theoretically justified argument. Therefore, it is reasonable to ask whether we can find a more principled strategy that can reduce the number of simulations even further.

In contrast with estimating the posterior, estimating the likelihood can easily  accommodate alternative parameter-acquisition strategies.
As we saw in the previous chapter, when we estimate the posterior with a neural density model it is necessary to correct for the parameter-acquisition strategy, otherwise the posterior estimate will be biased. On the other hand, as we discussed in section 3 of the paper, the parameter-acquisition strategy does not bias the estimation of the likelihood by a neural density model. Hence, estimating the likelihood instead of the posterior offers increased flexibility in selecting parameters, which enables more sophisticated strategies to be considered.

In principle, we can formulate the problem of selecting parameters as a task of decision-making under uncertainty. Ideally, we would like to select the next parameters such that the resulting simulation gives us the most information about what the posterior should be; such a scheme would select the next simulation optimally in an information-theoretic sense. In the next few paragraphs, I describe a precise theoretical formulation of the above idea.

Suppose that the conditional density of data given parameters is modelled by a Bayesian neural density estimator $q_{\bm{\phi}}\br{\vect{x}\g\bm{\theta}}$. Given a dataset of simulation results $\mathcal{D} = \set{\pair{\bm{\theta}_1}{\vect{x}_1}, \ldots, \pair{\bm{\theta}_N}{\vect{x}_N}}$, our beliefs about $\bm{\phi}$ are encoded by a distribution $\prob{\bm{\phi}\g\mathcal{D}}$. Under this Bayesian model, the predictive distribution of $\vect{x}$ given $\bm{\theta}$ is:
\begin{equation}
\prob{\vect{x}\g\bm{\theta}, \mathcal{D}} = \avgx{q_{\bm{\phi}}\br{\vect{x}\g\bm{\theta}}}{\prob{\bm{\phi}\g\mathcal{D}}},
\end{equation}
whereas the predictive distribution of $\bm{\theta}$ given $\vect{x}$ is:
\begin{equation}
\prob{\bm{\theta}\g \vect{x}, \mathcal{D}} =
\avgx{\frac{q_{\bm{\phi}}\br{\vect{x}\g\bm{\theta}}\,\prob{\bm{\theta}}}
{\integral{q_{\bm{\phi}}\br{\vect{x}\g\bm{\theta}'}\,\prob{\bm{\theta}'}}{\bm{\theta}'}}}{\prob{\bm{\phi}\g\mathcal{D}}}.
\end{equation}
Given observed data $\vect{x}_o$, the distribution $\prob{\bm{\theta}\g \vect{x}_o, \mathcal{D}}$ is the \emph{predictive posterior} under the Bayesian model. The predictive posterior encodes two distinct kinds of uncertainty:
\begin{enumerate}[label=(\roman*)]
\item Our irreducible uncertainty about what parameters might have generated $\vect{x}_o$, due to the fact that the simulator is stochastic.
\item Our uncertainty about what the posterior should be, due to not having seen enough simulation results. This type of uncertainty can be reduced by simulating more data.
\end{enumerate}
Our goal is to select the next parameters to simulate at, such that the second kind of uncertainty is maximally reduced. 

Now, suppose we obtain a new simulation result $\pair{\bm{\theta}'}{\vect{x}'}$ and hence an updated dataset $\mathcal{D}'=\mathcal{D}\cup\set{\pair{\bm{\theta}'}{\vect{x}'}}$. In that case, our beliefs about $\bm{\phi}$ will be updated by Bayes' rule:
\begin{equation}
\prob{\bm{\phi}\g\mathcal{D}'}\propto q_{\bm{\phi}}\br{\vect{x}'\g\bm{\theta}'}\,\prob{\bm{\phi}\g\mathcal{D}},
\end{equation}
and the predictive posterior will be updated to use $\prob{\bm{\phi}\g\mathcal{D}'}$ instead of $\prob{\bm{\phi}\g\mathcal{D}}$.
We want to select parameters $\bm{\theta}'$ to simulate at such that, on average, the \emph{updated} predictive posterior $\prob{\bm{\theta}\g \vect{x}_o, \mathcal{D}'}$ becomes as certain as possible. We can quantify how uncertain the updated predictive posterior is by its entropy:
\begin{equation}
H\br{\bm{\theta}\g\vect{x}_o, \mathcal{D}'} = -\avgx{\log \prob{\bm{\theta}\g \vect{x}_o, \mathcal{D}'}}{\prob{\bm{\theta}\g \vect{x}_o, \mathcal{D}'}}.
\end{equation}
The larger the entropy, the more uncertain the predictive posterior is. Hence, under the above framework, the optimal parameters $\bm{\theta}^*$ to simulate at will be those that minimize the expected predictive-posterior entropy:
\begin{equation}
\bm{\theta}^* = \argmin_{\bm{\theta}'}\,\avgx{H\br{\bm{\theta}\g\vect{x}_o, \mathcal{D}'}}{\prob{\vect{x}'\g\bm{\theta}', \mathcal{D}}}.
\end{equation}
The above expectation is taken with respect to our predictive distribution of $\vect{x}'$ given $\bm{\theta}'$, and not with respect to the actual simulator. Intuitively, this can be though of as running the simulator in our imagination rather than in reality, hence we must take into account our uncertainty over the outcome $\vect{x}'$ due to not knowing the simulator's likelihood exactly.

We can justify the above strategy in terms of the mutual information between $\bm{\theta}$ (the parameters that might have generated $\vect{x}_o$) and $\vect{x}'$ (the data generated by simulating at $\bm{\theta}'$). This mutual information can be written as:
\begin{equation}
\mathit{MI}\br{\bm{\theta}, \vect{x}'\g\bm{\theta}', \vect{x}_o, \mathcal{D}}
= H\br{\bm{\theta}\g\vect{x}_o, \mathcal{D}} - \avgx{H\br{\bm{\theta}\g\vect{x}_o, \mathcal{D}'}}{\prob{\vect{x}'\g\bm{\theta}', \mathcal{D}}}.
\end{equation}
As we can see, the mutual information between $\bm{\theta}$ and $\vect{x}'$ is equal to the expected reduction in the predictive-posterior entropy due to simulating at $\bm{\theta}'$. Hence, selecting the next parameters by minimizing the updated expected predictive-posterior entropy is equivalent to selecting the next parameters such that the simulated data  gives us on average the most information about what the parameters that produced the observed data might be.
The above strategy is a special case of the framework of \citet{Jarvenpaa:2018:efficient_acquisition}, if we take the loss function that appears in their framework to be the entropy of the updated predictive posterior.

Although optimal in an information-theoretic sense, the above strategy is intractable, as it involves a number of high-dimensional integrals and the global optimization of an objective. In order to implement the above method, we would need to:
\begin{enumerate}[label=(\roman*)]
\item train a Bayesian neural density model,
\item compute the predictive distribution of $\vect{x}$ given $\bm{\theta}$ under the model,
\item compute the predictive posterior and its entropy, and
\item globally minimize the expected predictive-posterior entropy with respect to the next parameters to simulate at.
\end{enumerate}
With current methods, each of the above steps can only be partially or approximately solved. It wouldn't be inaccurate to say that the above strategy reduces the problem of likelihood-free inference to an even harder problem.

Despite being intractable, the above strategy is useful as a guiding principle for the development of other parameter-acquisition strategies, which may approximate the optimal strategy to a certain extent. One such heuristic is the \emph{MaxVar rule} \citep{Jarvenpaa:2018:efficient_acquisition, Lueckmann:2018:maxvar}; given a Bayesian neural density estimator $q_{\bm{\phi}}\br{\vect{x}\g\bm{\theta}}$, the MaxVar rule uses the variance of the \emph{unnormalized} posterior density at parameters $\bm{\theta}'$ (which is due to the uncertainty about $\bm{\phi}$) as a proxy for the information we would gain by simulating at $\bm{\theta}'$. According to the MaxVar rule, we select the next parameter to simulate at by maximizing the above variance, that is:
\begin{equation}
\bm{\theta}^* = \argmax_{\bm{\theta}'}\,\varx{q_{\bm{\phi}}\br{\vect{x}_o\g\bm{\theta}'}\,\prob{\bm{\theta}'}}{\prob{\bm{\phi}\g\mathcal{D}}}.
\end{equation}
Assuming we can (approximately) generate samples $\set{\bm{\phi}_1, \ldots, \bm{\phi}_M}$ from $\prob{\bm{\phi}\g\mathcal{D}}$ (using e.g.~MCMC or variational inference), we can approximate the above variance using the empirical variance on the samples as follows:
\begin{equation}
\varx{q_{\bm{\phi}}\br{\vect{x}_o\g\bm{\theta}'}\,\prob{\bm{\theta}'}}{\prob{\bm{\phi}\g\mathcal{D}}}\approx
\frac{1}{M}\sum_m \br{q_{\bm{\phi}_m}\br{\vect{x}_o\g\bm{\theta}'}\,\prob{\bm{\theta}'}}^2 - \br{\frac{1}{M}\sum_m q_{\bm{\phi}_m}\br{\vect{x}_o\g\bm{\theta}'}\,\prob{\bm{\theta}'}}^2.
\end{equation}
The above empirical variance is differentiable with respect to $\bm{\theta}'$, so it can be maximized with gradient-based methods and automatic differentiation.

Following the publication of SNL, Conor Durkan, Iain Murray and I co-authored a paper \citep{Durkan:2018:snm} in which we compared the parameter-acquisition strategy of SNL and SNPE with the MaxVar rule described above. The paper, titled \emph{Sequential Neural Methods for Likelihood-free Inference} was presented in December 2018 at the \emph{Bayesian Deep Learning Workshop}, held at the conference \emph{Advances in Neural Information Processing Systems (NeurIPS)}\@. Conor Durkan was the leading author: he performed the experiments and wrote the paper. Iain Murray and I served as advisors.

The paper compares SNPE-B, SNL and MaxVar. In the experiments, all three algorithms use the same Bayesian neural density estimator, namely a mixture-density network trained with variational dropout, which is the same as that used in the previous chapter. Figure \ref{fig:snl:active_learning_comparison}, reproduced from the original paper with Conor Durkan's permission, shows the negative log posterior density of the true parameters versus the number of simulations. As we can see, despite being to a certain extent theoretically motivated, MaxVar doesn't improve upon SNL\@. Both SNL and MaxVar improve upon SNPE-B, which agrees with the rest of our experiments on SNL that were presented in this chapter.

\begin{figure}
\centering
\includegraphics[width=\textwidth]{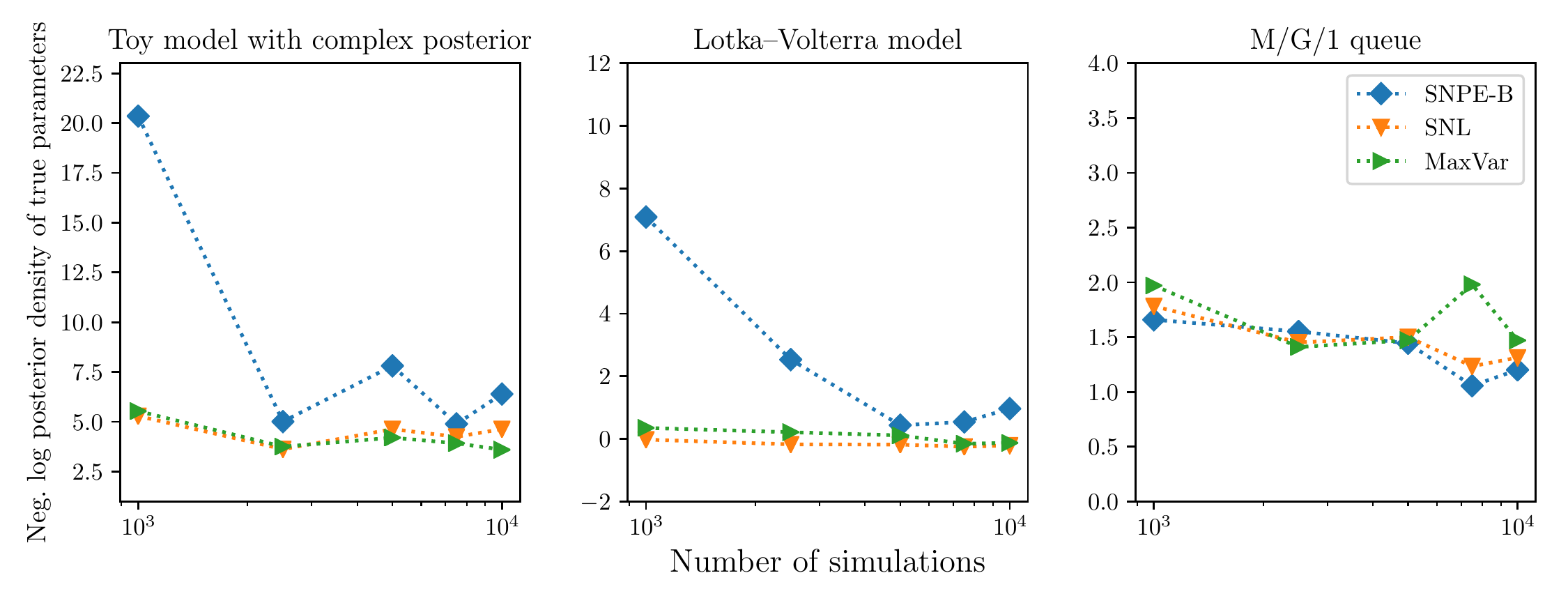}
\caption{Comparison between SNPE-B, SNL and MaxVar for three different simulator models. The plot is taken from \citep{Durkan:2018:snm}.}
\label{fig:snl:active_learning_comparison}
\end{figure}

In addition to the tradeoff between accuracy and simulation cost, it is worth looking at the wall-clock time of each algorithm. Table \ref{table:snl:wall_clock_time} shows the wall-clock time (in hours) of SNL and MaxVar for each simulator model, using a total of $10^{4}$ simulations. As we can see, SNL is about $10$ times faster than MaxVar. The reason for MaxVar being significantly slower than SNL is that MaxVar needs to solve an optimization problem for every parameter it selects. In contrast, SNL proposes parameters via MCMC, which is more efficient.

\begin{table}[h]
\centering
\caption{Wall-clock time (in hours) per experiment with $10^{4}$ simulations. The table is reproduced from \citep{Durkan:2018:snm}.}
\begin{tabular}{@{}lrrr@{}}
\toprule
& Toy model  & Lotka--Volterra & M/G/1 queue\\
\midrule
SNL & $7.48$ & $8.27$ & $7.83$ \\
MaxVar & $91.73$ & $89.39$ & $70.16$ \\
\bottomrule
\end{tabular}
\label{table:snl:wall_clock_time}
\end{table}
\vspace{0.5em}

The above comparison shows that SNL takes about the same number of simulations to achieve a given accuracy as a more sophisticated method that is based on active learning. Yet, SNL is an order of magnitude faster in terms of wall-clock time than the active-learning alternative. This comparison doesn't mean that active-learning methods aren't worthwhile in general; the more expensive a simulator is to run, the more important the parameter-acquisition strategy becomes. However, I would argue that this comparison highlights the importance of evaluating the cost of the parameter-acquisition strategy against the extent to which it makes a better use of simulations. After all, time spent on optimizing for the next parameters to simulate at may also be spent on running more simulations or training a better model.

\section{Summary and conclusions}
\label{sec:snl:summary}

Simulators are useful modelling tools, because they are interpretable, flexible, and a good match to our understanding of mechanistic processes in the physical world. However, the likelihood of a simulator model is often intractable, which makes traditional Bayesian inference over the model's parameters challenging. In this and the previous chapter, I discussed likelihood-free inference, i.e.~Bayesian inference based on simulations, and I introduced efficient likelihood-free inference methods based on neural density estimation.

Approximate Bayesian computation is a family of methods that have been traditionally used for likelihood-free inference. ABC methods are based on repeatedly simulating the model, and rejecting simulations that don't match the observed data. ABC methods don't target the exact posterior but an approximate posterior, obtained by an alternative observation that the simulated data is at most a distance $\epsilon$ away from the observed data. The parameter $\epsilon$ trades off efficiency for accuracy; ABC becomes exact as $\epsilon$ approaches zero, but at the same time the simulation cost increases dramatically.

In the previous chapter, I presented a method for likelihood-free inference that was later given the name \emph{Sequential Neural Posterior Estimation (Type A)}\@. SNPE-A trains a Bayesian mixture-density network on simulated data in order to estimate the exact posterior. The algorithm progresses over multiple rounds; in each round, parameters to simulate at are proposed by the posterior estimate obtained in the round before. Unlike ABC, SNPE-A doesn't reject simulations, and doesn't involve setting a distance $\epsilon$. Experiments showed that SNPE-A achieves orders of magnitude improvement over ABC methods such as rejection ABC, MCMC-ABC and SMC-ABC\@.

SNPE-A has two drawbacks: it is tied to mixture-density networks, and it may terminate early due to the posterior estimate becoming improper. A variant of SNPE-A, which I refer to as SNPE-B, was proposed by \citet{Lueckmann:2017:snpe} to overcome these limitations. However, as we demonstrated in this chapter, SNPE-B is typically less accurate than SNPE-A (at least when SNPE-A doesn't terminate early) due to increased variance in the posterior estimate.

To overcome the limitations of both SNPE-A and SNPE-B, this chapter presents \emph{Sequential Neural Likelihood}, a method that is similar to SNPE but targets the likelihood instead of the posterior. In our experiments, SNL was more accurate and more robust than both SNPE-A and SNPE-B\@. In addition, SNL enables the use of diagnostics such as a two-sample goodness-of-fit test between the simulator and the neural density estimator. A comparison between SNL and an active-learning method for selecting parameters, performed by \citet{Durkan:2018:snm}, showed that SNL is at least as simulation-efficient as the active-learning method, but significantly faster in terms of wall-clock time.

As I discussed in the previous chapter, when using ABC it is often necessary to transform data into lower-dimensional summary statistics, in order to maintain the acceptance rate at a high enough level. One limitation of this thesis is that it didn't explore how the methods that are based on neural density estimation scale with the dimensionality of the parameters or the data. Nonetheless, research in deep learning has demonstrated that neural networks with suitable architectures scale well to high-dimensional inputs or outputs. It is reasonable to expect that, with suitable neural architectures, SNPE or SNL may scale better than ABC to high-dimensional parameters or data.

Some preliminary work in this direction already exists. For example, \citet{Lueckmann:2017:snpe} used a recurrent neural network with SNPE in order to estimate the posterior directly from timeseries data. Similarly, \citet{Chan:2018:exchangeable} used an exchangeable neural network to estimate the posterior when the data is a set of exchangeable (e.g.~independent) datapoints. As I argued in the chapter on Masked Autoregressive Flow, building invariances in our neural architectures that reflect reasonable assumptions about the data is a key element in scaling neural networks to high dimensions. As far as I'm aware, a careful evaluation of how neural likelihood-free methods such as SNPE and SNL scale to high dimensions hasn't been performed yet. However, given the progress of deep-learning research, I would argue that engineering neural architectures to use with SNPE and SNL, or alternative neural methods, is a promising direction for future work.

Having explored in the last two chapters neural methods both for estimating the posterior and for estimating the likelihood, it is natural to ask which approach is better. Despite the success of SNL over SNPE, I would argue that there is no definite answer to this question, as each approach has its own strengths and weaknesses. Ultimately the right approach depends on what the user is interested in. Hence, I would argue that it is worthwhile to continue exploring both approaches, as well as improving our current methods and developing new ones.

\bibliography{thesis}

\end{document}